%% file: main.tex
\documentclass[letterpaper]{article} 
\usepackage[preprint]{aaai2027} 
\usepackage[hyphens]{url} 
\usepackage{graphicx} 
\usepackage{natbib} 
\usepackage{caption} 
\usepackage{booktabs}
\usepackage{multirow}
\usepackage{array}
\usepackage{amsmath,amssymb,amsthm,mathtools}
\usepackage{placeins}
\usepackage[table]{xcolor}
\newcommand{\RMS}{\operatorname{RMS}}
\newcommand{\sg}{\operatorname{sg}}
\newcommand{\ind}{\mathbb{I}}

\newtheorem{proposition}{Proposition}
\newtheorem{lemma}{Lemma}
\newtheorem{corollary}{Corollary}

\newcolumntype{L}[1]{>{\raggedright\arraybackslash}p{#1}}

\title{EMAN: Optimization-Driven Capacity Growth through Path Emergence in Multi-Task Learning}
\author{
Chenlei Fang\textsuperscript{1,2,3,4,5},
Jingchen Li\textsuperscript{1},
Hongzong LI\textsuperscript{1},
Qingyao Li\textsuperscript{2,3,4,5},\\
Yixuan Zhang\textsuperscript{2,3,4,5},
Huarui Wu\textsuperscript{2,3,4,5},
Haobin Shi\textsuperscript{1},
Chunjiang Zhao\textsuperscript{2,3,4,5}\corresponding
}
\affiliations{
\textsuperscript{\rm 1}School of Computer Science, Northwestern Polytechnical University, Xi'an, China\\
\textsuperscript{\rm 2}National Engineering Research Center for Information Technology in Agriculture, Beijing 100097, China\\
\textsuperscript{\rm 3}Information Technology Research Center, Beijing Academy of Agriculture and Forestry Sciences, Beijing 100097, China\\
\textsuperscript{\rm 4}Key Laboratory of Digital Rural Technology, Ministry of Agriculture and Rural Affairs, Beijing 100097, China\\
\textsuperscript{\rm 5}Beijing Research Institute of Agricultural Artificial Intelligence and Robotics, Beijing 100097, China\\

 \small { \texttt{chenleifang@mail.nwpu.edu.cn}},
 \small { \texttt{zhaocj@nercita.org.cn}} \\
}

\begin{document}
\maketitle

\begin{abstract}

Existing multi-task learning methods rely on hard sharing, multiple paths or experts, adaptive sharing, and dynamic expansion. However, their capacity changes are usually constrained by predefined structures or triggered by task boundaries and conflict signals. This raises a fundamental question: can a network start from exact single-path computation and grow a new independent path only when persistent optimization evidence appears? We propose the \textbf{\underline{E}}mergent
\textbf{\underline{M}}odular \textbf{\underline{A}}tomic \textbf{\underline{N}}etwork
(EMAN), an optimization-driven framework for exposing an antisymmetric growth direction through latent relative phases without instantiating a second path, and for monitoring multiple decision signals during the training process through a set of optimization evidence to transform local optimization evidence into a structural decision. EMAN materializes two equal-capacity independent paths only after certification. EMAN adaptively allocates shared and task-specific representation capacity to accommodate varying task requirements. Extensive experiments on controlled rank settings, PASCAL-Context, and NYUv2 validate its effectiveness, achieving improved performance at a competitive computational cost.

\end{abstract}

\section{Introduction}

Hard parameter sharing remains one of the simplest and most efficient designs
for multi-task learning (MTL) \citep{caruana1997multitask}. A dense shared
backbone is fully differentiable and inexpensive to optimize. Its fixed
representation capacity can also create negative transfer or a hard
bottleneck under heterogeneous task demands. Soft-sharing architectures,
task-interaction modules, and adaptive sharing policies provide greater
flexibility
\citep{misra2016crossstitch,ruder2019sluice,liu2019mtan,
vandenhende2020mtinet,xu2018padnet,sun2020adashare}.
Mixture-of-Experts (MoE) models take another route. They direct examples or
tasks through specialized experts
\citep{shazeer2017outrageously,ma2018mmoe,lepikhin2021gshard,
fedus2022switch}. Both families improve flexibility. Both also commit to
multiple modules, experts, routing rules, or sharing policies before
optimization reveals the need for extra capacity.

Recent MTL research approaches this tension from several directions. Current
studies examine spurious inter-task correlations, shared and task-specific
optimization, and task-adaptive parameter-efficient sharing
\citep{chai2025spurious,qin2025consistent,baek2025tadformer,
mantri2025ditask}. Dense-scene MTL also revisits cross-task structure on
NYUv2 and PASCAL-Context \citep{wang2026mtl3d}. Recent expert systems adapt
expert allocation or expand expert pools during training
\citep{park2026mass,zhang2026growondemand}. These advances improve
interaction, allocation, and expansion. The number of independent modules
still depends on fixed task-specific structures, explicit routing, or
declared task boundaries. Two paths can add capacity. The open question is
when their additional cost becomes justified.

\begin{figure}[t]
    \centering
    \includegraphics[width=\linewidth]{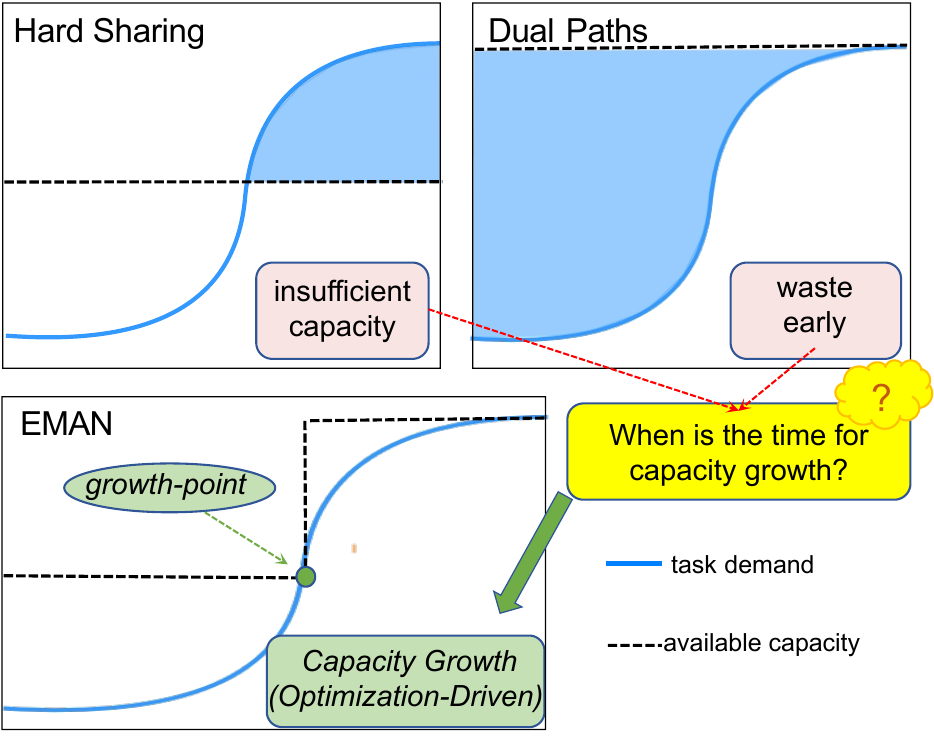}
    \caption{Capacity mismatch across training.
    Hard sharing is inexpensive early but can become insufficient later.
    Always-dual paths avoid the bottleneck but maintain excess capacity
    throughout training. EMAN starts from one exact shared path and
    materializes a second path only after persistent evidence.}
    \label{fig:capacity_mismatch}
\end{figure}

Figure~\ref{fig:capacity_mismatch} illustrates this mismatch. Hard sharing
fixes one-path capacity, so later demand can exceed the available
representation capacity. Always-dual paths avoid this bottleneck, but they
maintain two-path computation from the start. A more selective design should
retain exact sharing while one path is sufficient and add independent
capacity after persistent demand appears. The growth decision should emerge
from optimization itself.

We introduce the Emergent Modular Atomic Network (EMAN), an
optimization-driven dense-to-modular framework. EMAN starts with one physical
forward path and exact shared computation. A latent relative phase then probes
the local optimization geometry without changing physical capacity. The
fixed odd phase interaction exposes a candidate antisymmetric growth
direction. Optimization Evidence Certification (OEC) evaluates its strength,
cross-probe consistency, finite-step descent, dynamic safety, and temporal
persistence. Insufficient evidence retains the shared path. A successful
certificate materializes two isomorphic and independently trainable paths
through opposite perturbations. Training data drive the entire transition,
and the certification rule remains fixed throughout training.

The local analysis describes this transition through the antisymmetric path
coordinate $d$ and the relative phase $\phi$. Near the shared solution, a
phase-symmetric branch can lose transverse stability before the path
coordinates separate. After phase release, the cross term $\phi c^\top d$
induces the antisymmetric force $c\phi$. The second physical path is still
absent, yet the local geometry already exposes a candidate descent direction.
Rank-controlled experiments connect this mechanism to explicit capacity
demand. One physical path has rank-three capacity, and the joint task
structure requires ranks of three, four, and six. EMAN remains shared at
joint rank three. It materializes a second path under the rank-six bottleneck
and recovers the missing latent factors and predictive performance. The
rank-four condition characterizes OEC near the single-path capacity limit.
The released paths also span complementary latent subspaces.

Natural-image experiments examine the timing of this structural decision. In
the static PASCAL-Context setting, all four tasks appear from the beginning,
and EMAN releases early. Little opportunity remains for delayed-computation
savings. Under task arrival, EMAN preserves one-path sharing during the
initial two-task stage and releases after the saliency and surface-normal
tasks enter training. Across three seeds, it reaches final performance
comparable to the always-dual baseline, shortens the active dual-path stage to
about half, and uses about $75\%$ of its path-training compute. Earlier demand
moves the same frozen decision from epoch 21 to epoch 11. Gradual demand
provides a further test under progressive capacity requirements. On NYUv2,
EMAN achieves the best mean semantic-segmentation and depth results and the
highest aggregate score. Recon retains the strongest surface-normal result
and joint loss. PASCAL-Context shows less uniform task-level gains. The
overall pattern is consistent: the timing and strength of optimization demand
shape the value of path emergence.

Our contributions are summarized as follows:
\begin{enumerate}
    \item \textbf{Optimization-driven capacity growth from exact sharing.}
    We propose EMAN, which starts with one physical path and treats
    independent capacity as a structural decision during optimization. The
    model retains exact sharing while one path remains sufficient.

    \item \textbf{Phase-first evidence-gated path emergence.}
    EMAN exposes an antisymmetric growth direction through latent relative
    phase before adding an independent parameter block. A fixed evidence rule
    evaluates the direction and triggers a zero-sum materialization into two
    independently trainable paths.

    \item \textbf{Controlled and real-data validation.}
    Rank-controlled experiments establish capacity preservation, conditional
    rank recovery, and complementary released representations. NYUv2 and
    PASCAL-Context further demonstrate adaptive release, competitive
    performance, and reduced path-training computation.
\end{enumerate}

\section{Related Work}

\paragraph{Sharing in multi-task learning.}
Hard parameter sharing provides an efficient foundation for multi-task
learning, but one representation can entangle incompatible objectives
\citep{caruana1997multitask}. Cross-stitch networks, sluice networks,
attention modules, and multi-scale interaction mechanisms learn softer
sharing patterns
\citep{misra2016crossstitch,ruder2019sluice,liu2019mtan,
vandenhende2020mtinet,xu2018padnet}. AdaShare learns task-dependent sharing
policies \citep{sun2020adashare}. Gradient-based methods address loss
imbalance and task conflict
\citep{chen2018gradnorm,yu2020pcgrad,liu2021cagrad,navon2022nash,
javaloy2022rotograd}, and Recon converts highly conflicting shared layers
into task-specific ones \citep{shi2023recon}. Recent work further explores
the balance between shared and task-specific representations. ConsMTL
optimizes both parameter groups \citep{qin2025consistent}. TADFormer uses
input-conditioned task adaptation, and DiTASK applies parameter-efficient
task-specific transformations
\citep{baek2025tadformer,mantri2025ditask}. Other studies identify spurious
inter-task correlations or model geometric cross-task relations
\citep{chai2025spurious,wang2026mtl3d}. This line of work makes sharing more
flexible, yet the number of independent paths remains fixed before training.

\paragraph{Experts, modularity, and expansion.}
Sparse MoE and multi-gate expert models use routers to select specialized
computation
\citep{shazeer2017outrageously,ma2018mmoe,lepikhin2021gshard,
fedus2022switch}. Modularity creates room for specialization, but data
structure and optimization also shape the resulting functions
\citep{csordas2021modular,mittal2022modular,jarvis2023specialization,
zhang2023emergent,schug2024discovering}. Progressive networks, dynamically
expandable networks, and PathNet add or select capacity across tasks
\citep{rusu2016progressive,yoon2018den,fernando2017pathnet}. Net2Net
preserves the learned function during architectural growth
\citep{chen2016net2net}. More recent systems make expert allocation adaptive:
MASS detects semantic drift and expands a routed expert pool
\citep{park2026mass}, and GoD-MoE estimates expert demand and adds LoRA
experts during continual instruction tuning
\citep{zhang2026growondemand}. These methods begin with an explicit expert
structure or a declared task sequence. EMAN begins with dense single-path
computation. Phase-structured evidence determines whether a second
independent path should emerge. Capacity growth thus becomes an endogenous
event in optimization.

\paragraph{Symmetry and local curvature.}
Exchange-symmetric components admit symmetric and antisymmetric coordinates,
a standard basis for symmetry-breaking analysis
\citep{golubitsky1988singularities,kuznetsov2004elements}.
Hessian-vector products provide local curvature information without
constructing the full Hessian \citep{pearlmutter1994fast}. We draw on these
ideas near the exactly shared state. In EMAN, this information characterizes
the transverse geometry of the shared branch. Relative phase exposes a
coupled antisymmetric direction, and OEC evaluates its significance during
training. The analysis connects local geometry to the decision to create
independent capacity.

\section{EMAN: Optimization-Driven Path Emergence}
\label{sec:method}

\subsection{Overview}
\label{sec:method_overview}

EMAN treats multi-task capacity growth as a structural decision during
training. Training begins with exact single-path computation. A
relative-phase probe then exposes the local optimization geometry and a
candidate antisymmetric growth direction. Optimization Evidence
Certification (OEC) evaluates direction strength, cross-probe consistency,
finite-step descent, dynamic safety, and temporal persistence. Persistent
evidence triggers path materialization; otherwise, the model retains exact
sharing.

The complete procedure contains four stages:
\begin{center}
\small
exact sharing $\rightarrow$ phase probing $\rightarrow$ OEC\\[1.5mm]
$\rightarrow$ conditional path materialization
\end{center}

Figure~\ref{fig:eman_local_mechanism} focuses on the local transition. The
initial state contains one physical path, zero relative phase, and zero
antisymmetric displacement. Phase probing reveals a candidate direction
without adding physical capacity. OEC then examines the full evidence set.
A successful certificate produces a zero-sum split, followed by independent
optimization of the two materialized paths.

\begin{figure}[t]
    \centering
    \includegraphics[width=\linewidth]{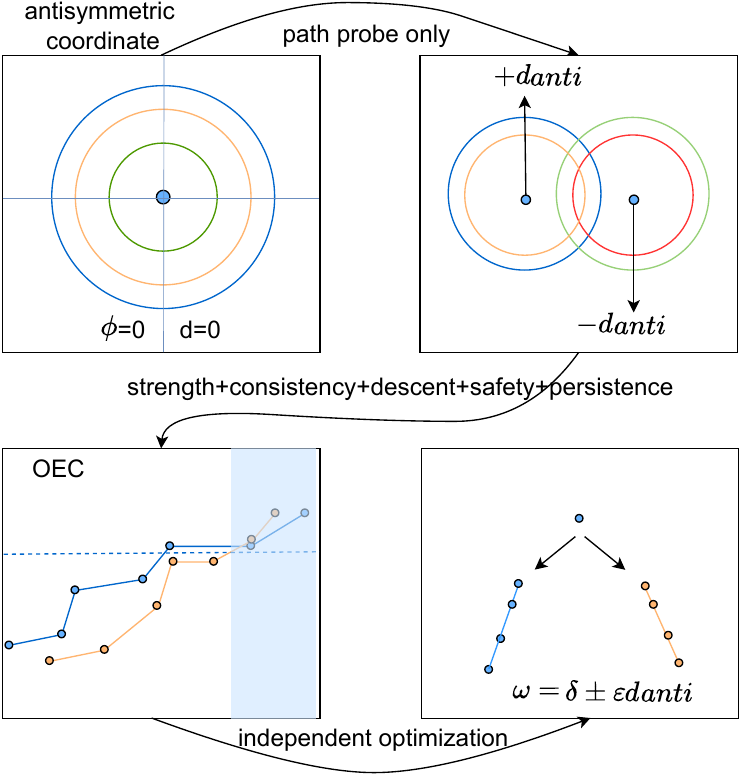}
    \caption{Phase-first certification and path materialization.
    EMAN starts from one tied physical path. Phase probing exposes a
    candidate antisymmetric direction without creating a new path. OEC
    evaluates strength, consistency, descent, safety, and persistence.
    Persistent evidence triggers a zero-sum split, and the released paths
    then optimize independently.}
    \label{fig:eman_local_mechanism}
\end{figure}

Figure~\ref{fig:method} gives the architecture-level view. The shared encoder
first performs exact single-path computation. The lower central block
contains phase-coupled probing and OEC. A successful certificate materializes
two equal-capacity paths through opposite perturbations. The upper central
block then performs two-path computation, feature fusion, and task-specific
prediction. The right block shows the phase-coupled pair-interaction matrix:
off-diagonal entries encode pair interactions, and diagonal entries encode
self terms.

\begin{figure*}[t]
    \centering
    \includegraphics[width=\textwidth]{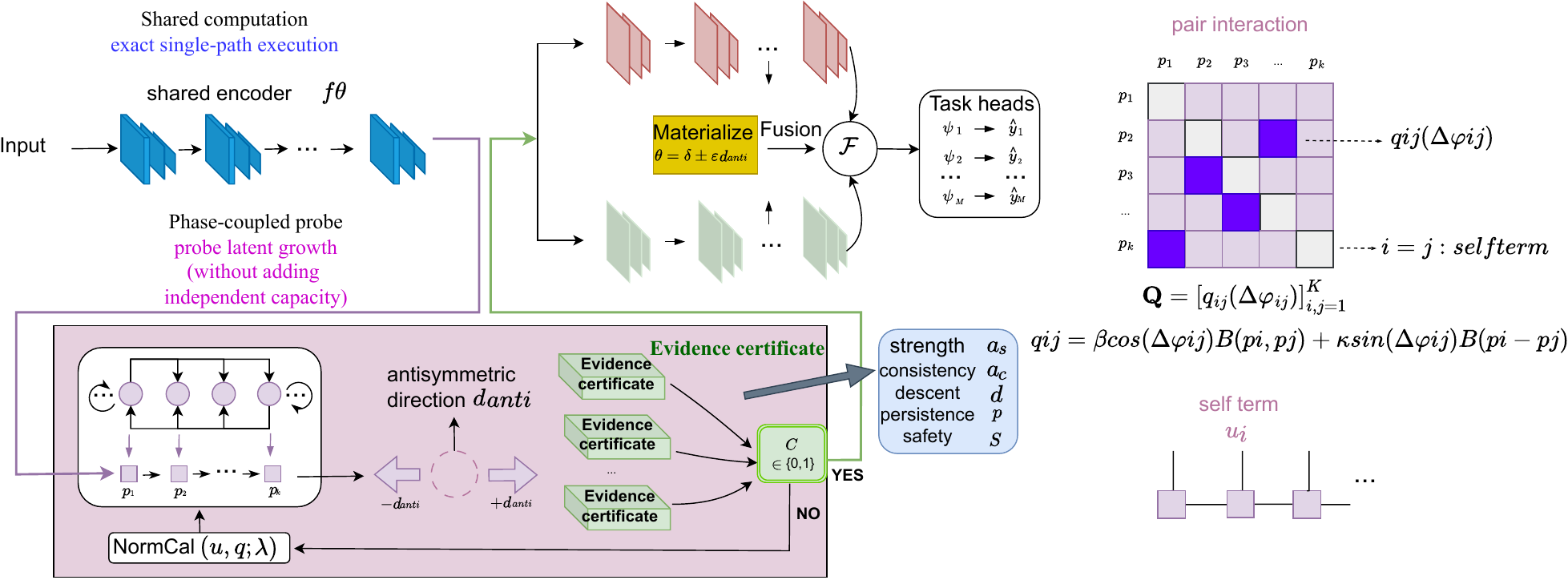}
    \caption{Overview of EMAN.
    The shared encoder performs exact single-path computation.
    Phase-coupled probing introduces latent relative phases and exposes an
    antisymmetric direction without adding independent path capacity. OEC
    evaluates direction strength, cross-probe consistency, finite-step
    descent, dynamic safety, and temporal persistence. A successful
    certificate materializes two equal-capacity paths through opposite
    perturbations. Their representations are fused and passed to the
    task-specific heads. The right panel shows the phase-coupled
    pair-interaction matrix. Off-diagonal entries represent pair
    interactions, and diagonal entries represent self terms.}
    \label{fig:method}
\end{figure*}

\subsection{Exact Shared Computation}
\label{sec:exact_sharing}

Let $x$ denote the input feature. The shared path produces
\begin{equation}
    h=f(x;\bar{\omega}),
    \label{eq:shared_path}
\end{equation}
with the shared parameter tensor $\bar{\omega}$. Before materialization,
EMAN stores only this tensor and performs one forward evaluation. Its output
is reused in two logical path coordinates:
\begin{equation}
    p_1=p_2=h.
    \label{eq:tied_paths}
\end{equation}
The logical duplication exposes symmetric and antisymmetric coordinates for
structural probing. It allows a zero-sum variation around $\bar{\omega}$
without storing or training a second parameter block. The coordinates remain
exchangeable and exactly tied, while the executed network still contains one
physical path. Parameter count, memory cost, and path computation therefore
remain equal to those of a shared model.

\subsection{Relative-Phase Interaction}
\label{sec:phase_interaction}

EMAN introduces two latent phase variables $\theta_1$ and $\theta_2$. Their
relative phase is
\begin{equation}
    \phi=\theta_1-\theta_2.
    \label{eq:relative_phase}
\end{equation}
The shared component and the phase-coupled pair component are
\begin{align}
    u &=
    \frac{1}{2}
    \left[A(p_1)+A(p_2)\right],
    \label{eq:shared_component}\\
    q_c &=
    \beta\cos(\phi)
    \left(p_1\odot p_2\right),
    \label{eq:cos_component}\\
    q_s &=
    \kappa\sin(\phi)
    \left(p_1-p_2\right),
    \label{eq:sin_component}\\
    q &=q_c+q_s.
    \label{eq:pair_component}
\end{align}

The cosine term captures symmetric pair interaction. The sine term captures
the antisymmetric response. At the tied point, $p_1-p_2=0$ and $q_s=0$, but
the derivative along the antisymmetric coordinate remains non-zero. EMAN
uses the fixed odd map
\begin{equation}
    B(d)=d.
\end{equation}
This map provides unit local gain at $d=0$. Relative-phase release can thus
expose an antisymmetric path force before path materialization. At this
stage, the phase variable changes the local response along the tied
antisymmetric coordinate. Physical capacity remains unchanged, and the
forward model still executes one path.

High-dimensional product interactions can dominate the shared component.
EMAN controls their magnitude through norm calibration:
\begin{equation}
    \widehat q
    =
    q\,
    \sg\!\left(
    \frac{\RMS(u)}
         {\RMS(q)+\varepsilon}
    \right),
    \qquad
    z=u+\rho\widehat q.
    \label{eq:normcal}
\end{equation}
The operator $\sg$ denotes stop-gradient. It blocks gradients through the
calibration ratio while controlling the forward magnitude of the pair term.
Norm calibration preserves the phase-dependent sign structure and
coefficient geometry. The same representation $z$ enters the task heads in
every structural state.

\subsection{Optimization Evidence Certification}
\label{sec:oec}

Capacity growth is irreversible, so one local descent observation is not
enough. Numerical noise, batch variation, and subset-specific behavior can
all produce an isolated signal. OEC combines multiple criteria and requires
persistent evidence before a structural transition.

Let $P_d$ project the logical path gradients onto the zero-sum
antisymmetric coordinate. The projected gradient at the tied state is
\begin{equation}
    g_d
    =
    P_d^\top
    \nabla_{\omega_1,\omega_2}
    \mathcal{L}
    \bigg|_{\omega_1=\omega_2=\bar{\omega}}.
    \label{eq:antisymmetric_gradient}
\end{equation}
The normalized candidate growth direction is
\begin{equation}
    d_{\mathrm{anti}}
    =
    -\frac{g_d}
    {\|g_d\|_2+\varepsilon}.
    \label{eq:danti}
\end{equation}

\noindent\textbf{Direction strength and consistency.}
The antisymmetric force must first exceed the frozen numerical-noise
threshold:
\begin{equation}
    S_t=\|g_d^{(t)}\|_2.
    \label{eq:oec_strength}
\end{equation}
EMAN also divides the training population into deterministic subsets and
constructs one candidate direction from each subset. Their agreement is
measured by
\begin{align}
    K_t^{\mathrm{half}}
    &=
    \left|
    \cos\!\left(
    d_{\mathrm{anti}}^{(t,a)},
    d_{\mathrm{anti}}^{(t,b)}
    \right)
    \right|,
    \label{eq:half_consistency}\\
    K_t^{\mathrm{shard}}
    &=
    \operatorname*{median}_{i<j}
    \left|
    \cos\!\left(
    d_{\mathrm{anti}}^{(t,i)},
    d_{\mathrm{anti}}^{(t,j)}
    \right)
    \right|.
    \label{eq:shard_consistency}
\end{align}
The absolute value removes the orientation ambiguity of the antisymmetric
axis. Both consistency scores must exceed their frozen thresholds.

\noindent\textbf{Descent compatibility and dynamic safety.}
OEC applies a finite zero-sum step along the candidate direction:
\begin{equation}
\begin{split}
    D_t
    =
    \mathcal{L}
    \left(
    \bar{\omega}+\eta d_{\mathrm{anti}},
    \bar{\omega}-\eta d_{\mathrm{anti}}
    \right)
    -
    \mathcal{L}
    \left(
    \bar{\omega},
    \bar{\omega}
    \right).
\end{split}
\label{eq:descent_test}
\end{equation}
The frozen test keeps the calibration state fixed. The dynamic test
recomputes it after the finite step. Both loss changes must be negative.
OEC also checks numerical stability, NormCal stability, path-exchange
symmetry, path-swap loss residuals, and output residuals. All values must
remain inside their frozen tolerances.

\noindent\textbf{Temporal persistence and certification.}
The instantaneous criteria must pass for a fixed number of consecutive probe
epochs. The evidence vector is
\begin{equation}
    e_t=
    \left(
    S_t,
    K_t^{\mathrm{half}},
    K_t^{\mathrm{shard}},
    D_t^{\mathrm{frozen}},
    D_t^{\mathrm{dynamic}},
    U_t
    \right),
    \label{eq:evidence_vector}
\end{equation}
where $U_t$ collects the dynamic safety indicators. The certification state
is
\begin{equation}
    C_t=
    \mathbb{I}
    \left[
    e_\tau\in\mathcal{E}_{\mathrm{safe}}
    \text{ for all }
    \tau\in\mathcal{W}_t
    \right].
    \label{eq:oec}
\end{equation}
The set $\mathcal{E}_{\mathrm{safe}}$ contains the frozen thresholds, and
$\mathcal{W}_t$ contains the required consecutive probe epochs.
The criteria address distinct failure modes. Strength separates the signal
from the numerical floor; cross-probe agreement rejects subset-specific
directions; descent tests local utility; safety rejects unstable probes; and
persistence removes transient events. Certification uses training data only.

\subsection{Conditional Path Materialization}
\label{sec:transition}

EMAN moves through three structural states. It starts in the exact shared
state:
\begin{equation}
    \omega_1=\omega_2=\bar{\omega},
    \qquad
    \theta_1=\theta_2=0.
    \label{eq:initial_state}
\end{equation}
Only the mother tensor $\bar{\omega}$ is physically stored, and each input
uses one path evaluation.

At scheduled probe epochs, EMAN evaluates the deterministic phase
certificate. Persistent and numerically safe evidence releases the latent
relative phase. The path parameters remain tied. A fixed adaptation window
then allows the phase to evolve, and OEC begins after this window.
Insufficient evidence retains exact shared computation.

A successful OEC decision gives $C_t=1$. EMAN copies the mother tensor into
two equal-capacity paths and applies the zero-sum perturbation
\begin{equation}
    \omega_1
    =
    \bar{\omega}
    +
    \epsilon_d d_{\mathrm{anti}},
    \qquad
    \omega_2
    =
    \bar{\omega}
    -
    \epsilon_d d_{\mathrm{anti}}.
    \label{eq:materialize}
\end{equation}
The arithmetic mean of the two initialized paths remains $\bar{\omega}$.
The transition opens an antisymmetric degree of freedom and preserves the
shared component at the release point.
The perturbation uses normalized parameter coordinates. The optimizer state
is copied symmetrically to both paths, so the transition preserves the
accumulated optimization history. The two paths then become independently
trainable. The task heads, task losses, optimizer, and representation
interface remain unchanged. The original multi-task objective drives all
later path differentiation.

\subsection{Local Phase-First Dynamics}
\label{sec:local_dynamics}

To describe phase-first path emergence, we consider a local expansion around
the shared state. Let $d$ denote the antisymmetric path coordinate, and let
$\phi$ denote the relative phase. The local risk near an exchange-symmetric
stationary branch is
\begin{equation}
\begin{split}
    \mathcal{R}(d,\phi)
    =
    \mathcal{R}_0
    &+
    \frac{a_\phi}{2}\phi^2
    +
    \frac{b_\phi}{4}\phi^4\\
    &+
    \frac{1}{2}d^\top A_d d
    +
    \phi c^\top d
    +
    \mathrm{h.o.t.}.
\end{split}
\label{eq:risk}
\end{equation}

The coefficient $b_\phi$ is positive, and the matrix $A_d$ is positive
definite. These conditions describe local stability along the tied path
coordinate. The cross term $\phi c^\top d$ couples relative phase and
antisymmetric path separation.

\begin{proposition}[Phase-First Antisymmetric Exposure]
\label{prop:phase_first}
Assume $a_\phi<0$. Restriction of Eq.~\eqref{eq:risk} to $d=0$ gives two
local stationary phase offsets:
\begin{equation}
    \phi_\pm
    =
    \pm\sqrt{-a_\phi/b_\phi}
    +
    o\!\left(\sqrt{|a_\phi|}\right).
\end{equation}
These offsets generate opposite antisymmetric path forces:
\begin{equation}
    \nabla_d\mathcal{R}(0,\phi_\pm)
    =
    c\phi_\pm
    +
    o(|\phi_\pm|).
\end{equation}
Path release gives the conditional local branch
\begin{equation}
    d(\phi)
    =
    -A_d^{-1}c\phi
    +
    o(|\phi|).
\end{equation}
\end{proposition}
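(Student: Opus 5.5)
The plan is a direct local analysis of Eq.~\eqref{eq:risk}, in three steps matching the three claims. Throughout, the higher-order terms are assumed smooth and of order at least three in $(d,\phi)$ jointly, except that along $d=0$ they must be $o(\phi^4)$ together with their $\phi$-derivative. This is the standard truncation for a pitchfork normal form. I would write $\varepsilon=\sqrt{|a_\phi|}$ as the small parameter and note that $\phi\sim\varepsilon$ on the branches of interest.

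\textbf{Step 1 (phase offsets).} Restricting to $d=0$ gives $r(\phi)=\mathcal{R}(0,\phi)=\mathcal{R}_0+\frac{a_\phi}{2}\phi^2+\frac{b_\phi}{4}\phi^4+\mathrm{h.o.t.}$ The stationarity condition is $r'(\phi)=a_\phi\phi+b_\phi\phi^3+o(|\phi|^3)=0$. I would discard the trivial root $\phi=0$ and divide by $\phi$, which gives $a_\phi+b_\phi\phi^2+o(\phi^2)=0$. Rescaling $\phi=\varepsilon\psi$ and using $a_\phi=-\varepsilon^2$, this becomes $\varepsilon^2\bigl(-1+b_\phi\psi^2+o(1)\bigr)=0$. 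The reduced equation $-1+b_\phi\psi^2=0$ has simple roots $\psi_\pm=\pm b_\phi^{-1/2}$, since $b_\phi>0$ and the derivative $2b_\phi\psi_\pm\neq0$. The implicit function theorem, or a continuity/intermediate-value argument if only continuity of the remainder is assumed, then gives roots $\psi=\psi_\pm+o(1)$. Multiplying back by $\varepsilon$ yields $\phi_\pm=\pm\sqrt{-a_\phi/b_\phi}+o(\sqrt{|a_\phi|})$. I expect this to be the main obstacle. The difficulty is not the algebra but pinning down the hypotheses on the h.o.t.\ so that the $o(\varepsilon)$ correction is legitimate. Cubic terms in $\phi$ alone would shift the roots at order $\varepsilon^2/\varepsilon$, so they must be excluded or absorbed by phase-exchange symmetry. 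I would handle this by invoking the exchange symmetry $(d,\phi)\mapsto(-d,-\phi)$ of the tied model, which makes the restricted risk even in $\phi$.

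\textbf{Step 2 (antisymmetric forces).} Differentiating Eq.~\eqref{eq:risk} in $d$ gives $\nabla_d\mathcal{R}(d,\phi)=A_d d+c\phi+\nabla_d\,\mathrm{h.o.t.}$ At $d=0$ this is $c\phi+O(\phi^2)$, because the h.o.t.\ are at least quadratic in their joint argument after one derivative. Evaluating at $\phi_\pm$ gives $c\phi_\pm+o(|\phi_\pm|)$. Since $\phi_+$ and $\phi_-$ have opposite signs to leading order, the forces are opposite, which is the claim.

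\textbf{Step 3 (conditional branch).} After release, $d$ is free, so I would solve $\nabla_d\mathcal{R}(d,\phi)=0$ for $d$ as a function of $\phi$. Define $F(d,\phi)=A_d d+c\phi+\nabla_d\,\mathrm{h.o.t.}$ Then $F(0,0)=0$ and $\partial_dF(0,0)=A_d$, which is invertible because it is positive definite. The implicit function theorem gives a unique smooth local branch $d(\phi)$ with $d(0)=0$ and $d'(0)=-A_d^{-1}c$. Hence $d(\phi)=-A_d^{-1}c\phi+o(|\phi|)$. Positive definiteness also shows that this branch is a local minimizer in $d$ for each fixed $\phi$, which justifies the word ``branch.'' Substituting $\phi=\phi_\pm$ gives two mirror-image separations, consistent with Step 2.
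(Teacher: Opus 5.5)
Your proposal is correct and follows essentially the same route as the paper. Both arguments restrict to $d=0$ and solve $a_\phi\phi+b_\phi\phi^3+o(|\phi|^3)=0$ for the nonzero roots, differentiate in $d$ and evaluate at $d=0$ to get the forces, and apply the implicit function theorem with $A_d$ nonsingular to obtain $d'(0)=-A_d^{-1}c$; your rescaling $\phi=\varepsilon\psi$ and your use of exchange symmetry to exclude a cubic term simply spell out the remainder conditions the paper assumes through $o(|\phi|^3)$ and $\nabla_d r=o(\|d\|+|\phi|)$.
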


The two phase offsets select opposite orientations of the same
antisymmetric branch. The conditional branch links the exposed force to the
initial direction of path separation. This ordering motivates phase-first
probing: relative phase can move before the physical paths separate.

\section{Experiments}
\label{sec:experiments}

The experiments first isolate capacity demand under controlled rank
constraints. We then examine certification dynamics, released
representations, and the performance--computation trade-off on
natural-image multi-task benchmarks.

\begin{figure}[t]
    \centering

    \includegraphics[width=\linewidth]
    {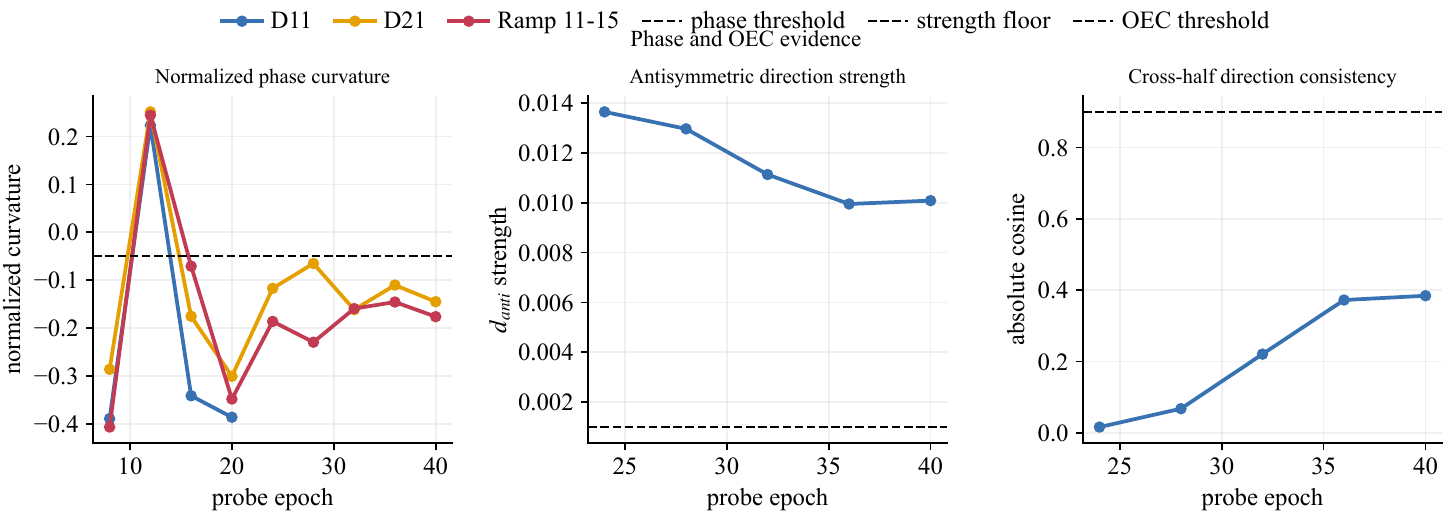}
    \par
    {\small (a) Internal phase and OEC dynamics.\par}

    \includegraphics[width=\linewidth]
    {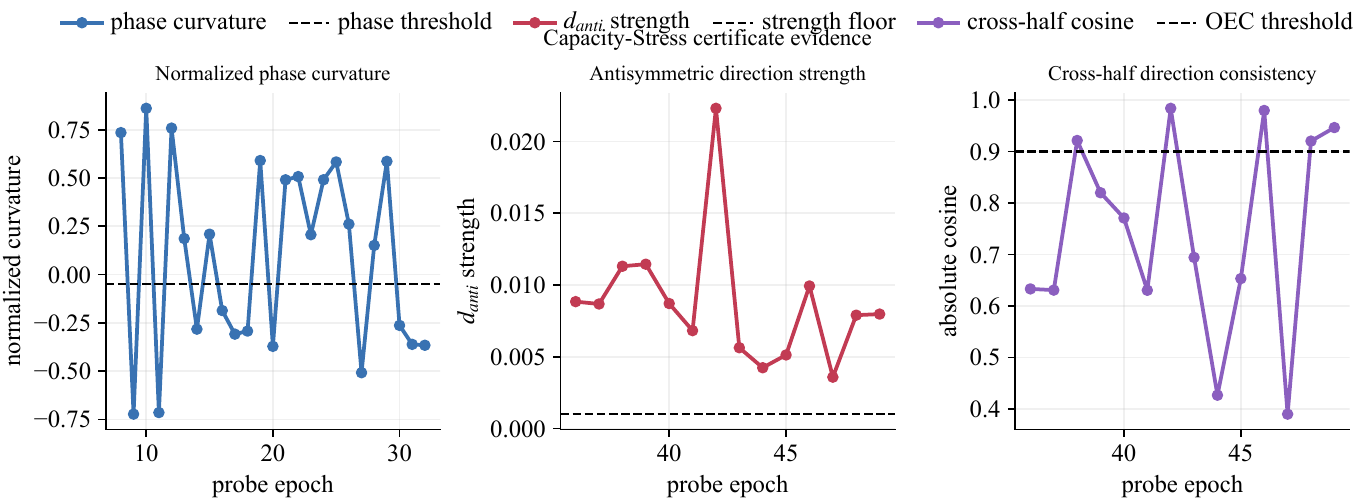}
    \par
    {\small (b) Population-level Capacity-Stress certification.\par}

    \includegraphics[width=\linewidth]
    {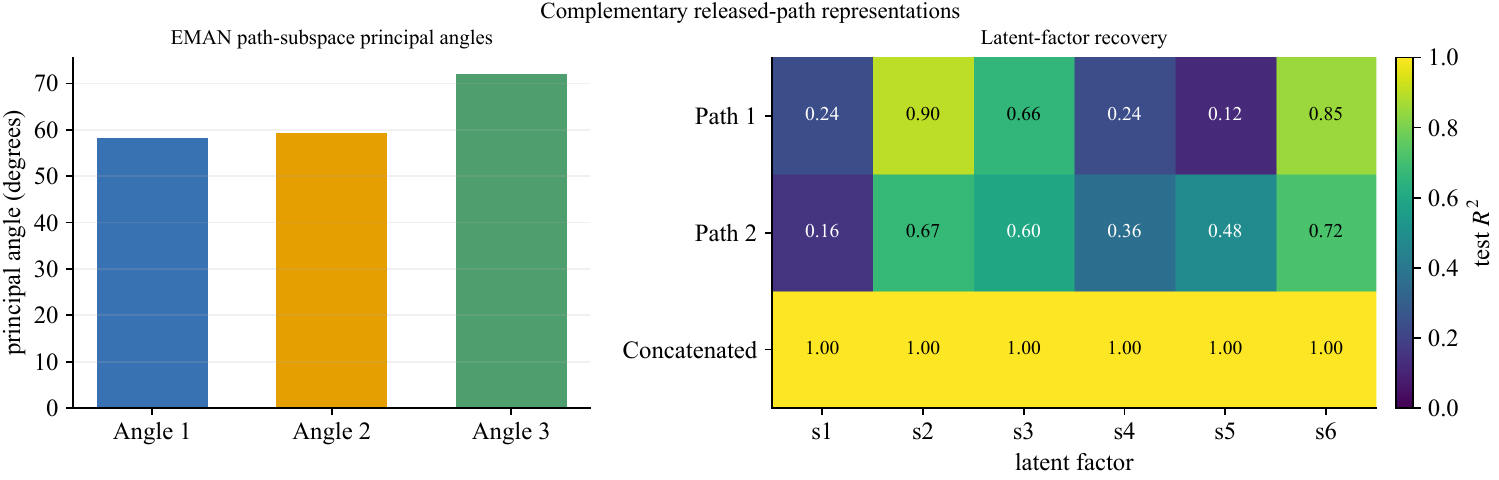}
    \par
    {\small (c) Released-path representation geometry.\par}

    \caption{Certification and controlled-capacity diagnostics.
    (a) Phase curvature, antisymmetric strength, and cross-half
    consistency; dashed lines mark frozen thresholds.
    (b) Population-level evidence satisfies the frozen certificate
    under Capacity-Stress.
    (c) Released paths span complementary subspaces and recover all
    latent factors.}
    \label{fig:oec_controlled_geometry}
\end{figure}

\subsection{Experimental Setup}
\label{sec:exp_setup}

\paragraph{Benchmarks.}
The controlled benchmark contains six latent factors and three regression
tasks. Joint target ranks are 3, 4, and 6 for \textsc{High},
\textsc{Medium}, and \textsc{Low}. Each physical path has rank 3.
NYUv2 contains semantic segmentation, depth estimation, and surface-normal
prediction \citep{silberman2012indoor}. Standard capacity uses 64 channels.
Capacity-Stress uses 8 channels, or $0.125\times$ path width.
PASCAL-Context provides a second dataset and a different task composition
\citep{mottaghi2014context}. Task-arrival protocols delay part of the task set and leave the transition
rule unchanged. Earlier-arrival and gradual-demand variants change the
demand onset, so they isolate timing sensitivity without threshold
retuning.

\begin{table}[t]
\centering
\small
\renewcommand{\arraystretch}{1.06}

\begin{tabular*}{\linewidth}{@{\extracolsep{\fill}}llcc@{}}
\toprule
Regime & Method & Macro $R^2$ & Phase/Path rel. \\
\midrule
\multicolumn{4}{l}{\textit{Sufficient capacity: joint rank 3}} \\
\textsc{High} & Hard-sharing
& $\pmb{0.9021\pm0.0850}$ & -- ; $0/3$ \\
& EMAN
& $\pmb{0.9021\pm0.0850}$ & $0/3$ ; $0/3$ \\
\midrule
\multicolumn{4}{l}{\textit{Insufficient capacity: joint rank 6}} \\
\textsc{Low} & Hard-sharing
& $0.4838\pm0.0207$ & -- ; $0/3$ \\
& PhaseOnly
& $0.4834\pm0.0208$ & $3/3$ ; $0/3$ \\
& EMAN
& $\pmb{0.8661\pm0.1252}$
& $\pmb{3/3\ ;\ 3/3}$ \\
\bottomrule
\end{tabular*}

\caption{Controlled capacity selection.
Mean $\pm$ sample standard deviation over three seeds.
Phase/Path rel. reports the numbers of seeds with phase release and path
materialization, respectively. Best results and key EMAN release outcomes are
shown in bold.}
\label{tab:controlled_capacity}
\end{table}

\paragraph{Metrics and baselines.}
We report native task metrics and the frozen directional aggregate
\begin{equation}
    D_t(M)=
    s_t\frac{M_t-M_t^{\mathrm{HS}}}
    {|M_t^{\mathrm{HS}}|+\varepsilon_t},
    \qquad
    G(M)=\frac{1}{T}\sum_{t=1}^{T}D_t(M),
    \label{eq:aggregate}
\end{equation}
where $s_t=1$ for higher-is-better metrics and $s_t=-1$ for
lower-is-better metrics. The constant $\varepsilon_t$ prevents division by
zero, and HS denotes Hard-sharing. Positive $G$ denotes an average directional improvement over Hard-sharing.
The native task metrics show which tasks contribute to that change. Controlled comparisons include
Hard-sharing, PhaseOnly, MeanPaths, and EMAN. Natural-image comparisons also
include AdaShare, Recon, Conflict Trigger, and Scheduled split. Every method
uses the same data split, augmentation, heads, losses, optimizer, scheduler,
and training horizon. Structural decisions use training data only. Full
settings and per-seed results appear in the supplementary material.

\subsection{Controlled Capacity and Certification}
\label{sec:controlled_capacity}

Figure~\ref{fig:oec_controlled_geometry}(a,b) connects internal optimization
evidence to the structural decision. In panel (a), phase-curvature crossings
are transient. Antisymmetric strength remains above the numerical floor, but
cross-half consistency does not persist long enough for certification.
Panel (b) shows a population-level certificate under Capacity-Stress.
Table~\ref{tab:controlled_capacity} evaluates the same decision principle
under explicit rank constraints. In \textsc{High}, the joint task rank equals
the rank-3 path capacity. EMAN remains shared in all seeds and matches
Hard-sharing at $0.9021\pm0.0850$ macro-$R^2$. In \textsc{Low}, the joint
rank rises to 6. Hard-sharing reaches $0.4838\pm0.0207$. PhaseOnly releases
phase in all seeds, retains one physical path, and reaches
$0.4834\pm0.0208$. EMAN releases both phase and path in all seeds and reaches
$0.8661\pm0.1252$. Phase release reveals the growth direction; path
materialization supplies the missing representation rank.

The rank-4 \textsc{Medium} regime lies near the one-path capacity limit.
EMAN remains shared and matches Hard-sharing at $0.7086$, while PhaseOnly
reaches $0.7081$. Its evidence does not satisfy the frozen persistence
criterion. Figure~\ref{fig:oec_controlled_geometry}(c) examines the
representation after release under rank-6 demand. Principal angles range
from $58.3^{\circ}$ to $72.0^{\circ}$. The concatenated representation
recovers all six latent factors with $R^2\geq0.999892$. The permutation
control reduces mean recovery to $0.0243$. The released paths therefore add
complementary representation capacity.

\subsection{Real-Data Performance and Computation}
\label{sec:real_performance}

Table~\ref{tab:nyuv2_results}(a) reports standard-capacity NYUv2 results.
EMAN has the best mean mIoU, depth error, and $G$; Recon has the best normal
error and joint loss. EMAN reaches $G=0.0078\pm0.0152$, with seed values
$0.0253$, $0.0005$, and $-0.0023$. Its clearest gains appear in semantic
segmentation and depth estimation. Table~\ref{tab:nyuv2_results}(b) reports structural selection and computation
under Capacity-Stress. EMAN achieves the highest $G$ at
$0.0306\pm0.0049$, compared with $0.0237\pm0.0086$ for MeanPaths and
$0.0115\pm0.0048$ for PhaseOnly. It materializes the second path in two of
three seeds and remains shared in the other. Its dual-active ratio is
$0.392$, and it uses $69.6\%$ of MeanPaths path-training FLOPs.

With task arrival at epoch 21, EMAN reaches
$G=0.0226\pm0.0081$, compared with $0.0202\pm0.0076$ for Scheduled split and
$0.0114\pm0.0121$ for MeanPaths. EMAN and Scheduled split use $75\%$ of
MeanPaths path-training FLOPs. Release occurs near the demand change, and active dual-path training covers
roughly the latter half of the schedule. Their wall-clock times remain close
to MeanPaths.

\begin{table*}[!t]
\centering
\small
\setlength{\tabcolsep}{3.2pt}
\renewcommand{\arraystretch}{1.06}

\begin{tabular*}{\textwidth}
{@{\extracolsep{\fill}}lcccccc@{}}
\toprule
\multicolumn{7}{c}{\textit{(a) Standard-capacity performance}} \\
\midrule
\multicolumn{1}{c}{Method}
& mIoU $\uparrow$
& Depth Abs. $\downarrow$
& Normal Mean $\downarrow$
& Joint Loss $\downarrow$
& $G\uparrow$
& Params (M) \\
\midrule

Hard-sharing
& $0.4461\pm0.0035$
& $0.4344\pm0.0016$
& $27.3616\pm0.1001$
& $0.9469\pm0.0221$
& $\phantom{-}0.0000\pm0.0000$
& $26.20$ \\

MeanPaths
& $0.4441\pm0.0039$
& $0.4346\pm0.0038$
& $27.3705\pm0.0217$
& $0.9445\pm0.0325$
& $-0.0018\pm0.0024$
& $28.82$ \\

AdaShare
& $0.1776\pm0.1091$
& $0.6497\pm0.1092$
& $36.9200\pm3.2958$
& $1.8019\pm0.7807$
& $-0.4818\pm0.2022$
& $26.20$ \\

Recon
& $0.4439\pm0.0018$
& $0.4372\pm0.0081$
& $\pmb{27.0515\pm0.2439}$
& $\pmb{0.9317\pm0.0183}$
& $\phantom{-}0.0000\pm0.0049$
& $27.02$ \\

EMAN
& $\pmb{0.4472\pm0.0087}$
& $\pmb{0.4279\pm0.0089}$
& $27.1913\pm0.3328$
& $0.9414\pm0.0367$
& $\phantom{-}\pmb{0.0078\pm0.0152}$
& $31.44$ \\

\midrule
\multicolumn{7}{c}{\textit{(b) Capacity-Stress: selection and computation}} \\
\midrule
\multicolumn{1}{c}{Method}
& $G\uparrow$
& Phase rel.
& Path rel.
& Dual-active
& Path FLOPs
& Total FLOPs \\
\midrule

Hard-sharing
& $0.0000\pm0.0000$
& --
& $0/3$
& $0.000$
& $0.500$
& $0.962$ \\

PhaseOnly
& $0.0115\pm0.0048$
& $2/3$
& $0/3$
& $0.000$
& $0.500$
& $0.962$ \\

MeanPaths
& $0.0237\pm0.0086$
& --
& $3/3$ at 1
& $1.000$
& $1.000$
& $1.000$ \\

Conflict Trigger
& $0.0000\pm0.0000$
& --
& $1/3$ at 79
& $0.008$
& $0.504$
& $0.963$ \\

EMAN
& $\pmb{0.0306\pm0.0049}$
& $\pmb{2/3}$
& $\pmb{2/3}$
& $0.392$
& $0.696$
& $0.977$ \\

\bottomrule
\end{tabular*}

\caption{NYUv2 performance and computation.
Panel (a) reports standard-capacity task performance and parameter counts.
Panel (b) reports structural selection and computation under Capacity-Stress.
Task metrics and $G$ are reported as mean $\pm$ sample standard deviation
over three seeds. Phase/Path rel. reports the number of seeds with phase
release and path materialization. Best mean performance values and EMAN
release counts are shown in bold. Capacity-Stress uses $0.125\times$ path
width; FLOPs are normalized by MeanPaths.}
\label{tab:nyuv2_results}
\end{table*}

\paragraph{Cross-dataset evaluation.}
PASCAL-Context tests the same structural rule under a different dataset and
task composition. In the epoch-21 task-arrival setting, EMAN releases at the
demand change in all three seeds and reaches $G=0.0043\pm0.0042$.
MeanPaths reaches $0.0027\pm0.0018$, and Hard-sharing reaches $0$. EMAN uses
$75\%$ of the path-training compute of always-dual MeanPaths. Earlier task
arrival moves the same frozen release from epoch 21 to epoch 11. EMAN reaches
$G=0.0037$, while the matched Scheduled split reaches $0.0020$. The release
time thus follows the observed demand onset without access to the task
schedule. Static demand tests immediate capacity pressure from the first epoch and
therefore favors an early structural response. Gradual demand asks whether
the evidence can accumulate under a smoother onset instead of one abrupt
task-arrival event. Detailed task-wise results, per-seed
traces, wall-clock profiles, and replay manifests appear in the supplementary
material.

\section{Discussion and Limitations}
\label{sec:discussion}

The central result concerns the timing of capacity, not only its final
amount. EMAN separates structural diagnosis from structural commitment. A
useful local direction does not yet establish a persistent capacity deficit.
Phase probing identifies a possible direction of growth; OEC decides whether
its strength, consistency, descent, safety, and persistence justify new
parameters.

The controlled experiments expose this distinction clearly. One path remains
sufficient at joint rank three. Joint rank six creates a clear deficit, and
materialization recovers the missing factors. The released paths also occupy
complementary latent subspaces rather than duplicate the shared
representation. Natural-image experiments show the same timing principle.
Delayed demand produces delayed release, earlier demand shifts the same frozen
decision from epoch 21 to epoch 11, and static demand produces an early
release. The value of path emergence therefore depends on when additional
capacity becomes useful and how long that demand persists. Delayed and
persistent demand provides the clearest setting for EMAN: a meaningful
shared-training stage precedes materialization.

The current certificate favors reliable evidence over aggressive expansion.
The \textsc{Medium} regime exposes this trade-off. A strict persistence rule
avoids premature growth, but it can miss a mild bottleneck. The reported
savings measure path-training evaluations before release. Both paths remain
active after materialization and during inference. Future work can address
these two limits separately: improve sensitivity near the capacity boundary
and reduce post-release cost through compression, path reuse, or conditional
execution.

\section{Conclusion}
\label{sec:conclusion}

Model capacity need not be fixed before optimization begins. EMAN starts
from exact single-path computation and uses phase-first evidence to decide
whether independent capacity should emerge. OEC turns a local
antisymmetric direction into a persistent structural decision, and the
zero-sum transition preserves the accumulated optimization state at release.
Controlled experiments show exact sharing under sufficient rank, capacity
growth under a clear deficit, and complementary representation recovery.
The PhaseOnly comparison also shows that direction discovery alone does not
supply missing representation capacity. NYUv2 and PASCAL-Context demonstrate
adaptive release under changing task demand, competitive final performance,
and reduced path-training computation.

These results position capacity growth as part of optimization itself. EMAN
offers a practical route from fixed hard sharing to evidence-driven
structural expansion.

\bibliography{references}

\input{supplement}
\end{document}

%% file: supplement.tex
\title{Supplementary Material for\\
EMAN: Optimization-Driven Capacity Growth through Path Emergence in Multi-Task Learning}
\author{Anonymous Submission}
\affiliations{}

\appendix

\section{Extended Theory and Method Details}
\label{sec:supp_theory}

\subsection{Physical Paths, Logical Coordinates, and Structural States}
\label{sec:supp_states}

EMAN separates a physical computation graph from a local coordinate system.
Before materialization, the model stores one trainable path with parameters
$\bar\omega$. It evaluates this path once:
\begin{equation}
 h=f(x;\bar\omega).
\end{equation}
The model then defines two logical coordinates with the same output:
\begin{equation}
 p_1=p_2=h.
\end{equation}
These coordinates support symmetry analysis and directional probes. They do
not create two encoders. They do not double path memory. They do not add an
independent representation parameterization.

The structural state belongs to
\begin{equation}
 \mathcal S_t\in
 \left\{\begin{array}{l}
 \textsc{Exact-Shared},\;\textsc{Phase-Released},\\
 \textsc{Path-Released}
 \end{array}\right\}.
\end{equation}
The state sequence is monotone:
\begin{equation}
 \begin{aligned}
 \textsc{Exact-Shared}&\longrightarrow\textsc{Phase-Released}\\
 &\longrightarrow\textsc{Path-Released}.
 \end{aligned}
 \label{eq:supp_state_sequence}
\end{equation}
A failed certificate keeps the current state. The implementation has no
fallback split. It has no validation-driven split. It has no task-ID trigger.

For two logical outputs, the representation map is
\begin{align}
 u &= \tfrac12\left[A(p_1)+A(p_2)\right],\\
 q_c &= \beta\cos(\phi)(p_1\odot p_2),\\
 q_s &= \kappa\sin(\phi)(p_1-p_2),\\
 q &= q_c+q_s,\\
 \widehat q &= q\,\sg\!\left(
 \frac{\RMS(u)}{\RMS(q)+\varepsilon}\right),\\
 z &= u+\rho\widehat q,
 \label{eq:supp_forward}
\end{align}
with relative phase $\phi=\theta_1-\theta_2$. The odd branch uses the fixed
map $B(d)=d$. The same task heads consume $z$ in every structural state.

\begin{table}[!t]
\centering

\small
\begin{tabular}{p{0.31\linewidth}p{0.58\linewidth}}
\toprule
State & Physical interpretation \\
\midrule
\textbf{\textsc{Exact-Shared}} & One stored path, one path evaluation, $\phi=0$, and no independent path coordinate. \\
\textbf{\textsc{Phase-Released}} & One stored path and one path evaluation. The relative phase can adapt. Path parameters remain tied. \\
\textbf{\textsc{Path-Released}} & Two stored paths. A certified zero-sum perturbation initializes them. They then optimize independently. \\
\bottomrule
\end{tabular}
\caption{Structural states. Logical duplication never implies physical duplication.}
\label{tab:supp_states}
\end{table}

Table~\ref{tab:supp_states} separates logical coordinates from trainable
parameter blocks and physical path evaluations. The exact-shared and
phase-released states both use one encoder evaluation. They differ only in the
availability of the latent relative phase. Independent representation capacity
appears only after path release. PhaseOnly can therefore change the probing
geometry, but it cannot instantiate a second encoder.

\subsubsection{State invariants and accounting}
Let $N_t^{\mathrm{phys}}$ denote the number of stored physical paths,
$N_t^{\mathrm{eval}}$ the number of path evaluations per input, and
$N_t^{\mathrm{ind}}$ the number of independently trainable path parameter
blocks. The three states satisfy
\begin{equation}
\begin{array}{c|ccc}
\mathcal S_t & N_t^{\mathrm{phys}} & N_t^{\mathrm{eval}} & N_t^{\mathrm{ind}}\\
\hline
\textsc{Exact-Shared} & 1 & 1 & 1\\
\textsc{Phase-Released} & 1 & 1 & 1\\
\textsc{Path-Released} & 2 & 2 & 2
\end{array}.
\label{eq:supp_state_accounting}
\end{equation}
The logical pair $(p_1,p_2)$ exists in all states, but it does not change these
physical counts. Consequently, the phase-probing stage can inspect a candidate
split direction without paying the path-training cost of an always-dual model.
After materialization, the two physical paths remain active for both training
and inference. The reported savings therefore concern the pre-release training
interval rather than sparse post-release execution.

\subsection{Exchange Symmetry and Coordinate Decomposition}
\label{sec:supp_symmetry}

Define the exchange operator
\begin{equation}
 \Pi(p_1,p_2,\theta_1,\theta_2)
 =(p_2,p_1,\theta_2,\theta_1).
\end{equation}
The symmetric and antisymmetric parameter coordinates are
\begin{equation}
 \bar\omega=\tfrac12(\omega_1+\omega_2),
 \qquad
 d=\tfrac12(\omega_1-\omega_2).
 \label{eq:supp_coordinates}
\end{equation}
The phase coordinates are
\begin{equation}
 \bar\theta=\tfrac12(\theta_1+\theta_2),
 \qquad
 \phi=\theta_1-\theta_2.
\end{equation}
Exchange preserves $\bar\omega$ and $\bar\theta$. It maps $d$ to $-d$ and
$\phi$ to $-\phi$.

\begin{proposition}[Exchange invariance]
The representation in Eq.~\eqref{eq:supp_forward} is invariant under $\Pi$.
\end{proposition}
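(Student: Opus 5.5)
We verify invariance term by term, tracking how $\Pi$ acts on each ingredient of Eq.~\eqref{eq:supp_forward}. Throughout we take the coefficients $\beta,\kappa,\rho,\varepsilon$ and the maps $A$ and $\RMS$ as fixed and unaffected by $\Pi$, since they are not exchanged. Under $\Pi$ we have $p_1\leftrightarrow p_2$ and $\theta_1\leftrightarrow\theta_2$. Hence the relative phase maps as $\phi=\theta_1-\theta_2\mapsto\theta_2-\theta_1=-\phi$.

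\textbf{Shared component.} The term $u=\tfrac12[A(p_1)+A(p_2)]$ is a symmetric sum of the same map applied to each coordinate. It is therefore unchanged by swapping $p_1$ and $p_2$.

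\textbf{Cosine component.} For $q_c$ we use two facts: the Hadamard product is commutative, $p_2\odot p_1=p_1\odot p_2$, and cosine is even, $\cos(-\phi)=\cos\phi$. Together these give $q_c\mapsto q_c$.

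\textbf{Sine component.} This is the only step needing care, and it is where the odd structure enters. Under $\Pi$ the difference flips sign, $p_1-p_2\mapsto p_2-p_1=-(p_1-p_2)$. Sine is odd, so $\sin(-\phi)=-\sin\phi$. The two sign changes cancel:
\begin{equation*}
 \kappa\sin(-\phi)\,(p_2-p_1)=\kappa\sin(\phi)\,(p_1-p_2),
\end{equation*}
so $q_s\mapsto q_s$. The same argument holds with any odd gain $B$ in place of $B(d)=d$, since the antisymmetric difference always carries an odd function of $\phi$. It follows that $q=q_c+q_s$ is invariant.

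\textbf{Calibration and output.} The calibration factor $\sg(\RMS(u)/(\RMS(q)+\varepsilon))$ depends only on $u$ and $q$, which are both invariant. The stop-gradient $\sg$ acts as the identity in the forward pass, and we also note that it commutes with the relabeling. Hence $\widehat q$ is invariant, and so is $z=u+\rho\widehat q$.

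This completes the argument. The only genuine obstacle is the sine term, whose invariance relies on pairing an antisymmetric feature difference with an odd function of the relative phase. All other terms are symmetric by inspection.
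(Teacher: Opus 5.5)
Your proof is correct and follows the paper's own argument: the shared term is invariant because the sum is symmetric, the cosine term because $\cos$ is even and $\odot$ is commutative, the sine term by the sign cancellation $\sin(-\phi)(p_2-p_1)=\sin(\phi)(p_1-p_2)$, and the calibration because it depends only on the invariant $u$ and $q$. Your aside that any odd gain $B$ would also preserve invariance is correct and harmless.
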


\begin{proof}
The shared term is invariant because addition is commutative. The cosine term
is invariant because cosine is even and the product is symmetric. The sine
term is invariant because both factors change sign:
\begin{equation}
 \sin(-\phi)(p_2-p_1)=\sin(\phi)(p_1-p_2).
\end{equation}
The normalization depends only on root-mean-square magnitudes. It is also
invariant. The fused representation is therefore invariant.
\end{proof}

\begin{corollary}[No accidental path label]
The tied state has no preferred path index. A path swap leaves every task
output unchanged.
\end{corollary}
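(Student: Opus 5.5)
The corollary follows from the Exchange-invariance proposition, which shows that the fused representation $z$ is invariant under $\Pi$. The plan has two parts. First, at the tied state the exchange $\Pi$ acts trivially on the state itself, so no quantity can single out one path. Second, every task output is a function of $z$ alone, so invariance carries over to the outputs.

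\textbf{Step 1: the tied state is a fixed point of $\Pi$.} At the exact-shared state of Eq.~\eqref{eq:initial_state} we have $\omega_1=\omega_2=\bar\omega$ and $\theta_1=\theta_2=0$. Hence $p_1=p_2=h$ and $\phi=0$. Applying $\Pi$ returns $(p_2,p_1,\theta_2,\theta_1)=(p_1,p_2,\theta_1,\theta_2)$, so the state is unchanged. In the coordinates of Eq.~\eqref{eq:supp_coordinates}, the state sits at $d=0$, $\phi=0$, which is the fixed set of $d\mapsto-d$, $\phi\mapsto-\phi$. Any labelling of the paths is therefore determined only up to $\Pi$. Since the two labellings coincide pointwise, there is no preferred index.

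\textbf{Step 2: outputs are unchanged by a swap.} Each task head receives only $z$, and the heads are identical in every structural state. By the proposition, $z(\Pi s)=z(s)$ for any state $s$, so every head output $y_t=H_t(z)$ is also invariant. Each task loss depends only on $y_t$ and the target, so the losses, and the joint loss, are invariant as well.

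\textbf{Main point of care.} The only subtlety is that the $\sg$-normalization factor in $\widehat q$ must also be invariant. The proposition already covers this, since $\RMS(u)$ and $\RMS(q)$ depend only on $u$ and $q$, and both are invariant. A second point is that the claim concerns outputs of the forward map, not gradients. The antisymmetric gradient $g_d$ flips sign under $\Pi$ rather than staying fixed, but this reflects the orientation ambiguity already removed by the absolute value in Eq.~\eqref{eq:half_consistency}. It does not assign a label, so the corollary holds.
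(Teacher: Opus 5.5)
Your proposal is correct and matches the paper. The paper gives no separate proof: the corollary follows from the exchange-invariance proposition together with the fact that the same task heads consume only $z$, which is exactly your Step 2. Your Step 1 fixed-point observation holds only at $\phi=0$, but Step 2 already covers any path-tied state, including a phase-released one with $\phi\neq 0$, so nothing is lost.
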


This symmetry matters for certification. The antisymmetric axis has an
orientation ambiguity. A candidate direction $d_{\mathrm{anti}}$ and its
negative represent the same separation axis. OEC therefore uses absolute
cosine agreement across probes.

At the exact tied state, exchange invariance also constrains local derivatives.
For a smooth invariant risk $\mathcal R(\bar\omega,d,\bar\theta,\phi)$,
\begin{equation}
 \mathcal R(\bar\omega,d,\bar\theta,\phi)
 =\mathcal R(\bar\omega,-d,\bar\theta,-\phi).
\end{equation}
Setting $\phi=0$ makes the restriction even in $d$. Therefore,
\begin{equation}
\nabla_d\mathcal R(\bar\omega,0,\bar\theta,0)=0.
\end{equation}
This zero gradient does not rule out a mixed phase--path instability. The product $\phi c^\top d$ is
exchange invariant because both factors change sign. Its mixed derivative can
therefore be nonzero at the tied state:
\begin{equation}
\begin{aligned}
 H_{d\phi}
 &=\left.\frac{\partial^2\mathcal R}
 {\partial d\,\partial\phi}\right|_{d=0,\phi=0}\\
 &=c.
\end{aligned}
\end{equation}
Phase release activates this mixed block and converts a latent coupling into a
measurable antisymmetric force. This is the local reason for probing phase
before allocating a second physical path.

\subsection{What Phase Release Can and Cannot Add}
\label{sec:supp_phase_capacity}

At the tied point, $p_1=p_2=h$. The odd branch vanishes:
\begin{equation}
 q_s=\kappa\sin(\phi)(h-h)=0.
\end{equation}
The even branch remains a function of the same path output:
\begin{equation}
 q_c=\beta\cos(\phi)(h\odot h).
\end{equation}
Phase release can therefore alter pair-interaction coefficients. It can expose
local derivatives along the antisymmetric coordinate. It cannot create a
second independently trainable encoder. The PhaseOnly control isolates this
point empirically. It releases phase but keeps one physical path.

\begin{proposition}[One-path parameter capacity before materialization]
Before path materialization, every trainable encoder parameter belongs to
$\bar\omega$. Phase release does not add an independent encoder parameter
block.
\end{proposition}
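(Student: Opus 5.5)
The plan is to enumerate the trainable parameters in the \textsc{Exact-Shared} and \textsc{Phase-Released} states and show that each one is either part of the mother tensor $\bar\omega$ or is not an encoder parameter. First, I will read off the forward computation before materialization. By Eq.~\eqref{eq:shared_path} and Eq.~\eqref{eq:tied_paths}, the encoder output is $h=f(x;\bar\omega)$, and both logical coordinates are assigned this same tensor, $p_1=p_2=h$. The only encoder parameters entering the computation are therefore those in $\bar\omega$. The logical pair $(p_1,p_2)$ is a relabeling of one output, not a second parameterization. This matches the state accounting in Eq.~\eqref{eq:supp_state_accounting}, where $N_t^{\mathrm{ind}}=1$ in both pre-materialization states.

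Second, I will classify what phase release adds. It unfreezes only the latent scalars $\theta_1,\theta_2$, equivalently $(\bar\theta,\phi)$. These enter the representation map of Eq.~\eqref{eq:supp_forward} only through the coefficients $\cos\phi$ and $\sin\phi$ multiplying functions of $(p_1,p_2)$. They act downstream of $f$ and do not enter $f(\cdot;\bar\omega)$, so they are not encoder parameters. Moreover, the odd branch vanishes at the tied point, $q_s=\kappa\sin(\phi)(h-h)=0$, as computed in the subsection on what phase release can add. The even branch $q_c=\beta\cos(\phi)(h\odot h)$ is a rescaling of a fixed function of $h$. Hence, for any value of $\phi$, $z$ is a function of $h$ alone together with scalar coefficients, and the normalization factor is a stop-gradient function of $u$ and $q$. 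The set of representable encoder features $\{h=f(x;\bar\omega)\}$ is therefore unchanged.

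Third, I will address the gradient or update side, which is the main obstacle. The antisymmetric gradient $g_d$ of Eq.~\eqref{eq:antisymmetric_gradient} and the finite step in Eq.~\eqref{eq:descent_test} formally refer to $(\omega_1,\omega_2)$, so I must argue these are evaluations rather than trainable blocks. I will use the state invariant that, before $C_t=1$, the optimizer updates only $\bar\omega$ (and $\theta$ after phase release). The probe perturbation $(\bar\omega\pm\eta d_{\mathrm{anti}})$ is applied in a temporary evaluation and then discarded, and no tensor $d$ is stored. Since the state sequence in Eq.~\eqref{eq:supp_state_sequence} is monotone and only Eq.~\eqref{eq:materialize} creates two stored tensors, every trainable encoder parameter before materialization lies in $\bar\omega$. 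This gives both claims of the proposition.
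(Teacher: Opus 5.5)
Your proposal is correct and follows essentially the same route as the paper. The paper argues by inspection that only $\bar\omega$ is stored, that both logical outputs reuse $f(x;\bar\omega)$, and that the phase variables only modulate interaction coefficients without introducing a second map $f(x;\omega_2)$; your third step, treating the OEC perturbations $\bar\omega\pm\eta d_{\mathrm{anti}}$ as transient evaluations rather than stored blocks, adds a detail the paper leaves implicit, though, like the paper's argument, it ultimately rests on the stated implementation contract rather than on a derivation.
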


\begin{proof}
The implementation stores only $\bar\omega$. Both logical path outputs reuse
$f(x;\bar\omega)$. The trainable phase variables affect interaction
coefficients. They do not introduce a second map $f(x;\omega_2)$ with
independent parameters.
\end{proof}

The proposition concerns physical representation capacity. It does not claim
that phase has no functional effect. Phase changes the local interaction
geometry. That effect is the probe mechanism.

\subsection{Norm Calibration Preserves the Local Geometry}
\label{sec:supp_normcal}

Define
\begin{equation}
 \gamma(u,q)=\frac{\RMS(u)}{\RMS(q)+\varepsilon}.
\end{equation}
The forward scale is positive when $\RMS(u)>0$. The stop-gradient operator
freezes its backward derivative.

\begin{proposition}[Forward amplitude control]
For $q\neq0$ and $\RMS(u)>0$,
\begin{equation}
 \RMS(\widehat q)
 =\RMS(u)\frac{\RMS(q)}{\RMS(q)+\varepsilon}
 <\RMS(u).
\end{equation}
The calibration preserves the sign of every scalar coefficient in $q$. It also preserves exchange invariance. If $\RMS(u)=0$, the displayed relation is non-strict rather than strict.
\end{proposition}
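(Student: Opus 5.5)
The plan is to compute directly from the definition $\widehat q=\gamma\,q$ with $\gamma=\gamma(u,q)=\RMS(u)/(\RMS(q)+\varepsilon)$. In the forward pass the stop-gradient is the identity, so $\gamma$ is just a scalar and we can treat it as one. I will use the homogeneity of the root-mean-square, $\RMS(\lambda v)=|\lambda|\,\RMS(v)$ for any scalar $\lambda$. This holds because $\RMS(v)=\big(\tfrac1n\sum_i v_i^2\big)^{1/2}$.

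First, the amplitude identity. Since $\RMS(u)\ge 0$ and $\RMS(q)+\varepsilon>0$ (with $\varepsilon>0$), we have $\gamma\ge0$. Homogeneity then gives
\begin{equation*}
\RMS(\widehat q)=\gamma\,\RMS(q)=\RMS(u)\,\frac{\RMS(q)}{\RMS(q)+\varepsilon}.
\end{equation*}
For strictness, $q\neq0$ gives $\RMS(q)>0$, so the ratio $\RMS(q)/(\RMS(q)+\varepsilon)$ lies strictly in $(0,1)$ because $\varepsilon>0$. Multiplying by $\RMS(u)>0$ gives the strict inequality. If $\RMS(u)=0$, both sides vanish and only $\le$ survives, which is the non-strict remark in the statement.

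Second, sign preservation. When $\RMS(u)>0$, $\gamma$ is strictly positive. Then $\widehat q_i=\gamma q_i$ has the same sign as $q_i$ for every entry, including zero entries. The same scalar multiplies both $q_c$ and $q_s$, so the signs of the phase-dependent coefficients are unchanged. These are the coefficients $\beta\cos\phi$ and $\kappa\sin\phi$, and their ratio is unchanged as well. If $\RMS(u)=0$, then $\widehat q=0$, and the claim holds only in the weak sense that no sign is reversed. I will state this caveat explicitly.

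Third, exchange invariance. By the Exchange-invariance proposition, $u$ and $q$ are invariant under $\Pi$, so $\RMS(u)$, $\RMS(q)$ and hence $\gamma$ are invariant. It follows that $\widehat q$ and $z=u+\rho\widehat q$ are invariant too.

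The only delicate step is making precise what sign preservation means, given the degenerate case $\gamma=0$ and the stop-gradient. The stop-gradient affects only derivatives, not forward values, so it does not affect any of the three claims. I would handle this with a one-line remark and keep the case split on $\RMS(u)>0$ versus $\RMS(u)=0$.
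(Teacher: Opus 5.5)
Your proposal is correct and follows essentially the same route as the paper: positivity of $\gamma$ gives sign preservation, homogeneity of $\RMS$ gives the identity, and invariance of $\RMS(u)$ and $\RMS(q)$ under $\Pi$ gives exchange invariance. You are more explicit than the paper that strictness relies on $\varepsilon>0$, and strictness in fact needs only $\RMS(u)>0$, not $q\neq0$. You also treat the degenerate case $\RMS(u)=0$ separately, which the paper does not.
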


\begin{proof}
The scale $\gamma$ is positive. Multiplication by a positive scalar preserves
all signs and all pairwise directions. Root-mean-square magnitude is
homogeneous. The stated identity follows directly. Exchange invariance follows
from the invariance of $\RMS(u)$ and $\RMS(q)$.
\end{proof}

Norm calibration controls amplitude. It does not create an antisymmetric
direction. It does not change the odd or even phase geometry. It also does not
serve as a growth certificate.

The stop-gradient is important for this separation of roles. During backward
propagation, the calibration factor is treated as a constant. For a local
perturbation $\delta q$,
\begin{equation}
 \delta\widehat q=\sg(\gamma)\,\delta q.
\end{equation}
Thus, the backward map applies a positive scalar to the pair-interaction
variation. It does not add a derivative through the ratio of RMS values. A
fully differentiable normalizer would introduce extra terms proportional to
$\nabla_q\RMS(q)$ and could rotate or attenuate the very direction that OEC is
attempting to measure. NormCal therefore stabilizes forward amplitude while
leaving the intended local coefficient geometry intact.

\subsection{Local Phase--Path Expansion}
\label{sec:supp_local_expansion}

Let $d$ denote the antisymmetric path coordinate. Let $\phi$ denote the
relative phase. Consider the local risk
\begin{equation}
\mathcal R(d,\phi)=\mathcal R_0+
\frac{a_\phi}{2}\phi^2+
\frac{b_\phi}{4}\phi^4+
\frac12d^\top A_dd+
\phi c^\top d+r(d,\phi).
\label{eq:supp_risk}
\end{equation}
Assume $b_\phi>0$ and $A_d\succ0$. Let $r$ be sufficiently smooth near the origin, with derivatives satisfying the stated higher-order remainder conditions. In particular, $\nabla_d r(d,\phi)=o(\|d\|+|\phi|)$. The cross term couples phase and path separation.

\begin{proposition}[Phase-first antisymmetric exposure]
As $a_\phi\to0^{-}$, assume $a_\phi<0$. The tied restriction $d=0$ has two local nonzero phase
stationary points:
\begin{equation}
 \phi_\pm=\pm\sqrt{-a_\phi/b_\phi}
 +o(\sqrt{|a_\phi|}).
 \label{eq:supp_phi_offsets}
\end{equation}
They produce opposite path forces:
\begin{equation}
 \nabla_d\mathcal R(0,\phi_\pm)
 =c\phi_\pm+o(|\phi_\pm|).
 \label{eq:supp_opposite_force}
\end{equation}
The conditional path branch satisfies
\begin{equation}
 d(\phi)=-A_d^{-1}c\phi+o(|\phi|).
 \label{eq:supp_conditional_branch}
\end{equation}
\end{proposition}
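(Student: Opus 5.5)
The proof will be a direct local analysis of Eq.~\eqref{eq:supp_risk}, organized around a rescaling of the phase by the natural scale $\sqrt{|a_\phi|}$. Three pieces are needed: a small-parameter scalar bifurcation on the tied slice, a derivative evaluation, and an implicit-function argument in $d$.

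\textbf{Step 1: tied stationary points.} Restricting to $d=0$ gives
\begin{equation}
 g(\phi)=\mathcal R(0,\phi)=\mathcal R_0+\frac{a_\phi}{2}\phi^2+\frac{b_\phi}{4}\phi^4+r(0,\phi),
\end{equation}
so
\begin{equation}
 g'(\phi)=a_\phi\phi+b_\phi\phi^3+\partial_\phi r(0,\phi).
\end{equation}
Exchange invariance (Section~\ref{sec:supp_symmetry}) makes $r(0,\cdot)$ even. I will assume, as part of the ``stated higher-order remainder conditions'', that $\partial_\phi r(0,\phi)=o(|\phi|^3)$ uniformly as $a_\phi\to0^-$; this means $r$ collects terms beyond the quartic truncation. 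Substitute $\phi=\sqrt{-a_\phi/b_\phi}\,s$ and set $\mu=\sqrt{-a_\phi/b_\phi}$. Dividing $g'$ by $b_\phi\mu^3$ gives
\begin{equation}
 F(s,\mu)=-s+s^3+o(1),
\end{equation}
which extends continuously to $\mu=0$. The limit equation $-s+s^3=0$ has the nondegenerate roots $s=\pm1$, since its derivative there is $2\neq0$. Either the implicit function theorem (if $r$ is $C^1$ in the scaled variables) or a sign-change argument on small intervals around $\pm1$ then gives roots $s_\pm=\pm1+o(1)$. This yields Eq.~\eqref{eq:supp_phi_offsets}. The same argument also covers the excluded root $s=0$, which gives $\phi=0$ and is why the statement says ``nonzero''.

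\textbf{Step 2: path forces.} Differentiating in $d$ gives
\begin{equation}
 \nabla_d\mathcal R(d,\phi)=A_dd+c\phi+\nabla_dr(d,\phi).
\end{equation}
At $d=0$ this is $c\phi+\nabla_dr(0,\phi)$. The hypothesis $\nabla_dr=o(\|d\|+|\phi|)$ makes the remainder $o(|\phi_\pm|)$, and $\phi_\pm\to0$ as $a_\phi\to0^-$. This is Eq.~\eqref{eq:supp_opposite_force}. The two forces are opposite to leading order because $\phi_+=-\phi_-(1+o(1))$.

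\textbf{Step 3: conditional branch.} I will solve $\nabla_d\mathcal R(d,\phi)=0$ for $d$ with $\phi$ treated as a parameter; this is ``path release'' with the phase held fixed. Define
\begin{equation}
 G(d,\phi)=A_dd+c\phi+\nabla_dr(d,\phi).
\end{equation}
Then $G(0,0)=0$ and $\partial_dG(0,0)=A_d+\partial_d\nabla_dr(0,0)=A_d$, where the last equality needs $r$ to be $C^2$ with a vanishing Hessian at the origin. Since $A_d\succ0$, $\partial_dG(0,0)$ is invertible, and the implicit function theorem gives a unique local $C^1$ branch $d(\phi)$ with $d(0)=0$. Rearranging the equation gives
\begin{equation}
 d=-A_d^{-1}c\phi-A_d^{-1}\nabla_dr(d,\phi).
\end{equation}
Because $d(\phi)$ is $C^1$, $\|d(\phi)\|=O(|\phi|)$, so $\nabla_dr(d(\phi),\phi)=o(|\phi|)$ and Eq.~\eqref{eq:supp_conditional_branch} follows. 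Positive definiteness also shows this branch is a local minimizer in $d$ for each fixed $\phi$.

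The main obstacle is Step 1. The parameter $a_\phi$ varies at the same time as the equation is solved, so I need uniform control of the remainder in the rescaled variable. Specifically, I need $\partial_\phi r(0,\mu s)/\mu^3\to0$ uniformly for $s$ near $\pm1$. I will first state this explicitly as the interpretation of the higher-order conditions, for example $r(0,\phi)=O(\phi^6)$ or simply $o(\phi^4)$ in $C^1$. If the implicit function theorem is awkward at $\mu=0$, I will fall back on the intermediate value theorem: $F(\cdot,\mu)$ takes opposite signs at $\pm1\pm\delta$ for every fixed $\delta>0$ once $\mu$ is small, and letting $\delta\to0$ yields $s_\pm=\pm1+o(1)$.
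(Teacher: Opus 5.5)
Your proposal is correct and follows the same three-step route as the paper's proof. Step one uses stationarity of the tied restriction $a_\phi\phi+b_\phi\phi^3+o(|\phi|^3)=0$; step two differentiates directly in $d$ at $d=0$; step three applies the implicit function theorem, using invertibility of $A_d$ together with $\nabla_d r=o(\|d\|+|\phi|)$. The only difference is added rigor: you justify the paper's one-line ``the two nonzero roots give'' step through the rescaling $\phi=\sqrt{-a_\phi/b_\phi}\,s$, and you state explicitly the uniform remainder control (and the $C^2$ regularity of $r$) that the paper leaves implicit.
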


\begin{proof}
Set $d=0$. Phase stationarity gives
\begin{equation}
 a_\phi\phi+b_\phi\phi^3+o(|\phi|^3)=0.
\end{equation}
The two nonzero roots give Eq.~\eqref{eq:supp_phi_offsets}. Differentiate
Eq.~\eqref{eq:supp_risk} with respect to $d$. Evaluation at $d=0$ gives
Eq.~\eqref{eq:supp_opposite_force}. The matrix $A_d$ is nonsingular, and $\nabla_d r(d,\phi)=o(\|d\|+|\phi|)$. The implicit function theorem therefore gives a local branch with derivative
$d'(0)=-A_d^{-1}c$.
\end{proof}

\begin{corollary}[Sign reversal]
The leading path response is odd in phase:
\begin{equation}
 d(-\phi)=-d(\phi)+o(|\phi|).
\end{equation}
The two phase probes therefore expose opposite orientations of the same local
separation axis.
\end{corollary}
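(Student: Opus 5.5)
The plan is to read the sign-reversal statement directly off the conditional branch in Eq.~\eqref{eq:supp_conditional_branch} from the preceding Proposition (Phase-first antisymmetric exposure). I would not redo any local analysis. First I would record what that Proposition supplies. Because $A_d\succ0$ is nonsingular and $\nabla_d r(d,\phi)=o(\|d\|+|\phi|)$, the implicit function theorem applied to $\nabla_d\mathcal R(d,\phi)=0$ gives a unique local branch $\phi\mapsto d(\phi)$. This branch is defined on a full two-sided neighbourhood of $\phi=0$, so $d(\phi)$ and $d(-\phi)$ are both defined by the same function. It satisfies $d(\phi)=-A_d^{-1}c\phi+\varepsilon(\phi)$ with $\varepsilon(\phi)=o(|\phi|)$.

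The key step is then a substitution. Replacing $\phi$ by $-\phi$ gives
\begin{equation}
 d(-\phi)=A_d^{-1}c\phi+\varepsilon(-\phi).
\end{equation}
Adding this to the expansion of $d(\phi)$, the linear terms cancel, leaving $d(-\phi)+d(\phi)=\varepsilon(\phi)+\varepsilon(-\phi)$. Since $|-\phi|=|\phi|$, both remainders are $o(|\phi|)$, and so is their sum. This yields $d(-\phi)=-d(\phi)+o(|\phi|)$.

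For the interpretive sentence, I would evaluate the branch at the two phase offsets $\phi_\pm$ from Eq.~\eqref{eq:supp_phi_offsets}. These satisfy $\phi_-=-\phi_++o(\sqrt{|a_\phi|})$. Hence $d(\phi_-)$ and $d(\phi_+)$ are, to leading order, $\mp A_d^{-1}c\sqrt{-a_\phi/b_\phi}$. These are opposite multiples of the single vector $A_d^{-1}c$, that is, opposite orientations of one separation axis. This matches the opposite forces $c\phi_\pm$ in Eq.~\eqref{eq:supp_opposite_force}, and it is consistent with the absolute-cosine convention used in OEC.

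The only point needing care is that the $o(\cdot)$ remainder is the same function on both sides, which uniqueness of the implicit-function branch guarantees; I expect this to be the main (mild) obstacle. Optionally, I would strengthen the result to exact oddness. The exchange invariance $\mathcal R(d,\phi)=\mathcal R(-d,-\phi)$ from Section~\ref{sec:supp_symmetry} makes $(d,\phi)\mapsto(-d(-\phi),\phi)$ another solution branch through the origin. Uniqueness then forces $d(-\phi)=-d(\phi)$ exactly whenever $r$ respects the exchange symmetry. The corollary as stated, however, only requires the leading-order cancellation.
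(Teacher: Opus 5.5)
Your proof is correct and matches how the paper gets the corollary: the paper gives no separate proof and reads the oddness directly off the linear leading term of the conditional branch $d(\phi)=-A_d^{-1}c\phi+o(|\phi|)$, which is exactly your substitution $\phi\mapsto-\phi$. One harmless sign slip in your interpretive step: the correct pairing is $d(\phi_\pm)\approx\mp A_d^{-1}c\sqrt{-a_\phi/b_\phi}$, so $d(\phi_-)$ carries the plus sign and $d(\phi_+)$ the minus sign, the reverse of what you wrote, and the conclusion of opposite orientations along the axis spanned by $A_d^{-1}c$ is unaffected.
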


Eliminating the locally stable path coordinate gives an equivalent reduced
phase risk. Substituting $d(\phi)=-A_d^{-1}c\phi+o(|\phi|)$ into
Eq.~\eqref{eq:supp_risk} yields
\begin{equation}
 \mathcal R_{\mathrm{red}}(\phi)=\mathcal R_0+
 \frac12\left(a_\phi-c^\top A_d^{-1}c\right)\phi^2+
 \frac{b_\phi}{4}\phi^4+o(\phi^2).
 \label{eq:supp_reduced_risk}
\end{equation}
The coupling therefore lowers the effective phase curvature by
$c^\top A_d^{-1}c$. This connects the phase-first branch calculation to the
Schur-complement condition below. A stable restricted path block can coexist
with a descending coupled mode.

The result is local. It provides a sufficient mechanism. It does not make
phase necessary for all growth events. It does not prove global convergence.
It does not prove persistent specialization after release.

\subsection{Schur-Complement View of Coupled Curvature}
\label{sec:supp_schur}

The quadratic Hessian block at the tied state is
\begin{equation}
 H=
 \begin{bmatrix}
 A_d & c\\
 c^\top & a_\phi
 \end{bmatrix}.
\end{equation}

\begin{proposition}[Coupled negative mode]
Assume $A_d\succ0$. The matrix $H$ has a negative eigenvalue if and only if
\begin{equation}
 a_\phi-c^\top A_d^{-1}c<0.
 \label{eq:supp_schur_condition}
\end{equation}
\end{proposition}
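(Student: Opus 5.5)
The plan is to use the block-diagonalizing congruence associated with the Schur complement, combined with Sylvester's law of inertia. Define the invertible matrix
\begin{equation*}
 T=
 \begin{bmatrix}
 I & -A_d^{-1}c\\
 0 & 1
 \end{bmatrix}.
\end{equation*}
Since $A_d\succ0$, the inverse $A_d^{-1}$ exists and is symmetric, so $T$ is well defined. A direct block multiplication should give
\begin{equation*}
 T^\top H T=
 \begin{bmatrix}
 A_d & 0\\
 0 & a_\phi-c^\top A_d^{-1}c
 \end{bmatrix}.
\end{equation*}
The only computation here is to check that the off-diagonal blocks cancel: $A_d(-A_d^{-1}c)+c=0$. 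The lower-right entry becomes $c^\top A_d^{-1}A_dA_d^{-1}c-2c^\top A_d^{-1}c+a_\phi$, which simplifies to the Schur complement $s:=a_\phi-c^\top A_d^{-1}c$.

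Next we invoke Sylvester's law of inertia. Since $T$ is nonsingular and $H$ is symmetric, $H$ and $T^\top H T$ have the same numbers of positive, negative, and zero eigenvalues. The block-diagonal form has eigenvalues equal to those of $A_d$, which are all positive, together with the single scalar $s$. Hence $H$ has a negative eigenvalue if and only if $s<0$, which is exactly Eq.~\eqref{eq:supp_schur_condition}. Moreover, the negative eigenvalue is then unique.

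The argument can also be phrased variationally, without quoting Sylvester's law. For the direction $v=T e_{\phi}=(-A_d^{-1}c,\,1)$ we get $v^\top H v=s$. So if $s<0$, the Rayleigh quotient of $H$ is negative along $v$, and $H$ has a negative eigenvalue. Conversely, suppose $H$ has a negative eigenvalue. Take any $w=(x,t)$ and write $w=Ty$ with $y=(x+tA_d^{-1}c,\,t)$. Then
\begin{equation*}
 w^\top Hw=y_1^\top A_dy_1+s\,t^2.
\end{equation*}
If $s\ge0$, this expression is nonnegative for every $w$, so $H\succeq0$, which is a contradiction. This direction also identifies the descending coupled mode, which is tilted along $-A_d^{-1}c$. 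That is the same direction as the conditional branch $d(\phi)=-A_d^{-1}c\phi$ of the preceding proposition, and it matches the reduced curvature in Eq.~\eqref{eq:supp_reduced_risk}.

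There is no real obstacle. The only point needing care is the converse direction: negativity of $H$ must force $s<0$, and this is where the positive definiteness of $A_d$ is essential. Both the invertibility of $A_d$ and the positivity of the $A_d$ block in the congruence rely on it.
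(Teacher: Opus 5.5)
Your proof is correct and matches the paper's: the paper applies exactly this congruence by $T$ to block-diagonalize $H$ into $A_d$ and the Schur complement $a_\phi-c^\top A_d^{-1}c$, then invokes preservation of inertia, and your variational version is the same congruence used as a change of variables. One minor caveat on your aside: $v=(-A_d^{-1}c,1)$ has negative Rayleigh quotient but is not in general the eigenvector of the negative eigenvalue, which is proportional to $(-(A_d-\lambda I)^{-1}c,1)$, so $v$ indicates the coupled mode rather than identifying it.
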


\begin{proof}
Apply the congruence transform
\begin{equation}
 \begin{bmatrix}
 I & -A_d^{-1}c\\0&1
 \end{bmatrix}^{\!\top}
 H
 \begin{bmatrix}
 I & -A_d^{-1}c\\0&1
 \end{bmatrix}
 =
 \begin{bmatrix}
 A_d&0\\0&a_\phi-c^\top A_d^{-1}c
 \end{bmatrix}.
\end{equation}
Congruence preserves inertia. The result follows.
\end{proof}

This view separates block stability from coupled stability. Both restricted
blocks can be positive while the coupled system has a negative mode. OEC does
not use this algebraic condition as a direct trigger. It uses measured
training-time evidence.

\subsection{Finite-Step Descent and the OEC Descent Test}
\label{sec:supp_descent}

Let $g_d$ denote the projected gradient on the zero-sum coordinate. The
implemented direction is
\begin{equation}
 v_d=-\frac{g_d}{\|g_d\|_2+\varepsilon}.
\end{equation}
For the local bound, assume $g_d\neq0$ and use the associated unit direction
$\widetilde v_d=-g_d/\|g_d\|_2$.

\begin{lemma}[Small-step descent]
Assume the directional gradient is $L_d$-Lipschitz near the tied point. For
$\widetilde v_d=-g_d/\|g_d\|_2$,
\begin{equation}
 \mathcal L(\eta \widetilde v_d)-\mathcal L(0)
 \le -\eta\|g_d\|_2+\frac{L_d}{2}\eta^2.
 \label{eq:supp_descent_bound}
\end{equation}
The step is descending for $0<\eta<2\|g_d\|_2/L_d$.
\end{lemma}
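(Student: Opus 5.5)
The bound is the standard descent lemma applied along the one-dimensional ray $\eta\mapsto\mathcal L(\eta\widetilde v_d)$. The loss is written in the zero-sum coordinate, so $\mathcal L(\eta\widetilde v_d)$ denotes $\mathcal L(\bar\omega+\eta\widetilde v_d,\bar\omega-\eta\widetilde v_d)$ with $\mathcal L(0)$ the tied value. I will read the hypothesis ``the directional gradient is $L_d$-Lipschitz near the tied point'' as follows: for all $\eta$ in the relevant interval, $\|\nabla_d\mathcal L(\eta\widetilde v_d)-\nabla_d\mathcal L(0)\|_2\le L_d\eta$, with $\nabla_d\mathcal L(0)=g_d$. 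Nothing beyond this one-dimensional smoothness is needed.

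\textbf{Step 1 (path integral).} Define $\psi(s)=\mathcal L(s\widetilde v_d)$ for $s\in[0,\eta]$. By the chain rule in the antisymmetric coordinate, $\psi'(s)=\langle\nabla_d\mathcal L(s\widetilde v_d),\widetilde v_d\rangle$. The fundamental theorem of calculus then gives
\begin{equation*}
\mathcal L(\eta\widetilde v_d)-\mathcal L(0)=\eta\langle g_d,\widetilde v_d\rangle+\int_0^\eta\langle\nabla_d\mathcal L(s\widetilde v_d)-g_d,\widetilde v_d\rangle\,ds .
\end{equation*}

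\textbf{Step 2 (evaluate both terms).} Since $\widetilde v_d=-g_d/\|g_d\|_2$, the linear term equals $-\eta\|g_d\|_2$. For the integral, Cauchy--Schwarz together with $\|\widetilde v_d\|_2=1$ bounds the integrand by $L_d s$. Integrating gives at most $L_d\eta^2/2$, which yields Eq.~\eqref{eq:supp_descent_bound}.

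\textbf{Step 3 (descent range).} The right-hand side $-\eta\|g_d\|_2+\tfrac{L_d}{2}\eta^2=\eta\bigl(\tfrac{L_d}{2}\eta-\|g_d\|_2\bigr)$ is strictly negative exactly when $0<\eta<2\|g_d\|_2/L_d$. So the loss change is negative on that range. If $L_d=0$ the range is all $\eta>0$.

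\textbf{Main obstacle.} The only delicate point is the domain of validity. The Lipschitz property holds only ``near the tied point'', so the whole segment $\{s\widetilde v_d: s\in[0,\eta]\}$ must lie in that neighbourhood. I would state that $\eta$ is additionally restricted to the radius of that neighbourhood.

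I would also note that the implemented $v_d$ differs from $\widetilde v_d$ only by the positive factor $\|g_d\|_2/(\|g_d\|_2+\varepsilon)$. The bound therefore transfers to $v_d$ after rescaling $\eta$ by that factor, which links the lemma to the OEC quantity $D_t$ in Eq.~\eqref{eq:descent_test}.
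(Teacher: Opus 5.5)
Your proposal is correct and takes the same route as the paper, which applies the descent lemma to the scalar restriction $\eta\mapsto\mathcal L(\eta\widetilde v_d)$, reads off the first-order term $-\eta\|g_d\|_2$, and bounds the remainder by $L_d\eta^2/2$; your Steps 1--3 spell out that lemma explicitly through the fundamental theorem of calculus and Cauchy--Schwarz. Your caveat that the whole segment must stay inside the neighbourhood where the Lipschitz bound holds, and your rescaling remark linking $\widetilde v_d$ to the implemented $v_d$, are correct refinements that the paper leaves implicit.
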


\begin{proof}
Apply the descent lemma to the scalar restriction along
$\widetilde v_d$. The first-order term is $-\eta\|g_d\|_2$. The remainder is
bounded by $L_d\eta^2/2$.
\end{proof}

OEC does not estimate $L_d$. It directly evaluates a finite step. It performs
a frozen-calibration test and a dynamically recalibrated test. Both deltas
must be negative. This design guards against a direction with a favorable
surrogate but an unfavorable deployed forward map.

\subsection{Formal Optimization Evidence Certification}
\label{sec:supp_oec}

At probe epoch $t$, let
\begin{equation}
 g_d^{(t)}=P_d^\top\nabla_{\omega_1,\omega_2}\mathcal L_t.
\end{equation}
The normalized candidate direction is
\begin{equation}
 d_{\mathrm{anti}}^{(t)}=-
 \frac{g_d^{(t)}}{\|g_d^{(t)}\|_2+\varepsilon}.
\end{equation}
OEC evaluates five evidence groups.

\paragraph{Direction strength.}
\begin{equation}
 S_t=\|g_d^{(t)}\|_2.
\end{equation}
It must exceed a frozen numerical floor.

\paragraph{Cross-probe consistency.}
Split the training population into two deterministic halves. Each half yields
a candidate direction. The half score is
\begin{equation}
 K_t^{\mathrm{half}}=
 \left|\cos\!\left(
 d_{\mathrm{anti}}^{(t,a)},d_{\mathrm{anti}}^{(t,b)}
 \right)\right|.
\end{equation}
Fixed shards produce the median score
\begin{equation}
 K_t^{\mathrm{shard}}=
 \operatorname*{median}_{i<j}
 \left|\cos\!\left(
 d_{\mathrm{anti}}^{(t,i)},d_{\mathrm{anti}}^{(t,j)}
 \right)\right|.
\end{equation}

\paragraph{Finite-step descent.}
\begin{align}
 \Delta_t^{\mathrm{frozen}}&<0,\\
 \Delta_t^{\mathrm{dynamic}}&<0.
\end{align}
The second test recomputes the dynamic calibration state.

\paragraph{Dynamic safety.}
Path-swap loss residuals, output residuals, finite-value checks, and NormCal
checks must stay inside frozen tolerances.

\paragraph{Temporal persistence.}
Let $e_t$ collect all instantaneous evidence. Let
$\mathcal E_{\mathrm{safe}}$ denote the accepted set. Let $W_t$ denote the
frozen persistence window. The full decision is
\begin{equation}
 C_t=
 \ind\!\left[e_\tau\in\mathcal E_{\mathrm{safe}}
 \text{ for every }\tau\in W_t\right].
 \label{eq:supp_certificate}
\end{equation}

\begin{proposition}[Orientation-free consistency]
Absolute cosine consistency is invariant to an independent sign reversal of
any probe direction.
\end{proposition}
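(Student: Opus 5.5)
The statement follows from the homogeneity of the cosine under scalar multiplication, combined with the evenness of the absolute value. First I fix the objects. A probe direction $d_{\mathrm{anti}}^{(t,i)}$ is a nonzero vector. This holds whenever the corresponding $g_d^{(t,i)}\neq0$, since the normalization in the definition of $d_{\mathrm{anti}}$ divides by $\|g_d\|_2+\varepsilon>0$. An independent sign reversal is a choice of signs $s_i\in\{-1,+1\}$, one for each probe, which replaces $d_{\mathrm{anti}}^{(t,i)}$ with $s_i d_{\mathrm{anti}}^{(t,i)}$. The claim is that $K_t^{\mathrm{half}}$ and $K_t^{\mathrm{shard}}$ are unchanged for every such choice.

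The key step is a pairwise identity. For nonzero $x,y$ and scalars $s,s'\in\{-1,+1\}$,
\begin{equation*}
\cos(sx,s'y)=\frac{ss'\,x^\top y}{|s|\,\|x\|_2\,|s'|\,\|y\|_2}=ss'\cos(x,y),
\end{equation*}
by bilinearity of the inner product and absolute homogeneity of the norm. Taking absolute values and using $|ss'|=1$ gives $|\cos(sx,s'y)|=|\cos(x,y)|$. Applied to the two halves $(a,b)$, this shows at once that $K_t^{\mathrm{half}}$ is invariant.

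For the shard score, I apply the same identity to every pair $i<j$. The multiset $\{|\cos(d^{(t,i)},d^{(t,j)})|\}_{i<j}$ is then unchanged entry by entry. The median depends only on this multiset, so $K_t^{\mathrm{shard}}$ is also invariant. Because the signs $s_i$ are arbitrary and independent across probes, the invariance covers any subset of flipped directions. The result also extends to any positive rescaling, which covers the $\varepsilon$-normalized directions.

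There is no real obstacle here. The only point requiring care is the degenerate case where some $g_d^{(t,i)}=0$. Then $d_{\mathrm{anti}}^{(t,i)}=0$, the cosine is undefined, and the statement should be read with nonzero probe directions. Alternatively, any fixed convention for the zero case applies to both the flipped and unflipped vectors, since $-0=0$. I would state this assumption explicitly. I would close by noting the connection to exchange symmetry: reversing a probe's sign corresponds to the path swap $\Pi$, which maps $d$ to $-d$.
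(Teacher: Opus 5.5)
Your proof is correct and matches the paper's argument, which consists of the same pairwise identity $|\cos(s_i d_i, s_j d_j)| = |s_i s_j \cos(d_i,d_j)| = |\cos(d_i,d_j)|$ for $s_i,s_j\in\{-1,1\}$. Your extensions to the shard median (the multiset of pairwise values is unchanged entry by entry) and to the degenerate case $g_d=0$ are sound; the paper leaves both implicit.
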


\begin{proof}
For signs $s_i,s_j\in\{-1,1\}$,
\begin{equation}
 \left|\cos(s_i d_i,s_j d_j)\right|
 =\left|s_is_j\cos(d_i,d_j)\right|
 =\left|\cos(d_i,d_j)\right|.
\end{equation}
\end{proof}

\begin{proposition}[Conservative decision rule]
Path materialization occurs only when every frozen component passes over the
full persistence window. A failed certificate implies no structural action.
It does not imply that one-path capacity is sufficient.
\end{proposition}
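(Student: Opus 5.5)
The argument reads the certificate in Eq.~\eqref{eq:supp_certificate} as an indicator over a conjunction and checks the two claims separately. Let $\mathcal E_{\mathrm{safe}}$ be the intersection of the component acceptance sets: the strength floor on $S_t$, the two consistency thresholds on $K_t^{\mathrm{half}}$ and $K_t^{\mathrm{shard}}$, the strict sign conditions $\Delta_t^{\mathrm{frozen}}<0$ and $\Delta_t^{\mathrm{dynamic}}<0$, and the tolerance boxes collected in $U_t$.

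For the first claim, we start from the transition rule in Section~\ref{sec:transition}. The materialization in Eq.~\eqref{eq:materialize} is executed only when $C_t=1$, and the monotone state sequence \eqref{eq:supp_state_sequence} has no other edge into \textsc{Path-Released}. Since $C_t=1$ holds exactly when $e_\tau\in\mathcal E_{\mathrm{safe}}$ for every $\tau\in W_t$, unfolding the intersection gives that every frozen component passes at every epoch of the window. Contrapositively, if some component fails at some $\tau\in W_t$, then $C_t=0$. By the stated implementation facts (no fallback split, no validation-driven split, no task-ID trigger), the state is then left unchanged, so no structural action occurs. This part is purely definitional.

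For the second claim, that a failed certificate does not imply one-path sufficiency, a non-implication is established by a counterexample. We exhibit a configuration with $C_t=0$ in which one path is nonetheless insufficient. The cleanest route uses the Schur-complement proposition. Take $A_d\succ0$ and $c\neq0$ with $a_\phi-c^\top A_d^{-1}c<0$, so the coupled Hessian $H$ has a negative mode and lower risk is available by separating paths. Then choose the probe noise so that one frozen criterion fails. For instance, make the two half-sample directions nearly orthogonal so that $K_t^{\mathrm{half}}$ falls below its threshold, or make the consistency pass only for fewer consecutive epochs than $|W_t|$. Persistence then fails, and $C_t=0$.

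As an alternative, we can cite the empirical \textsc{Medium} regime. There the joint rank is $4$, which exceeds the path rank of $3$, so one path is provably insufficient, yet the certificate failed the persistence criterion. This is a concrete witness to the non-implication.

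The main obstacle is making the counterexample rigorous rather than illustrative. The thresholds are frozen but unspecified, so I would phrase the construction relative to arbitrary fixed thresholds: for any positive threshold there is a noise level that drives the empirical consistency below it while the population Hessian keeps its negative mode. Failing that, I would fall back on the \textsc{Medium} rank argument, which needs only the rank inequality $4>3$.
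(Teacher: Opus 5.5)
Your proposal is correct and follows the paper's proof. The first claim is read directly off the conjunction in Eq.~\eqref{eq:supp_certificate}, and the non-implication rests on a false negative being possible, with the rank-four \textsc{Medium} regime as the concrete witness, exactly as in the paper; the Schur-complement construction you lead with stays heuristic, because the local model has no probe-noise parameter and a negative coupled mode is a local curvature fact rather than the rank notion of one-path insufficiency, so the \textsc{Medium} witness is what actually carries the second claim.
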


\begin{proof}
The first statement follows from Eq.~\eqref{eq:supp_certificate}. The second
statement follows because $C_t=0$ can result from any failed evidence term.
A false negative is therefore possible. The controlled rank-four case is an
observed example.
\end{proof}

All probe samples come from the training population. Validation and test data
never enter the structural decision.

\begin{table*}[!t]
\centering

\small
\begin{tabular}{p{0.17\textwidth}p{0.34\textwidth}p{0.38\textwidth}}
\toprule
Evidence group & Failure mode screened by the test & What a pass does not establish \\
\midrule
Direction strength & Numerical noise or an effectively zero antisymmetric force & Population agreement, descent, or a need for extra capacity \\
Cross-probe consistency & A direction that changes across halves or deterministic shards & A negative finite-step loss change or stable execution \\
Finite-step descent & A large and repeatable direction that is not locally useful under the deployed forward map & Long-horizon convergence after release \\
Dynamic safety & Non-finite values, unstable calibration, or broken exchange symmetry & Task specialization or compute savings \\
Temporal persistence & A one-epoch event caused by sampling variation or a transient optimization state & Universal detection of every weak bottleneck \\
\bottomrule
\end{tabular}
\caption{Interpretation of the five OEC evidence groups. Passing one group is
not sufficient; the certificate is the conjunction over all groups and the
persistence window.}
\label{tab:supp_oec_interpretation}
\end{table*}

Table~\ref{tab:supp_oec_interpretation} clarifies why OEC is intentionally
conservative. Direction strength answers whether a nontrivial local force is
present. Consistency asks whether the axis is population-level rather than
subset-specific. The two descent tests evaluate the actual finite perturbation
under frozen and recomputed calibration. Safety protects the structural
transition itself. Persistence turns the instantaneous conjunction into a
training-time decision. No single statistic is interpreted as a capacity
certificate in isolation.

\subsubsection{End-to-end decision sequence}
The implemented transition follows a fixed order.
\begin{enumerate}
 \item Train the exact-shared model with one physical path.
 \item At a scheduled phase-probe epoch, evaluate phase curvature and the
 frozen phase-safety checks on the training population.
 \item Release the latent relative phase only after the phase persistence rule
 passes. Keep the physical path parameters tied.
 \item Run the fixed phase-adaptation window. This allows the odd interaction
 to expose an antisymmetric force.
 \item Form deterministic population halves and shards.
 Construct one zero-sum direction from each partition.
 \item Evaluate strength, absolute-cosine agreement, frozen and dynamic
 finite-step descent, and all safety residuals.
 \item Materialize two paths only after the full OEC vector passes for the
 frozen number of consecutive probe epochs.
 \item Copy the mother parameters and optimizer state symmetrically. Continue
 ordinary first-order multi-task optimization with no diversity loss.
\end{enumerate}
This sequence distinguishes phase release from path release. A run can pass the
first decision and fail the second. It can also fail the phase decision and
never execute a path-force probe. Both outcomes are represented explicitly in
the per-seed evidence.

\subsection{Zero-Sum Materialization and Local Continuity}
\label{sec:supp_materialization}

A successful path certificate creates
\begin{equation}
 \omega_1=\bar\omega+\epsilon_d d_{\mathrm{anti}},
 \qquad
 \omega_2=\bar\omega-\epsilon_d d_{\mathrm{anti}}.
 \label{eq:supp_split}
\end{equation}

\begin{proposition}[Centroid preservation]
The zero-sum split preserves the mother parameter exactly:
\begin{equation}
 \tfrac12(\omega_1+\omega_2)=\bar\omega.
\end{equation}
\end{proposition}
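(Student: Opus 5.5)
The identity follows by direct substitution of the split in Eq.~\eqref{eq:supp_split} into the symmetric coordinate defined in Eq.~\eqref{eq:supp_coordinates}. I would write
\begin{equation}
 \tfrac12(\omega_1+\omega_2)
 =\tfrac12\left[(\bar\omega+\epsilon_d d_{\mathrm{anti}})
 +(\bar\omega-\epsilon_d d_{\mathrm{anti}})\right].
\end{equation}
Linearity of tensor addition and scalar multiplication then cancels the two perturbation terms, since $\epsilon_d d_{\mathrm{anti}}$ enters with opposite signs. This leaves $\tfrac12\cdot 2\bar\omega=\bar\omega$.

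As a companion statement, I would record the antisymmetric coordinate by the same computation:
\begin{equation}
 d=\tfrac12(\omega_1-\omega_2)=\epsilon_d d_{\mathrm{anti}}.
\end{equation}
The split therefore moves the state purely along the antisymmetric axis. This is what "zero-sum" means here.

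The only point needing care is that the perturbation is applied ``in normalized parameter coordinates,'' as stated in Section~\ref{sec:transition}. I would treat that normalization as a fixed linear (e.g.\ per-tensor scaling) map applied identically to both paths. The cancellation then survives: if the physical perturbation is $\pm\epsilon_d N(d_{\mathrm{anti}})$ with the same $N$ for both paths, the sum is still $2\bar\omega$. This holds even for nonlinear $N$, since only the sign differs between the two paths.

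The main obstacle is therefore only to make explicit that the same perturbation tensor, with opposite sign, is added to each path. Once that is stated, the result is exact rather than approximate, and no smoothness or local-expansion assumptions from Proposition~\ref{prop:phase_first} are needed.
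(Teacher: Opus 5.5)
Your proof is correct and is exactly the paper's argument: substitute the zero-sum split $\omega_{1,2}=\bar\omega\pm\epsilon_d d_{\mathrm{anti}}$ into $\tfrac12(\omega_1+\omega_2)$, and the two perturbation terms cancel. Your extra remarks on $d=\epsilon_d d_{\mathrm{anti}}$ and on normalized coordinates are sound, since cancellation only needs one tensor added with opposite signs, but the statement as written does not require them.
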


\begin{proof}
Substitute Eq.~\eqref{eq:supp_split}. The perturbations cancel.
\end{proof}

\begin{proposition}[First-order cancellation for symmetric fusion]
Let $F(\omega_1,\omega_2)$ be twice continuously differentiable and exchange symmetric. Then
\begin{equation}
 F(\bar\omega+\epsilon d,\bar\omega-\epsilon d)
 =F(\bar\omega,\bar\omega)+O(\epsilon^2).
 \label{eq:supp_first_order_cancel}
\end{equation}
\end{proposition}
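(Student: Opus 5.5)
The plan is to Taylor-expand the scalar function $g(\epsilon)=F(\bar\omega+\epsilon d,\bar\omega-\epsilon d)$ around $\epsilon=0$ and show that its linear coefficient vanishes because of exchange symmetry. Since $F$ is $C^2$, $g$ is $C^2$ on a neighbourhood of $0$. Taylor's theorem with Lagrange or integral remainder then gives
\begin{equation*}
g(\epsilon)=g(0)+\epsilon g'(0)+O(\epsilon^2).
\end{equation*}
The $O(\epsilon^2)$ constant is controlled by the supremum of $\|\nabla^2F\|\,\|d\|^2$ on a compact neighbourhood of $(\bar\omega,\bar\omega)$. So the whole claim reduces to showing $g'(0)=0$.

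\textbf{Step 1 (directional derivative).} By the chain rule,
\begin{equation*}
g'(0)=\langle\nabla_{\omega_1}F(\bar\omega,\bar\omega),d\rangle-\langle\nabla_{\omega_2}F(\bar\omega,\bar\omega),d\rangle.
\end{equation*}
This is the derivative of $F$ along the zero-sum (antisymmetric) coordinate.

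\textbf{Step 2 (symmetry kills it).} Exchange symmetry means $F(\omega_1,\omega_2)=F(\omega_2,\omega_1)$. I will differentiate this identity with respect to $\omega_1$ and evaluate at $\omega_1=\omega_2=\bar\omega$. This gives $\nabla_{\omega_1}F(\bar\omega,\bar\omega)=\nabla_{\omega_2}F(\bar\omega,\bar\omega)$, so $g'(0)=0$.

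A shorter equivalent route is to note that $g(-\epsilon)=F(\bar\omega-\epsilon d,\bar\omega+\epsilon d)=g(\epsilon)$ by symmetry. Hence $g$ is even, and its derivative at $0$ vanishes. This mirrors the argument used earlier in the paper, where the tied restriction is even in $d$ and so $\nabla_d\mathcal R=0$ at $\phi=0$.

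\textbf{Step 3 (conclusion).} Substituting $g'(0)=0$ into the expansion gives Eq.~\eqref{eq:supp_first_order_cancel}. I would also remark that the argument applies coordinatewise to vector-valued $F$, such as the fused representation $z$ or the task outputs. For the EMAN forward map, exchange symmetry is supplied by the Exchange invariance proposition, provided the phases are also swapped or held at $\phi=0$. This yields first-order continuity of outputs and losses at materialization.

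There is no real obstacle in this argument. The only point needing care is the scope of the exchange symmetry. It must hold as a function of $(\omega_1,\omega_2)$ with all other state fixed, so either $\phi=0$ or the phases are included in the swap, with the NormCal factor treated as invariant. I will state this interpretation explicitly.
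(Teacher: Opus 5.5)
Your proposal is correct and takes essentially the same route as the paper: exchange symmetry gives $\nabla_{\omega_1}F=\nabla_{\omega_2}F$ at the tied point, so the first-order Taylor term $\epsilon(\nabla_{\omega_1}F-\nabla_{\omega_2}F)^\top d$ vanishes, and the $C^2$ remainder is $O(\epsilon^2)$. The paper leaves two points implicit that you make explicit: the shortcut that $g$ is even, and the scope of the symmetry (phases swapped or held at $\phi=0$).
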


\begin{proof}
Exchange symmetry gives equal partial derivatives at the tied point:
$\nabla_1F=\nabla_2F$. The first-order Taylor term is
\begin{equation}
 \epsilon\nabla_1F^\top d-
 \epsilon\nabla_2F^\top d=0.
\end{equation}
The remaining change is second order.
\end{proof}

The proposition does not claim exact function preservation for an arbitrary
deep nonlinear network. It establishes local first-order continuity under the
symmetric interface. The optimizer state is copied symmetrically. This keeps
the accumulated history aligned at release time.

\subsection{Deep-Network Local Bridge}
\label{sec:supp_bridge}

The local theory is stated in representation coordinates. The implemented
model uses deep parameters. Let $h(\omega)$ denote a local feature tensor. Let
$J=\partial h/\partial\omega$ be its Jacobian at the probe checkpoint. Let
$g_h$ be a phase-exposed feature-space direction.

\begin{proposition}[Local parameter pullback]
The corresponding parameter gradient is
\begin{equation}
 g_\omega=J^\top g_h.
\end{equation}
If $J^\top g_h\neq0$, the feature-space candidate induces a nonzero local
parameter direction.
\end{proposition}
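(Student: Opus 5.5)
The plan is to apply the chain rule to the loss written as a composition $\mathcal L(\omega)=\ell(h(\omega))$ near the probe checkpoint. Here $\ell$ collects the fusion map of Eq.~\eqref{eq:supp_forward}, the task heads and the task losses, all treated as a function of the feature tensor. The first step is to fix the interpretation of $g_h$: it is the feature-space gradient $\nabla_h\ell$ evaluated at $h(\bar\omega)$, or more generally any covector on feature space paired with a feature perturbation $\delta h$. The parameter covector is then defined through $\langle g_\omega,\delta\omega\rangle=\langle g_h,J\delta\omega\rangle$. Assume $h$ is differentiable at the checkpoint with Jacobian $J=\partial h/\partial\omega$. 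Then a parameter perturbation $\delta\omega$ induces $\delta h=J\delta\omega+o(\|\delta\omega\|)$, and so
\begin{equation}
 \delta\mathcal L=\langle g_h,J\delta\omega\rangle+o(\|\delta\omega\|)
 =\langle J^\top g_h,\delta\omega\rangle+o(\|\delta\omega\|).
\end{equation}
Reading off the linear term identifies $g_\omega=J^\top g_h$, because the gradient is the Riesz representative of the differential. This is precisely the vector--Jacobian product that backpropagation computes.

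The second step is the nondegeneracy claim. If $J^\top g_h\neq0$, take the unit direction $v=-J^\top g_h/\|J^\top g_h\|_2$. The directional derivative of $\mathcal L$ along $v$ is $-\|J^\top g_h\|_2<0$. So the feature-space candidate induces a nonzero parameter direction with strictly negative first-order loss change. Under the additional Lipschitz hypothesis of the Small-step descent lemma, this direction also gives finite-step descent for small $\eta$. I would also connect this to the zero-sum coordinate. Applying the same computation to the logical pair $(\omega_1,\omega_2)$ and composing with $P_d$ shows that $g_d=P_d^\top J^\top g_h$ in the antisymmetric block. Hence the OEC direction $d_{\mathrm{anti}}$ is exactly the pullback of the phase-exposed feature direction.

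The main obstacle is not the chain rule but this identification at the tied state. When $\omega_1=\omega_2=\bar\omega$, only one physical path is evaluated, so I must check that the Jacobian of the logical pair with respect to $(\omega_1,\omega_2)$ is $\operatorname{diag}(J,J)$. The antisymmetric projection then receives $\tfrac12 J^\top(g_{p_1}-g_{p_2})$, which is nonzero only through the odd branch $q_s$. I would handle this by differentiating $z$ in Eq.~\eqref{eq:supp_forward} with respect to $p_1$ and $p_2$ separately. The cosine term contributes symmetrically and the sine term contributes $\pm\kappa\sin\phi$, so the antisymmetric feature gradient is proportional to $\sin\phi$. This is consistent with the phase-first mechanism. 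Finally, I would note that the converse fails: a nonzero $g_h$ can lie in $\ker J^\top$, so the hypothesis $J^\top g_h\neq0$ cannot be dropped.
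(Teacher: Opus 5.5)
Your proof is correct and follows the same route as the paper, whose entire proof is the one-line appeal to the chain rule (the vector--Jacobian product in your first step); the nonzero-direction claim then follows immediately from the hypothesis $J^\top g_h\neq0$. Your extra identification of the zero-sum component, $g_d\propto J^\top(g_{p_1}-g_{p_2})\propto\sin\phi$ through the odd branch $q_s$, is consistent with Eq.~\eqref{eq:supp_forward}. It holds up to the normalization of $P_d$, which $d_{\mathrm{anti}}$ removes anyway, but it goes beyond what the proposition asserts and is not needed.
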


\begin{proof}
The result is the chain rule.
\end{proof}

A complementary statement starts from a desired feature perturbation. Let
$v_h$ lie in the image of $J$. Then there exists a parameter perturbation
$v_\omega$ with $Jv_\omega=v_h$; the minimum-norm choice is
$J^\dagger v_h$. For a differentiable task objective $\ell(h)$,
\begin{equation}
 \left.\frac{d}{d\eta}\ell\bigl(h(\bar\omega+\eta v_\omega)\bigr)
 \right|_{\eta=0}
 =\nabla_h\ell(h)^\top v_h.
\end{equation}
Thus, a descending feature-space direction that is locally realizable by the
network has a parameter-space representative with the same first-order loss
change. This statement is useful for interpreting the frozen-feature and
local-head diagnostics. It remains checkpoint-local and makes no global
optimization claim.

This bridge is local. It applies at the probed checkpoint. It does not prove
that every initialization reaches a separating branch. It does not prove that
the post-release trajectories remain complementary. These questions require
empirical evidence.

\subsection{Numerical Local Checks}
\label{sec:supp_numerical_theory}

A deterministic float64 implementation checks the local linear algebra. It is
separate from deep training. Table~\ref{tab:supp_numerical_checks} reports the
frozen diagnostics from the original theory package.

\begin{table}[!t]
\centering

\small
\setlength{\tabcolsep}{3pt}
\begin{tabular}{p{0.69\linewidth}r}
\toprule
Diagnostic & Value \\
\midrule
Minimum eigenvalue of full coupled system & \textbf{$-1.1231$} \\
Minimum eigenvalue of restricted path block & $2.0000$ \\
Minimum eigenvalue of restricted phase block & $2.0000$ \\
Path energy in minimum mode & $0.3787$ \\
Phase energy in minimum mode & $0.6213$ \\
Analytic--autograd Hessian discrepancy & \textbf{$8.88\times10^{-16}$} \\
Analytic--finite-difference discrepancy & $4.03\times10^{-7}$ \\
HVP and Lanczos residual & \textbf{$1.11\times10^{-16}$} \\
Safe relative split norms & $0.003$--$0.0001$ \\
\bottomrule
\end{tabular}
\caption{Numerical checks for the local coupled model. Positive restricted
blocks and a negative coupled mode illustrate the Schur-complement effect.}
\label{tab:supp_numerical_checks}
\end{table}

The first three rows of Table~\ref{tab:supp_numerical_checks} instantiate the
coupled-curvature argument. Each restricted block has minimum eigenvalue
$2.0$, yet the complete system has a negative eigenvalue of $-1.1231$. The
minimum mode contains both path and phase energy. It is therefore a genuinely
coupled mode rather than an instability confined to one coordinate. The
analytic, autograd, finite-difference, HVP, and Lanczos agreements provide
independent checks of the same local calculation. The split-norm range verifies
that the zero-sum perturbations used in the local continuity tests remain small
relative to the mother parameter norm.

\subsection{Theory Claim Boundary}
\label{sec:supp_theory_boundary}

The theory supports a local sufficiency statement. Relative phase can expose
an antisymmetric force before physical separation. A persistent, descending,
and safe force can justify a local zero-sum release. Three levels of statement
must remain distinct.

\begin{table*}[!t]
\centering

\small
\begin{tabular}{p{0.18\textwidth}p{0.71\textwidth}}
\toprule
Status & Statement \\
\midrule
\textbf{Local result} & Exchange invariance, phase-first antisymmetric exposure,
Schur-complement coupling, finite-step descent for a sufficiently small step,
and first-order cancellation under a symmetric zero-sum split. \\
\textbf{Empirical question} & Whether the training trajectory produces persistent OEC
evidence, whether released paths remain complementary, and whether delayed
growth improves the measured performance--computation trade-off. \\
\textbf{Outside the theorem} & Necessity of phase, global convergence from every
initialization, universal bottleneck detection, task-wise expert formation,
and superiority to every oracle-timed or randomly directed split. \\
\bottomrule
\end{tabular}
\caption{Scope of the theoretical statements. The table separates proved
local properties from empirically evaluated consequences and claims that are
outside the current result.}
\label{tab:supp_theory_scope}
\end{table*}

Table~\ref{tab:supp_theory_scope} makes the claim hierarchy explicit. The
local result is deliberately narrower than the empirical claim. It identifies a mechanism by which a tied dense model can expose a meaningful
separation direction. OEC then asks whether this mechanism is present and
persistent in a concrete training run. The experiments test the consequences
without converting them into universal theorems.

\section{Extended Experimental Results}
\label{sec:supp_experiments}

\subsection{Reporting Protocol and Evidence Organization}
\label{sec:supp_reporting}

\begin{table*}[!t]
\centering

\small
\begin{tabular}{p{0.16\textwidth}p{0.13\textwidth}p{0.10\textwidth}p{0.10\textwidth}p{0.35\textwidth}}
\toprule
Evaluation & Tasks or demand & Seeds & Horizon & Purpose \\
\midrule
Rank-controlled & joint ranks 3, 4, 6 & 3 & 50 epochs & Exact sharing, conservative boundary, and verified rank-six bottleneck. \\
NYUv2 standard & segmentation, depth, normals & 3 & fixed protocol & External multi-task performance and seed dependence. \\
NYUv2 Capacity-Stress & same three tasks, 8-channel path & 3 & 80 epochs & Population certificate, release events, and compute accounting. \\
NYUv2 task arrival & later demand at epoch 21 & 3 & 40 epochs & Delayed materialization and matched-schedule comparison. \\
PASCAL-Context standard & semantic, parts, saliency, normals & 3 & fixed protocol & Cross-dataset boundary. \\
PASCAL arrival & later tasks at epoch 21 & 3 & 40 epochs & Timing response on a second dataset. \\
PASCAL early arrival & later tasks at epoch 11 & 1 & 40 epochs & Timing sensitivity. \\
PASCAL ramp & demand ramp from epochs 11--15 & 3 & fixed protocol & Gradual-demand boundary and conflict-trigger comparison. \\
\bottomrule
\end{tabular}
\caption{Detailed evaluation map. Wall-clock values are descriptive and
hardware-specific.}
\label{tab:supp_protocol_map}
\end{table*}

For natural-image tasks, the directional aggregate is
\begin{equation}
 D_t(M)=s_t\frac{M_t-M_t^{\mathrm{HS}}}
 {|M_t^{\mathrm{HS}}|+\varepsilon_t},
 \qquad
 G(M)=\frac1T\sum_{t=1}^T D_t(M).
 \label{eq:supp_G}
\end{equation}
The sign $s_t$ is positive for higher-is-better metrics and negative for
lower-is-better metrics. All native task metrics remain visible.

Table~\ref{tab:supp_protocol_map} organizes the evidence by the question each
protocol can answer. The controlled benchmark isolates capacity because the
joint target rank is known. Standard-capacity natural-image experiments test
external task behavior but do not guarantee a capacity bottleneck. The
Capacity-Stress protocol deliberately narrows the path and records the full
population certificate. Task-arrival protocols separate the period before and
after additional demand. PASCAL-Context provides a dataset and task-composition
change. The early-arrival and ramp protocols test timing sensitivity and the
shape of the demand transition. This ordering prevents a result from one
protocol from being used to answer a question that only another protocol
identifies.

For $G$, Hard-sharing is the per-seed reference and therefore has value zero
by definition. A positive $G$ denotes a directional improvement averaged over
native metrics. A negative $G$ denotes a directional decrease. The aggregate
is not a substitute for the task-level columns. We report both because a
single average can hide compensating improvements and degradations.

\subsection{Implementation and Decision Settings}
\label{sec:supp_impl_settings}
All values below are transcribed from archived configuration or implementation files. Missing fields are not inferred.
Table~\ref{tab:supp_decision_settings} separates the standard OEC contract from the population settings used only by Capacity-Stress. Table~\ref{tab:supp_baseline_matching} records which baselines begin shared, remain shared, or expose two trainable paths.

\begin{table*}[!t]
\centering
\small
\textbf{(a) Controlled and NYUv2 data/model settings}\par\smallskip
\setlength{\tabcolsep}{3pt}
\begin{tabular}{L{0.12\textwidth}L{0.20\textwidth}L{0.28\textwidth}L{0.30\textwidth}}
\toprule
Setting & Controlled rank & NYUv2 external static & NYUv2 OEC/task arrival \\
\midrule
Data/split & seeded synthetic factors & 795 train, 654 held-out; SHA256-locked & same frozen train/held-out split \\
Input & six latent factors & RGB, $288\times384$ & RGB, $288\times384$ \\
Preprocess & none & MTAN tensors, ImageNet normalization & same; train flip $p=0.5$ \\
Backbone & linear rank-controlled path & ImageNet-pretrained dilated ResNet-18 & SmallNYUv2Encoder, width 32 \\
Path & rank capacity 3 & matched equal-capacity adapters & width 64; Conv3x3-BN-ReLU-Conv1x1 \\
Heads & three linear heads & homogeneous ASPP per task & three task-specific 1x1 decoders \\
Loss & mean squared error & masked CE, masked L1, normal loss; equal weights & segmentation, depth, normal; frozen arrival profile \\
\bottomrule
\end{tabular}

\vspace{5pt}
\textbf{(b) PASCAL data/model settings}\par\smallskip
\begin{tabular}{L{0.17\textwidth}L{0.36\textwidth}L{0.36\textwidth}}
\toprule
Setting & Native static/arrival & External static \\
\midrule
Data/split & PASCAL\_MT; first 512 frozen train/validation IDs & same frozen 512/512 subsets \\
Input/preprocess & RGB, resize $128\times128$, ImageNet normalization & same \\
Backbone & SmallEncoder width 32, three convolutional stages & ImageNet-pretrained dilated ResNet-18 \\
Path & width 64; Conv3x3-BN-ReLU-Conv1x1 & matched equal-capacity adapters \\
Heads & task-specific 1x1 heads & homogeneous ASPP per task \\
Loss & equal-weight semantic, parts, saliency, normal losses & same four equal-weight tasks \\
\bottomrule
\end{tabular}
\caption{Data and model settings. Panel (a) records controlled and NYUv2 protocols; panel (b) records the two PASCAL implementations.}
\label{tab:supp_impl_settings}
\end{table*}

\begin{table*}[!t]
\centering
\small
\renewcommand{\arraystretch}{0.92}
\textbf{(a) Optimization settings}\par\smallskip
\setlength{\tabcolsep}{3pt}
\begin{tabular}{L{0.14\textwidth}L{0.20\textwidth}L{0.27\textwidth}L{0.29\textwidth}}
\toprule
Setting & Controlled & NYUv2 external static & NYUv2 OEC / arrival / PASCAL \\
\midrule
Batch & archived runner setting & 8 & 8 \\
Optimizer & archived runner setting & Adam, lr $10^{-4}$, wd 0 & AdamW, lr $10^{-3}$, wd $10^{-4}$ \\
Scheduler & archived runner setting & StepLR(100, 0.5) &
Capacity-Stress: CosineAnnealingLR, $T_{\max}=80$;
arrival/PASCAL: $T_{\max}=40$ \\
Horizon & 50 epochs & 200 epochs &
Capacity-Stress: 80 epochs; arrival/PASCAL: 40 epochs \\
Evaluation & retained final summaries & held-out final evaluation & per-epoch history; final checkpoint \\
Seeds & 2027--2029 for High/Low & 2027--2029 & 2027--2029 where archived \\
\bottomrule
\end{tabular}

\vspace{5pt}
\textbf{(b) Hardware and reproducibility settings}\par\smallskip
\begin{tabular}{L{0.20\textwidth}L{0.69\textwidth}}
\toprule
Setting & Recorded value \\
\midrule
Hardware & one NVIDIA GeForce RTX 4070 SUPER GPU, 12 GiB, for natural-image runs \\
Software & PyTorch 2.6.0+cu124, CUDA 12.4, cuDNN 9.1 \\
Precision & controlled CPU runs without AMP; external adapters use AMP with pair branch in float32 \\
Determinism & seeded order, fixed probe batches, deterministic algorithms where registered \\
Loader workers & 2 for NYUv2; 0 for PASCAL \\
\bottomrule
\end{tabular}
\caption{Optimization and reproducibility settings. Panel (a) records training contracts; panel (b) records hardware and determinism.}
\label{tab:supp_impl_optimization}
\end{table*}

\begin{table*}[!t]
\centering
\small
\renewcommand{\arraystretch}{0.92}
\textbf{(a) General frozen decision settings}\par\smallskip
\begin{tabular}{L{0.31\textwidth}L{0.24\textwidth}L{0.27\textwidth}}
\toprule
Setting & Value & Status \\
\midrule
$K$ / aggregate epsilon & 2 / $10^{-12}$ & frozen \\
Split relative norm & 0.003 & frozen \\
Zero-sum materialization & true & frozen \\
Probe start / interval / batches & 8 / 4 / 8 & frozen \\
Phase curvature / persistence & $-0.05$ / 2 & frozen \\
Materialized optimizer state & copied to both paths & implementation contract \\
\bottomrule
\end{tabular}

\vspace{5pt}
\textbf{(b) NYUv2 OEC-specific settings}\par\smallskip
\begin{tabular}{L{0.31\textwidth}L{0.24\textwidth}L{0.27\textwidth}}
\toprule
Setting & Value & Status \\
\midrule
Initial phase / phase angle & symmetric zero / 0.10 rad & contract / frozen \\
$\beta,\kappa,\rho$ & $-1,+1,0.5$ & implementation contract \\
NormCal / parameter RMS floors & $10^{-8}$ / $10^{-3}$ & contract / frozen \\
Transverse gradient maximum & $10^{-3}$ & frozen \\
Finite-difference angle / ratio & $10^{-3}$ rad / $[0.5,1.5]$ & frozen \\
Adaptation window & 4 epochs & frozen \\
Cross-half / median-batch cosine & $\geq0.90$ / $\geq0.50$ & frozen \\
Path-strength floor & $\max(10^{-3},10e_{\rm repeat})$ & deterministic rule \\
Finite path step & 0.003 RMS-normalized & frozen \\
Frozen/dynamic descent & both loss deltas negative & implementation contract \\
Dynamic loss/swap/output floors & $10^{-4}/10^{-6}/10^{-5}$ & frozen \\
Dynamic hard cap / persistence & $5\times$ threshold / 2 & frozen \\
\bottomrule
\end{tabular}
\caption{Frozen EMAN decision settings. Panel (a) contains general contracts; panel (b) contains NYUv2 OEC thresholds. Capacity-Stress population settings are recorded separately in Table~\ref{tab:supp_probe_config}.}
\label{tab:supp_decision_settings}
\end{table*}

\begin{table*}[!t]
\centering
\small
\renewcommand{\arraystretch}{0.92}
\setlength{\tabcolsep}{3pt}
\begin{tabular}{L{0.13\textwidth}L{0.09\textwidth}L{0.15\textwidth}L{0.25\textwidth}L{0.14\textwidth}L{0.13\textwidth}}
\toprule
Method & Paths & Initial state & Trigger or policy & Release & Task schedule \\
\midrule
Hard-sharing & 1 & shared & none & -- & no \\
PhaseOnly & 1 & shared & phase certificate & phase event & no \\
MeanPaths & 2 & dual & none & epoch 1 & no \\
Scheduled split & 2 & shared & preregistered epoch & 11 or 21 & yes \\
Conflict Trigger & 2 & shared & persistent gradient conflict & event or -- & no \\
AdaShare & policy units & shared network & learned sharing policy & n/a & no \\
Recon & shared & shared & conflict-aware update & n/a & no \\
EMAN & 1 then 2 & exact shared & phase certificate then OEC & certified event & no \\
\bottomrule
\end{tabular}
\caption{Baseline structural states and release rules.}
\label{tab:supp_baseline_matching}
\end{table*}

\begin{table*}[!t]
\centering
\small
\renewcommand{\arraystretch}{0.88}
\textbf{(a) Parameter and protocol matching}\par\smallskip
\begin{tabular}{L{0.17\textwidth}L{0.27\textwidth}L{0.44\textwidth}}
\toprule
Method & Capacity match & Shared protocol \\
\midrule
Hard-sharing, PhaseOnly & one path & same backbone, heads, split, losses, optimizer, scheduler, and horizon \\
MeanPaths & released EMAN maximum & same protocol and per-path width \\
Scheduled, Conflict split & released EMAN maximum & same protocol, width, and zero-sum split norm \\
AdaShare, Recon & common backbone and heads & method-specific policy/update parameters reported separately \\
EMAN & one tied, then at most two & same protocol; only active path count changes after certification \\
\bottomrule
\end{tabular}

\vspace{5pt}
\textbf{(b) Compute-accounting contract}\par\smallskip
\begin{tabular}{L{0.17\textwidth}L{0.27\textwidth}L{0.44\textwidth}}
\toprule
Method & Path evaluations & Total accounting \\
\midrule
Hard-sharing, PhaseOnly & one per batch & backbone, active path, heads, fusion \\
MeanPaths & two per batch & backbone, two paths, heads, fusion \\
Scheduled, Conflict split & one before, two after split & same components plus registered decision overhead \\
AdaShare, Recon & archived active FLOPs & method-specific policy or conflict-update cost included \\
EMAN & one before, two after materialization & same components plus phase/OEC probe cost \\
\bottomrule
\end{tabular}
\caption{Matching and compute contracts. Panel (a) records capacity/protocol matching; panel (b) defines active-path and total-cost accounting.}
\label{tab:supp_baseline_compute_matching}
\end{table*}

\subsection{Rank-Controlled Results for Every Seed}
\label{sec:supp_controlled}

Each physical path has rank-three capacity. The \textsc{High} demand has
joint rank three. The \textsc{Low} demand has joint rank six. Table
\ref{tab:supp_controlled_all} reports every task and every seed. EMAN exactly
matches Hard-sharing in every \textsc{High} run. It performs no release. In
\textsc{Low}, PhaseOnly releases phase in every run but stays at the
Hard-sharing level. EMAN materializes in every run and recovers a much larger
macro-$R^2$.

\begin{table*}[!t]
\centering

\small
\renewcommand{\arraystretch}{0.90}
\setlength{\tabcolsep}{3.6pt}
\begin{tabular}{ll l cccc c}
\toprule
Seed & Regime & Method & Task 1 & Task 2 & Task 3 & Macro $R^2$ & Phase/Path \\
\midrule
Seed 1 & \textsc{High} & Hard-sharing & 0.9999 & 0.9999 & 0.9999 & \textbf{0.9999} & no/no \\
 & \textsc{High} & EMAN & 0.9999 & 0.9999 & 0.9999 & \textbf{0.9999} & no/no \\
 & \textsc{Low} & Hard-sharing & 0.7111 & 0.1705 & 0.6342 & 0.5053 & no/no \\
 & \textsc{Low} & PhaseOnly & 0.7037 & 0.1768 & 0.6346 & 0.5050 & yes/no \\
 & \textsc{Low} & EMAN & 0.9999 & 0.9999 & 0.9999 & \textbf{0.9999} & \textbf{yes/yes} \\
\midrule
Seed 2 & \textsc{High} & Hard-sharing & 0.8540 & 0.8262 & 0.8562 & \textbf{0.8455} & no/no \\
 & \textsc{High} & EMAN & 0.8540 & 0.8262 & 0.8562 & \textbf{0.8455} & no/no \\
 & \textsc{Low} & Hard-sharing & 0.4372 & 0.4481 & 0.5064 & 0.4639 & no/no \\
 & \textsc{Low} & PhaseOnly & 0.4396 & 0.4455 & 0.5055 & 0.4635 & yes/no \\
 & \textsc{Low} & EMAN & 0.8540 & 0.6999 & 0.7014 & \textbf{0.7518} & \textbf{yes/yes} \\
\midrule
Seed 3 & \textsc{High} & Hard-sharing & 0.8325 & 0.8414 & 0.9089 & \textbf{0.8609} & no/no \\
 & \textsc{High} & EMAN & 0.8325 & 0.8414 & 0.9089 & \textbf{0.8609} & no/no \\
 & \textsc{Low} & Hard-sharing & 0.4111 & 0.4624 & 0.5732 & 0.4823 & no/no \\
 & \textsc{Low} & PhaseOnly & 0.4108 & 0.4640 & 0.5700 & 0.4816 & yes/no \\
 & \textsc{Low} & EMAN & 0.8323 & 0.8259 & 0.8816 & \textbf{0.8466} & \textbf{yes/yes} \\
\bottomrule
\end{tabular}
\caption{Complete controlled results. Phase/Path reports event occurrence in
each run.}
\label{tab:supp_controlled_all}
\end{table*}

\begin{figure*}[!t]
\centering
\includegraphics[width=0.49\textwidth]{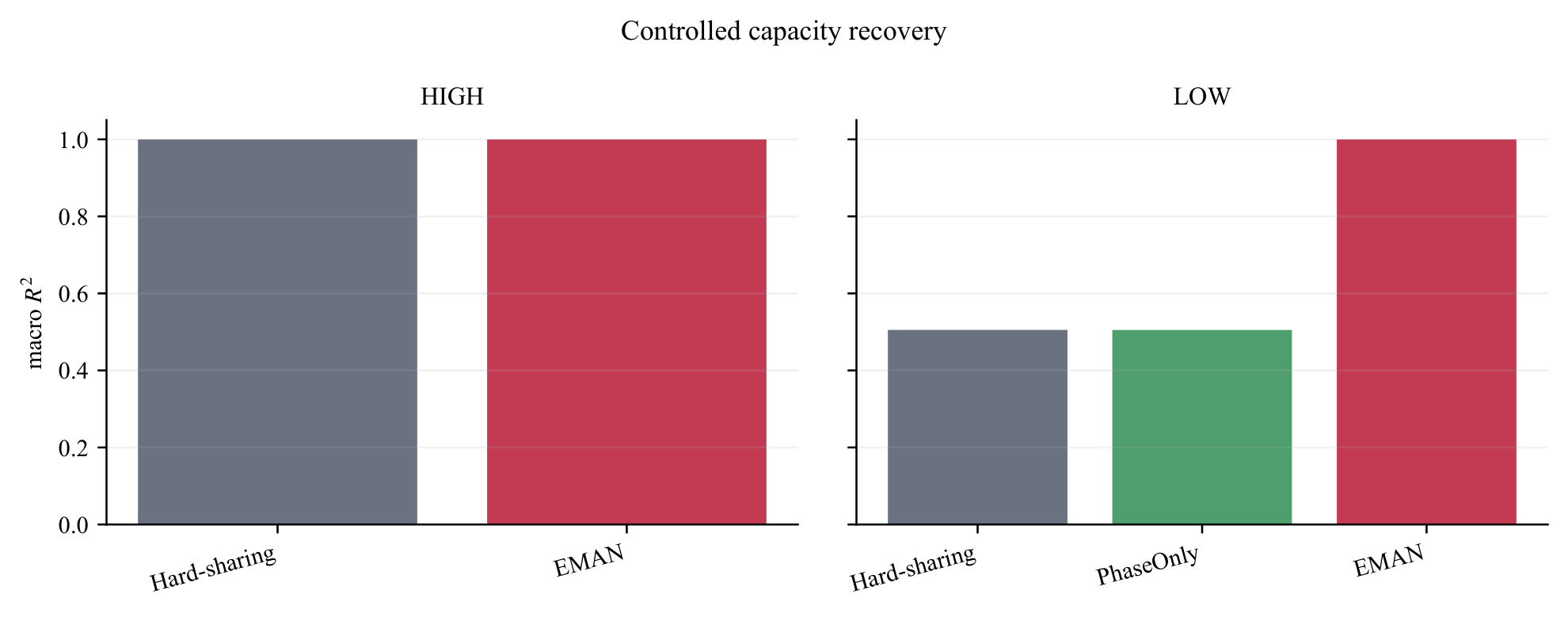}
\hfill
\includegraphics[width=0.49\textwidth]{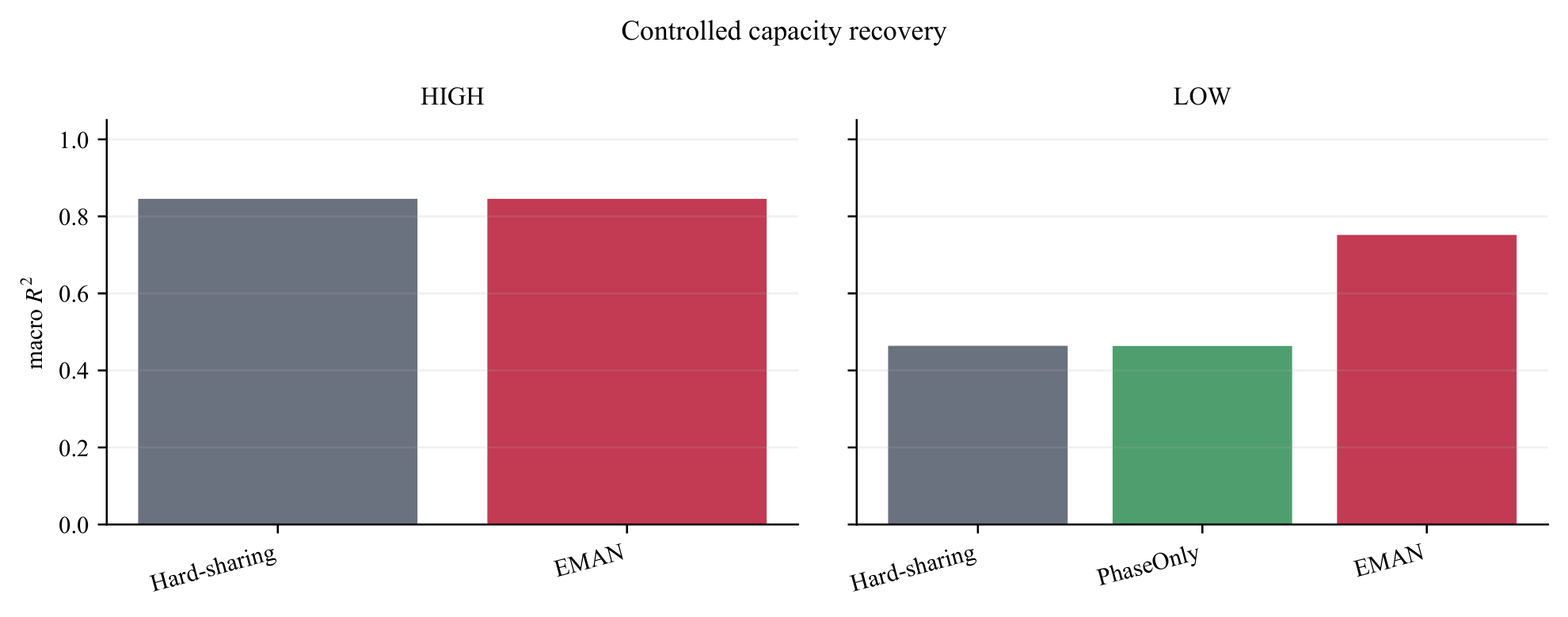}
\par
\includegraphics[width=0.49\textwidth]{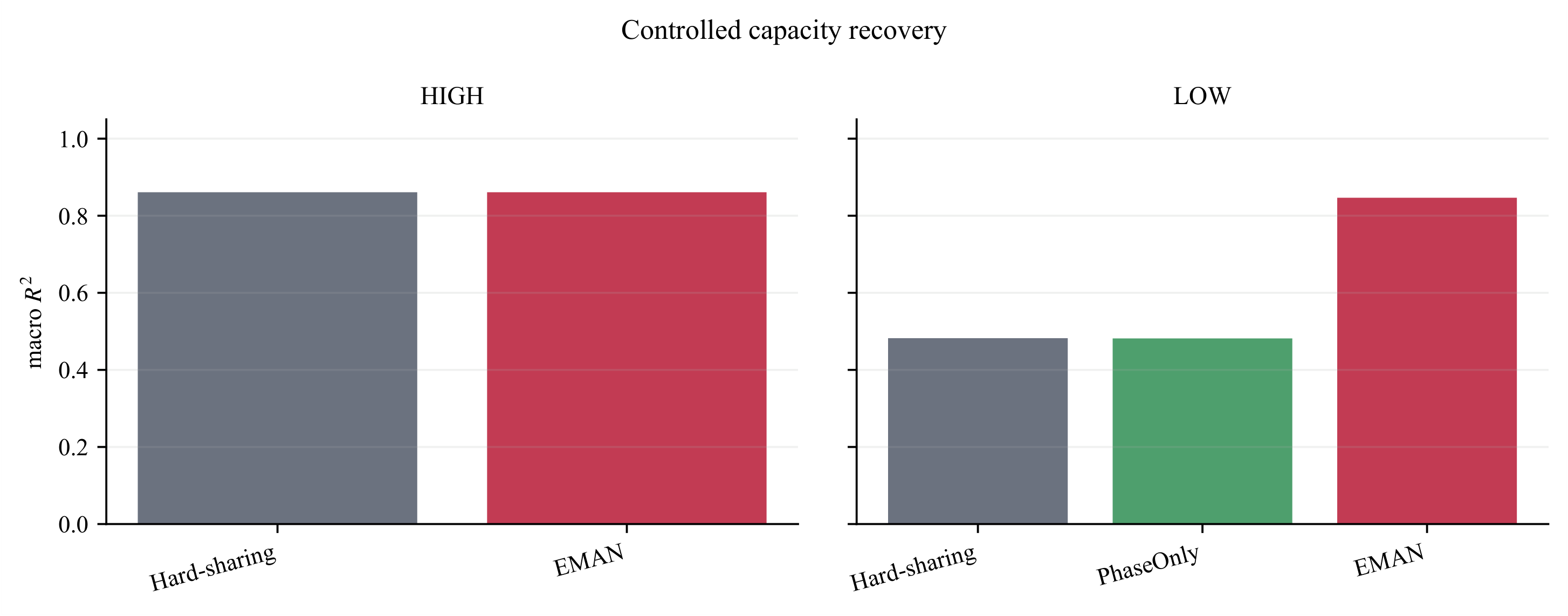}
\caption{Per-seed controlled results. The three plots expose the variance
behind the aggregate main-paper table. The structural decision remains stable:
all \textsc{High} runs stay shared, and all \textsc{Low} EMAN runs release.}
\label{fig:supp_controlled_seeds}
\end{figure*}

Table~\ref{tab:supp_controlled_all} and
Figure~\ref{fig:supp_controlled_seeds} show two types of stability. First, the
structural decision is stable across all three seeds: every \textsc{High} run
remains exact-shared, and every \textsc{Low} EMAN run releases phase and path.
Second, the amount of predictive recovery varies with the seed. Seed~1 is a
near-ceiling realization. Seeds~2 and~3 are harder, but EMAN still raises the
\textsc{Low} macro-$R^2$ from $0.4639$ to $0.7518$ and from $0.4823$ to
$0.8466$, respectively. This distinction matters. The certificate is evaluated
as a structural decision, while final predictive performance remains subject
to ordinary optimization variability after release.

The PhaseOnly rows isolate the role of latent phase. In each \textsc{Low} run,
phase is released but the path is not materialized. The macro-$R^2$ remains
within $0.0008$ of Hard-sharing. The result rules out the interpretation that
phase freedom by itself supplies the missing rank-six representation. In
contrast, EMAN changes the physical path count and recovers all three tasks
simultaneously. The gains are therefore associated with independent path
capacity rather than a task-specific loss trade-off.

The \textsc{Medium} demand has joint rank four. It exceeds one-path rank by
one. EMAN stays shared and obtains $0.7086$ macro-$R^2$. Hard-sharing obtains
$0.7086$, and PhaseOnly obtains $0.7081$. This setting lies close to the
single-path boundary. The frozen evidence rule does not certify a persistent
benefit from structural growth. We therefore retain it as a conservative false
negative rather than retuning the threshold after observing the result. The
case identifies the sensitivity limit of the current certificate. It does not
affect the stronger rank-six capacity result, where phase and path release
occur in every seed.

\subsection{Archived Structural Events and Diagnostic Boundaries}
\label{sec:supp_structural_events}

The Medium joint-rank-four result is retained only as the frozen aggregate summary: Hard-sharing $0.7086$, PhaseOnly $0.7081$, and EMAN $0.7086$ macro-$R^2$. The archive does not contain a matching three-seed table for these numbers. Per-seed values are therefore not reconstructed. A separate single-run directory named \texttt{rank\_controlled\_core/MEDIUM} contains a different final value and is not merged with the frozen summary.

\begin{table}[!t]
\centering

\small
\renewcommand{\arraystretch}{0.90}
\begin{tabular}{lcc}
\toprule
Method & Macro $R^2$ & Per-seed archive \\
\midrule
Hard-sharing & 0.7086 & not archived \\
PhaseOnly & 0.7081 & not archived \\
EMAN & 0.7086 & not archived \\
\bottomrule
\end{tabular}
\caption{Frozen Medium evidence boundary. ``Not archived'' is not replaced by a reconstructed value.}
\label{tab:supp_medium_boundary}
\end{table}

The standard-capacity NYUv2 external comparison uses the frozen static matched adapter. Its archived method states record phase release at epoch~1 and path release at epoch~2 for each seed. These runs do not contain OEC probe traces and therefore cannot support a claim about persistent phase evidence. They establish a matched static performance comparison only; the separate OEC and task-arrival protocols supply structural-decision evidence.

\begin{table}[!t]
\centering

\small
\renewcommand{\arraystretch}{0.90}
\begin{tabular}{lccc}
\toprule
Protocol & Seeds & Phase epoch & Path epoch \\
\midrule
NYUv2 static & 1--3 & 1 & 2 \\
PASCAL static & 1--3 & 1 & 2 \\
\bottomrule
\end{tabular}
\caption{Archived structural states for the static matched adapters.}
\label{tab:supp_static_events}
\end{table}

The static matched adapters reach the dual-path state at epochs 1 and 2.
These events define the static comparison. The task-arrival protocols
provide the corresponding evidence for release timing under changing demand. Static PASCAL uses a dual-path ratio of $39/40$ and a path-training ratio of $79/80$ in each archived run. Machine-readable per-seed records, including hardware-specific wall-clock values, are provided in \texttt{nyuv2\_standard\_structural\_events.csv} and \texttt{pascal\_static\_events.csv}.

\subsection{Coalition Diagnostics and Specialization Boundary}
\label{sec:supp_coalition}
The K=2 coalition audit reports path main effects, Shapley contributions, pair synergy, CSS, HPS, and task assignments from the archived controlled checkpoint. The retained LOW result has near-zero CSS and assigns all tasks to the same path. Pair synergy is reported separately and is not interpreted as path specialization. Consequently, the evidence supports complementary released subspaces, but not task-wise expert specialization. Full machine-readable values appear in \texttt{coalition\_diagnostics.csv}.

\subsection{Detailed Representation Geometry}
\label{sec:supp_representation}

The representation analysis uses the released \textsc{Low} model from
Seed~1. Linear held-out probes map Path~1, Path~2, and their concatenation to
the six latent factors. Table~\ref{tab:supp_factor_recovery} reports all
values. Neither individual path recovers every factor. The concatenation
recovers each factor with $R^2\ge0.999892$.

\begin{table*}[!t]
\centering

\small
\setlength{\tabcolsep}{4pt}
\begin{tabular}{lcccccc}
\toprule
Feature & $s_1$ & $s_2$ & $s_3$ & $s_4$ & $s_5$ & $s_6$ \\
\midrule
Path 1 & 0.240901 & 0.901442 & 0.662076 & 0.237050 & 0.123162 & 0.848102 \\
Path 2 & 0.155087 & 0.671282 & 0.600379 & 0.364944 & 0.484158 & 0.718674 \\
Concatenated & \textbf{0.999902} & \textbf{0.999901} & \textbf{0.999892} & \textbf{0.999896} & \textbf{0.999902} & \textbf{0.999902} \\
\bottomrule
\end{tabular}
\caption{Factor-wise held-out recovery for the released Seed~1 model.}
\label{tab:supp_factor_recovery}
\end{table*}

\begin{table}[!ht]
\centering

\small
\begin{tabular}{cc}
\toprule
Index & Canonical correlation \\
\midrule
1 & \textbf{1.000000} \\
2 & \textbf{1.000000} \\
3 & \textbf{1.000000} \\
4 & \textbf{1.000000} \\
5 & 0.954043 \\
6 & 0.940091 \\
\bottomrule
\end{tabular}
\caption{Canonical-correlation spectrum between the concatenated path
representation and the latent state.}
\label{tab:supp_cca}
\end{table}

\begin{table}[!ht]
\centering

\small
\begin{tabular}{lr}
\toprule
Metric & Value \\
\midrule
canonical rank & \textbf{6.000000} \\
effective rank concat & 3.891877 \\
effective rank path1 & 2.945753 \\
effective rank path2 & 2.958267 \\
mean canonical correlation & 0.982356 \\
mean concat factor R2 & \textbf{0.999899} \\
min concat factor R2 & \textbf{0.999892} \\
minimum canonical correlation & 0.940091 \\
normalized projector overlap & 0.210805 \\
permuted max factor R2 & 0.057603 \\
permuted mean R2 & 0.024310 \\
unique gain path1 & \textbf{0.500812} \\
unique gain path2 & \textbf{0.497777} \\
\bottomrule
\end{tabular}
\caption{Additional latent-geometry diagnostics for Seed~1. Effective rank
measures spectral uniformity. It is not algebraic rank.}
\label{tab:supp_geometry}
\end{table}

The three principal angles are $58.2795^\circ$, $59.3178^\circ$, and
$71.9900^\circ$. The normalized projector overlap is $0.210805$. The unique
conditional gains are $0.500812$ and $0.497777$. 

A permutation control gives
mean factor $R^2=0.024310$ and maximum factor $R^2=0.057603$. These results
support complementary capacity. They do not support one expert per task.

Table~\ref{tab:supp_factor_recovery} provides the most direct
factor-by-factor view. Path~1 is strongest on $s_2$, $s_3$, and $s_6$. Path~2 has relatively larger
recovery on $s_4$ and $s_5$. Neither path alone reaches full recovery. The
concatenated representation recovers every factor. Its minimum factor score is
$0.999892$. This pattern is incompatible with simple duplication. Identical
path subspaces would not recover the missing factors after concatenation.

Table~\ref{tab:supp_cca} gives a basis-invariant check. Four canonical
correlations are numerically one, and the remaining two are $0.954043$ and
$0.940091$. The complete representation therefore preserves all six latent
directions with high linear alignment. Table~\ref{tab:supp_geometry} adds
three controls. The principal angles and low projector overlap show that the
individual path subspaces are not identical. The two unique conditional gains
are nearly balanced. The permutation control collapses mean recovery from
$0.999899$ to $0.024310$, which rules out a probe or evaluation artifact.

\begin{table}[!t]
\centering

\small
\renewcommand{\arraystretch}{0.90}
\begin{tabular}{lcccc}
\toprule
Diagnostic & Task 1 & Task 2 & Task 3 & Aggregate \\
\midrule
Assigned path & 1 & 1 & 1 & -- \\
Pair synergy & 2.7079 & 3.6472 & 3.8095 & -- \\
CSS & -- & -- & -- & $-0.0000$ \\
HPS & -- & -- & -- & 1.2757 \\
Selectivity & -- & -- & -- & 0.1634 \\
\bottomrule
\end{tabular}
\caption{LOW controlled coalition diagnostics for EMAN at the archived seed-2027 final checkpoint. Pair synergy is task-wise interaction, not specialization.}
\label{tab:supp_coalition}
\end{table}

A separate coalition diagnostic gives near-zero task-wise specialization for
EMAN. Its HPS is $1.2757$, while task-wise pair synergies are $2.7079$,
$3.6472$, and $3.8095$. All three tasks receive the same path assignment
under the main-effect rule. The result rejects a task-expert interpretation.
It does not reject complementary latent information.

The representation conclusion is consequently narrower and more useful than
an expert-specialization claim. The two paths become complementary carriers of
latent information, but the task heads continue to use their fused
representation. No path is assigned to a predefined task, and the evidence
does not require such an assignment.

\subsection{NYUv2 Capacity-Stress: Complete Per-Seed Performance}
\label{sec:supp_capacity_stress}

Capacity-Stress combines an eight-channel path with the frozen
\texttt{NYUV2\_D11} demand profile: segmentation and depth are active during
epochs 1--10, and surface-normal demand is activated from epoch 11.
The training horizon is 80 epochs and the cosine scheduler uses
$T_{\max}=80$. Each
population probe uses all 795 training samples and their horizontal flips.
This yields 1,590 states. The states are divided into eight deterministic
shards. Seed~1 and Seed~3 release. Seed~2 remains shared. Table
\ref{tab:supp_capacity_perf} reports every method and seed.

\begin{table}[!ht]
\centering

\small
\begin{tabular}{lc}
\toprule
Recorded probe setting & Value \\
\midrule
Training samples & 795 \\
Augmented states & 1,590 \\
Deterministic shards & 8 \\
Shard sizes & $3\times200+5\times198$ \\
Path-strength floor & \textbf{$10^{-3}$} \\
Phase persistence & \textbf{2 consecutive passes} \\
Path persistence & \textbf{2 consecutive passes} \\
Validation/test use & \textbf{none} \\
\bottomrule
\end{tabular}
\caption{Recorded population-certificate configuration. The table reports
archived settings rather than tuned post hoc values.}
\label{tab:supp_probe_config}
\end{table}

Table~\ref{tab:supp_probe_config} records the decision protocol rather than a
performance result. Every probe uses the full training population and its
registered flip augmentation. Deterministic shards keep the population
partition fixed across probe epochs. The $10^{-3}$ path-strength floor screens
numerically negligible forces. Separate two-pass persistence rules govern
phase and path release. Validation and test samples are absent from all
decisions. These settings were frozen before the reported per-seed outcomes.

\begin{table*}[!t]
\centering

\small
\setlength{\tabcolsep}{4pt}
\begin{tabular}{llccccc}
\toprule
Seed & Method & $G$ & mIoU & Depth Abs. & Normal Mean & Joint Loss \\
\midrule
Seed 1 & Hard-sharing & 0.0000 & 0.1216 & 0.8584 & 43.4562 & 2.9159 \\
 & PhaseOnly & 0.0163 & 0.1251 & 0.8487 & 43.0796 & 2.8871 \\
 & MeanPaths & 0.0262 & 0.1285 & 0.8576 & 42.5451 & 2.8768 \\
 & Conflict Trigger & 0.0000 & 0.1216 & 0.8584 & 43.4562 & 2.9159 \\
 & EMAN & \textbf{0.0361} & 0.1261 & 0.8230 & 42.1715 & 2.8877 \\
\midrule
Seed 2 & Hard-sharing & 0.0000 & 0.1268 & 0.8548 & 42.9600 & 2.8817 \\
 & PhaseOnly & 0.0066 & 0.1285 & 0.8495 & 42.9514 & 2.8873 \\
 & MeanPaths & \textbf{0.0307} & 0.1361 & 0.8522 & 42.2700 & 2.8433 \\
 & Conflict Trigger & 0.0000 & 0.1268 & 0.8548 & 42.9600 & 2.8817 \\
 & EMAN & 0.0267 & 0.1298 & 0.8349 & 41.5182 & 2.8483 \\
\midrule
Seed 3 & Hard-sharing & 0.0000 & 0.1248 & 0.8548 & 42.8611 & 2.8862 \\
 & PhaseOnly & 0.0116 & 0.1291 & 0.8505 & 43.0620 & 2.8765 \\
 & MeanPaths & 0.0140 & 0.1293 & 0.8536 & 42.6534 & 2.8650 \\
 & Conflict Trigger & -0.0000 & 0.1248 & 0.8548 & 42.8611 & 2.8862 \\
 & EMAN & \textbf{0.0291} & 0.1291 & 0.8294 & 41.8817 & 2.8692 \\
\bottomrule
\end{tabular}
\caption{Capacity-Stress performance for all seeds. Higher is better for
$G$ and mIoU. Lower is better for the remaining columns.}
\label{tab:supp_capacity_perf}
\end{table*}

\begin{figure*}[!t]
\centering
\includegraphics[width=0.47\textwidth]{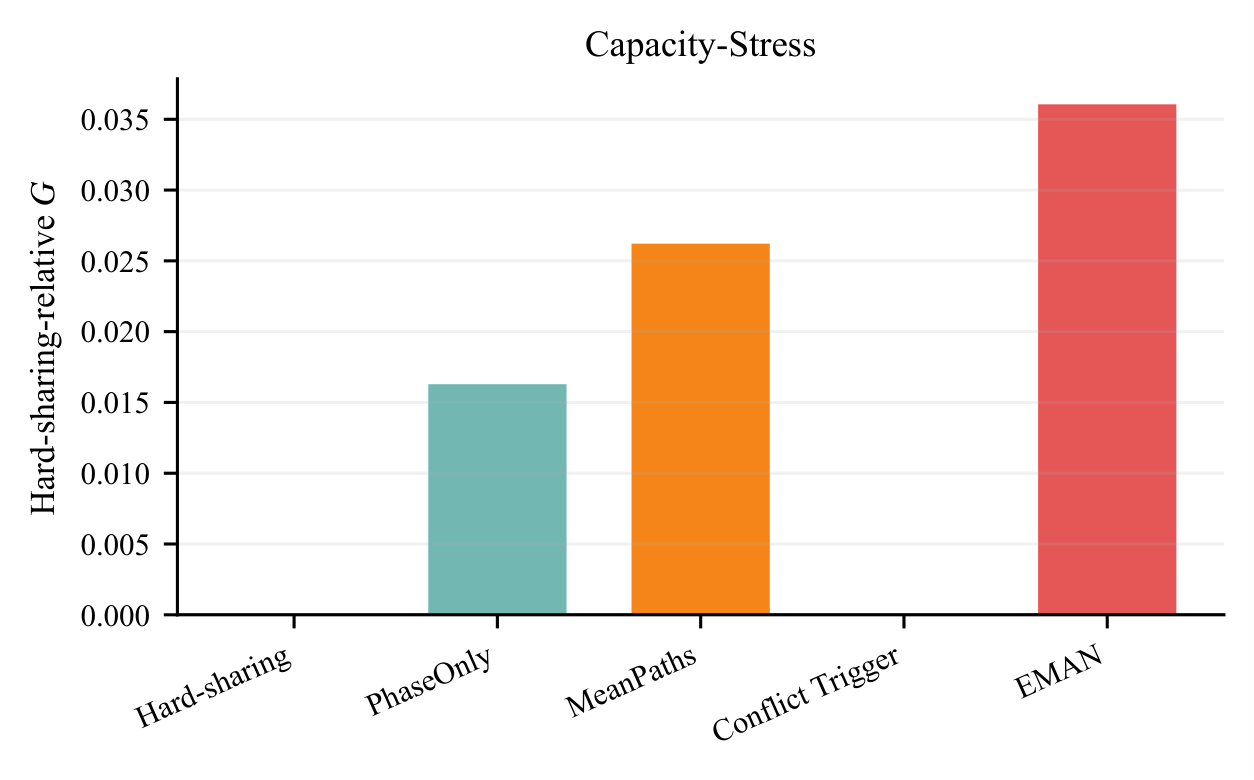}
\hfill
\includegraphics[width=0.47\textwidth]{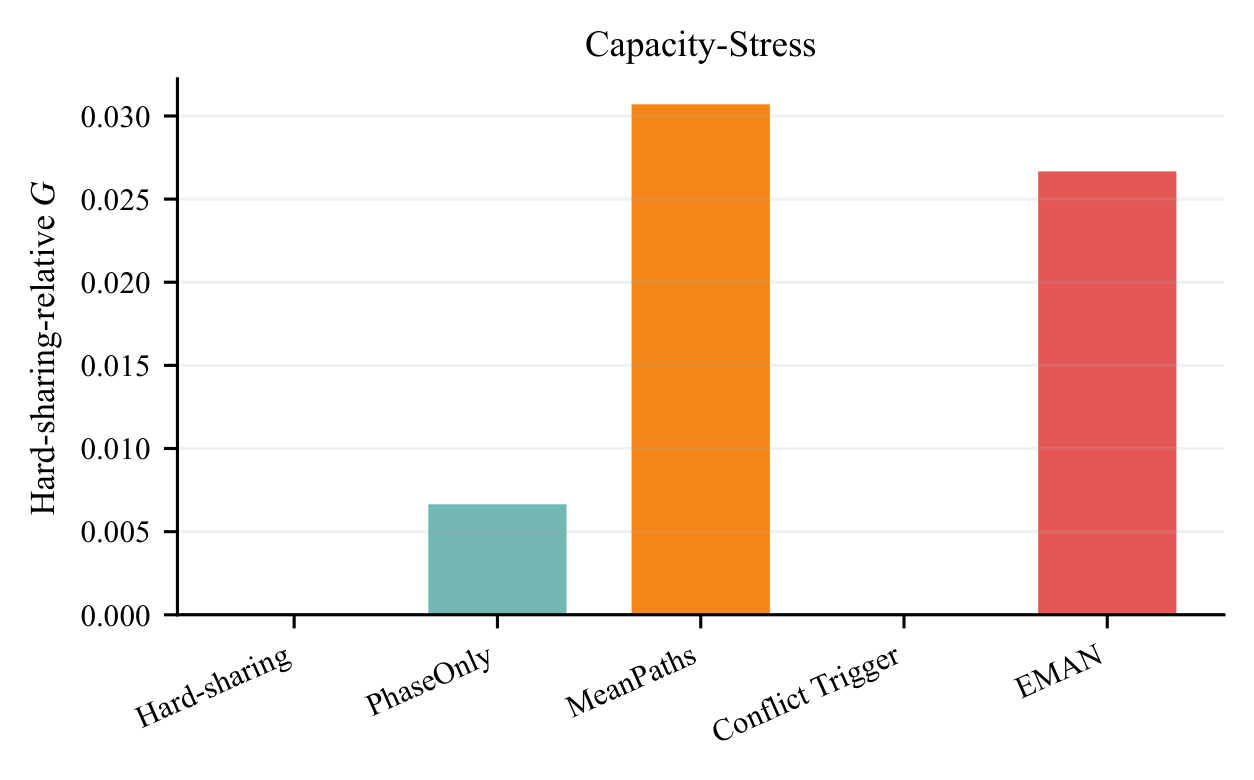}
\hfill
\includegraphics[width=0.47\textwidth]{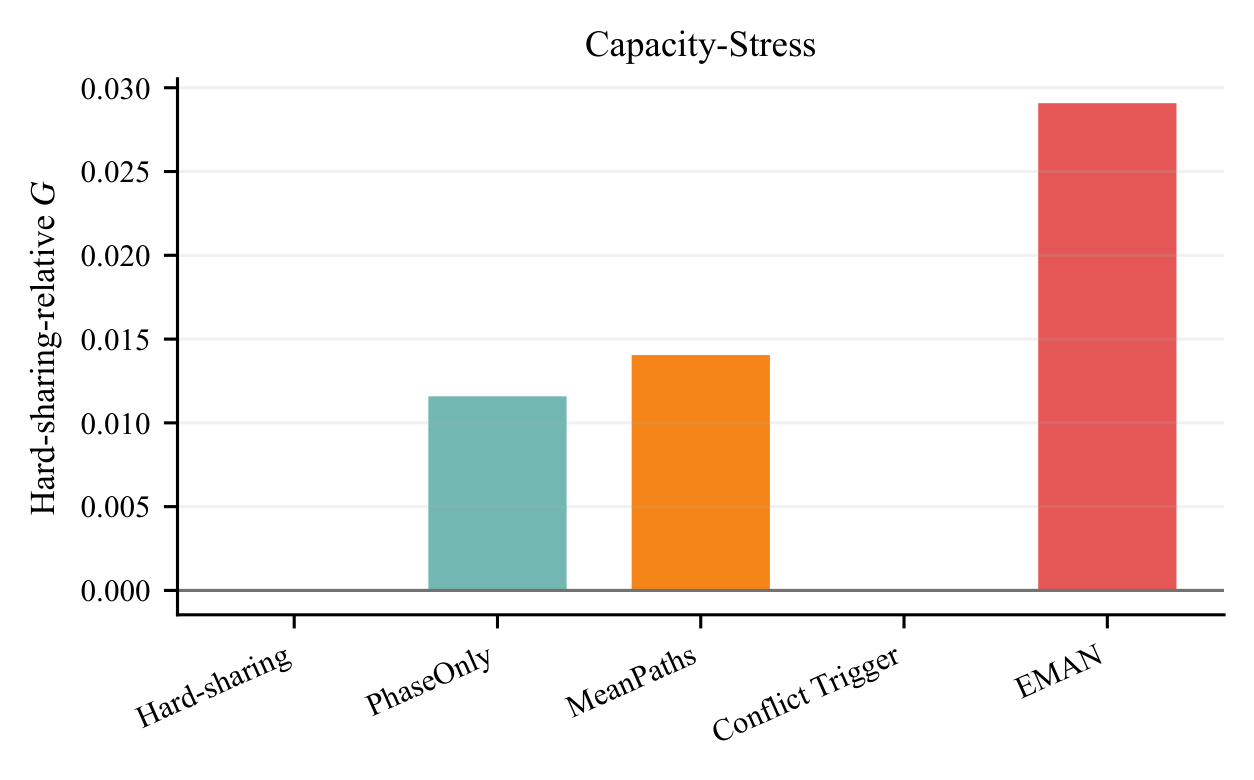}
\par\smallskip
\includegraphics[width=0.72\textwidth]{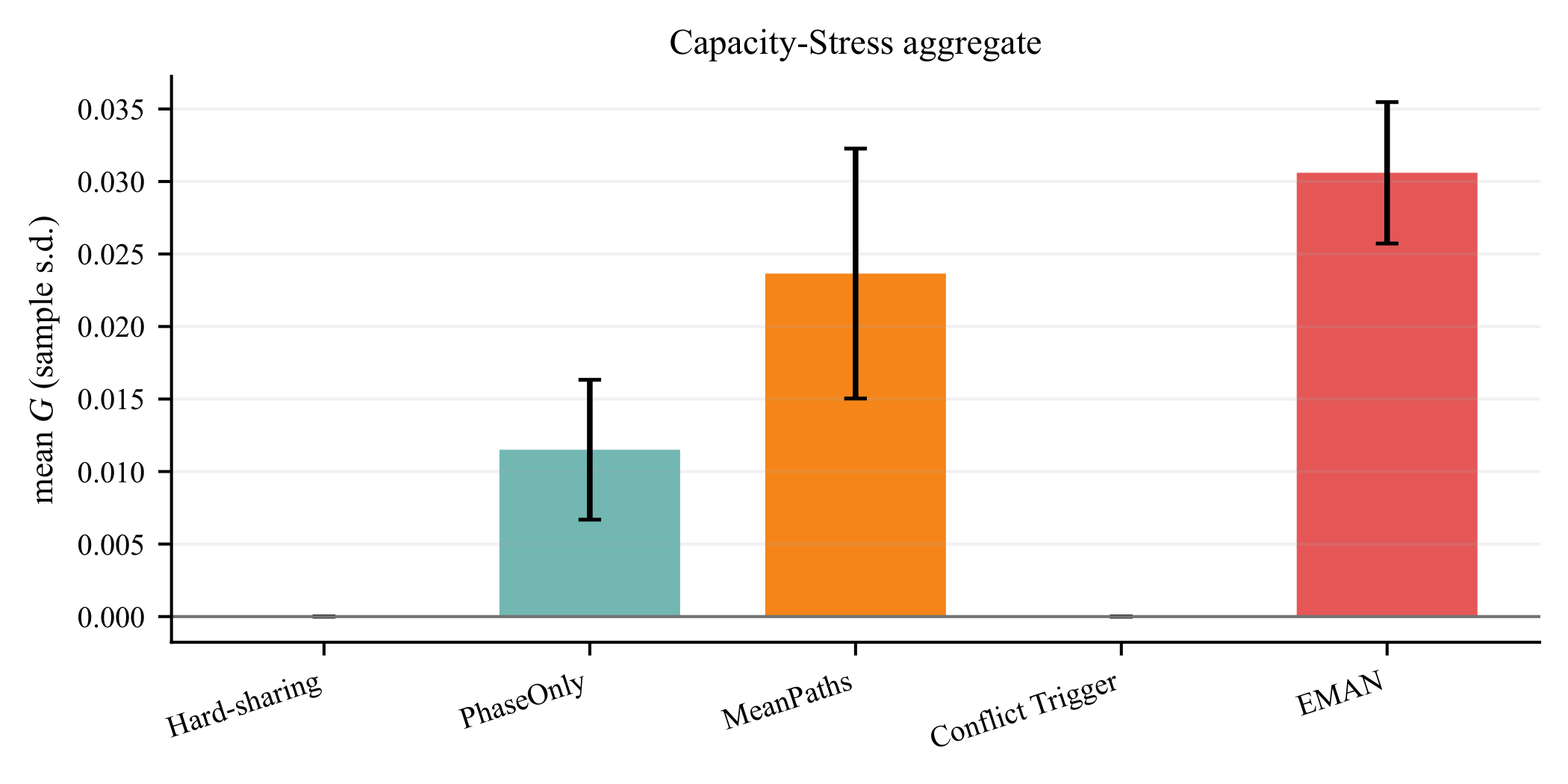}
\caption{Capacity-Stress per-seed and aggregate $G$. The first row retains every registered seed; the lower panel reports the three-seed mean with sample-standard-deviation error bars.}
\label{fig:supp_capacity_performance}
\end{figure*}

Table~\ref{tab:supp_capacity_perf} and
Figure~\ref{fig:supp_capacity_performance} expose the per-seed task trade-offs.
EMAN has the highest three-seed mean $G$, while the per-seed ranking remains mixed. MeanPaths allocates both paths from epoch~1. EMAN has a positive directional aggregate in all three seeds, while structural materialization occurs in Seeds~1 and~3 only. Seed~1 releases late and has its largest directional gain on depth. Seed~2 does not release and remains in exact single-path computation. Its final directional aggregate remains positive under the registered evaluation, with the largest directional gain on surface normals. Seed~3 releases early and shows positive directional gains on all three registered task metrics, with its largest gain on semantic segmentation. The native task metrics remain mixed, so the aggregate claim is limited to the registered mean rather than uniform task-wise superiority.

The non-release Seed~2 is informative rather than missing data. Its phase
curvature does not satisfy the frozen confidence rule. The model therefore
keeps exact sharing. This outcome shows that Capacity-Stress increases the
chance of materialization but does not hard-code a split. It also explains why
the three-seed mean trades some performance against fewer active path
evaluations.

\begin{table}[!t]
\centering
\small
\begin{tabular}{lcc}
\toprule
Method & Mean $G$ & Sample s.d. \\
\midrule
Hard-sharing & 0.0000 & 0.0000 \\
PhaseOnly & 0.0115 & 0.0048 \\
MeanPaths & 0.0237 & 0.0086 \\
Conflict Trigger & -0.0000 & 0.0000 \\
EMAN & \textbf{0.0306} & 0.0049 \\
\bottomrule
\end{tabular}
\caption{Capacity-Stress three-seed aggregate. EMAN has the highest registered mean $G$; the error column is the sample standard deviation.}
\label{tab:supp_capacity_aggregate}
\end{table}

Table~\ref{tab:supp_capacity_events} reports event timing and systems
measurements. Seed~1 releases phase at epoch 32 and paths at epoch 49. Seed~3
releases at epochs 11 and 19. Seed~2 has no release. The final path distances
are $0.0917$, $0$, and $0.5303$. The dual-active ratios are $0.400$, $0$, and
$0.775$.

\begin{table*}[!t]
\centering

\small
\setlength{\tabcolsep}{3.2pt}
\begin{tabular}{llcccccc}
\toprule
Seed & Method & Phase ep. & Path ep. & Dual ratio & Total FLOPs/MP & Path dist. & Wall min. \\
\midrule
Seed 1 & Hard-sharing & -- & -- & 0.000 & 0.962 & 0.0000 & 47.2 \\
 & PhaseOnly & \textbf{32} & -- & 0.000 & 0.962 & 0.0000 & 115.8 \\
 & MeanPaths & -- & 1 & 1.000 & 1.000 & 0.9413 & 59.0 \\
 & Conflict Trigger & -- & -- & 0.000 & 0.962 & 0.0000 & 134.0 \\
 & EMAN & \textbf{32} & \textbf{49} & 0.400 & 0.977 & 0.0917 & 153.7 \\
\midrule
Seed 2 & Hard-sharing & -- & -- & 0.000 & 0.962 & 0.0000 & 71.5 \\
 & PhaseOnly & -- & -- & 0.000 & 0.962 & 0.0000 & 142.0 \\
 & MeanPaths & -- & 1 & 1.000 & 1.000 & 0.9147 & 69.2 \\
 & Conflict Trigger & -- & -- & 0.000 & 0.962 & 0.0000 & 149.7 \\
 & EMAN & -- & -- & 0.000 & 0.962 & 0.0000 & 204.6 \\
\midrule
Seed 3 & Hard-sharing & -- & -- & 0.000 & 0.962 & 0.0000 & 72.8 \\
 & PhaseOnly & \textbf{11} & -- & 0.000 & 0.962 & 0.0000 & 79.6 \\
 & MeanPaths & -- & 1 & 1.000 & 1.000 & 0.8915 & 73.0 \\
 & Conflict Trigger & -- & \textbf{79} & 0.025 & 0.963 & 0.0001 & 150.6 \\
 & EMAN & \textbf{11} & \textbf{19} & 0.775 & 0.992 & 0.5303 & 92.3 \\
\bottomrule
\end{tabular}
\caption{Capacity-Stress event and system details. Total FLOPs are normalized
within each seed by MeanPaths. Wall-clock is hardware-specific.}
\label{tab:supp_capacity_events}
\end{table*}

\begin{table*}[!t]
\centering

\small
\setlength{\tabcolsep}{4pt}
\begin{tabular}{lccccccc}
\toprule
Seed & Probe epoch & Norm. curvature & UCB95 & FD pass & Safety pass & Persistence & Phase release \\
\midrule
Seed 1 & \textbf{32} & $-0.3663$ & \textbf{$-0.00625$} & yes & yes & 2 & \textbf{yes} \\
Seed 2 & 80 & $-0.1151$ & $+0.02387$ & yes & yes & 0 & no \\
Seed 3 & \textbf{11} & $-0.4321$ & \textbf{$-0.00604$} & yes & yes & 2 & \textbf{yes} \\
\bottomrule
\end{tabular}
\caption{Phase-certificate snapshots. Seed~2 reports its terminal probe. Its
upper confidence bound remains positive, so the frozen phase rule does not
accept the event.}
\label{tab:supp_phase_snapshots}
\end{table*}

\begin{table*}[!t]
\centering

\small
\setlength{\tabcolsep}{3.5pt}
\begin{tabular}{lccccccc}
\toprule
Seed & Path epoch & Strength & Half cosine & Median shard cosine & Frozen $\Delta L$ & Dynamic $\Delta L$ & Persistence \\
\midrule
Seed 1 & \textbf{49} & \textbf{0.007971} & 0.9465 & 0.9087 & $-2.4\times10^{-5}$ & $-2.4\times10^{-5}$ & \textbf{2} \\
Seed 2 & -- & -- & -- & -- & -- & -- & -- \\
Seed 3 & \textbf{19} & \textbf{0.004558} & 0.9424 & 0.8715 & $-1.4\times10^{-5}$ & $-1.4\times10^{-5}$ & \textbf{2} \\
\bottomrule
\end{tabular}
\caption{Accepted path-certificate snapshots. Both released seeds exceed the
recorded $10^{-3}$ strength floor, pass consistency, produce negative finite-step
loss changes, and satisfy the two-probe persistence rule. Seed~2 never enters
path certification because phase release is not accepted.}
\label{tab:supp_path_snapshots}
\end{table*}

Table~\ref{tab:supp_capacity_events} separates release timing from final
path distance. Seed~1 materializes at epoch~49. Its dual-active ratio is
$40\%$. Seed~3 materializes at epoch~19 and has a $77.5\%$ ratio. The later
release in Seed~1 leaves less time for divergence. Seed~2 has zero distance
because no second physical path exists. MeanPaths is dual from epoch~1 and
provides the path-compute reference. The wall-clock columns also expose
population-probe overhead. Certification can take more time even when it
reduces path evaluations.

Table~\ref{tab:supp_phase_snapshots} explains the first gate. Seeds~1 and~3
have negative normalized curvature with negative upper confidence bounds,
pass finite-difference and safety checks, and satisfy the two-probe persistence
rule. Seed~2 has negative point curvature but a positive upper confidence
bound. OEC therefore treats the evidence as insufficiently stable and does not
release phase. Table~\ref{tab:supp_path_snapshots} explains the second gate.
The two accepted seeds exceed the strength floor, show high half- and
shard-level cosine agreement, and reduce loss under both finite-step tests.
Seed~2 has no path-certificate row because the first gate never opens.

\begin{figure*}[!t]
\centering
\setlength{\abovecaptionskip}{3pt}
\includegraphics[width=0.455\textwidth]{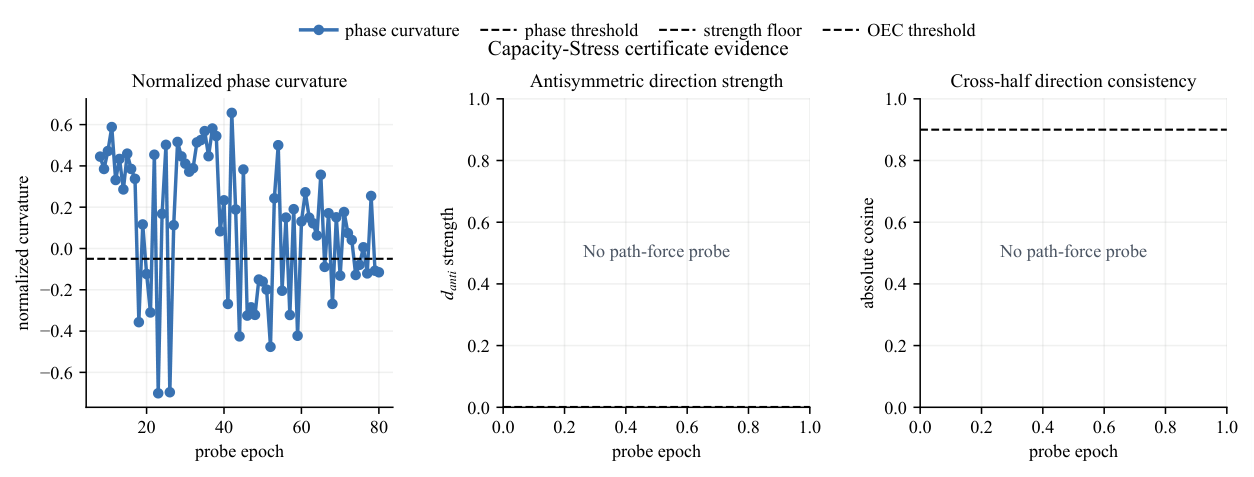}
\hfill
\includegraphics[width=0.455\textwidth]{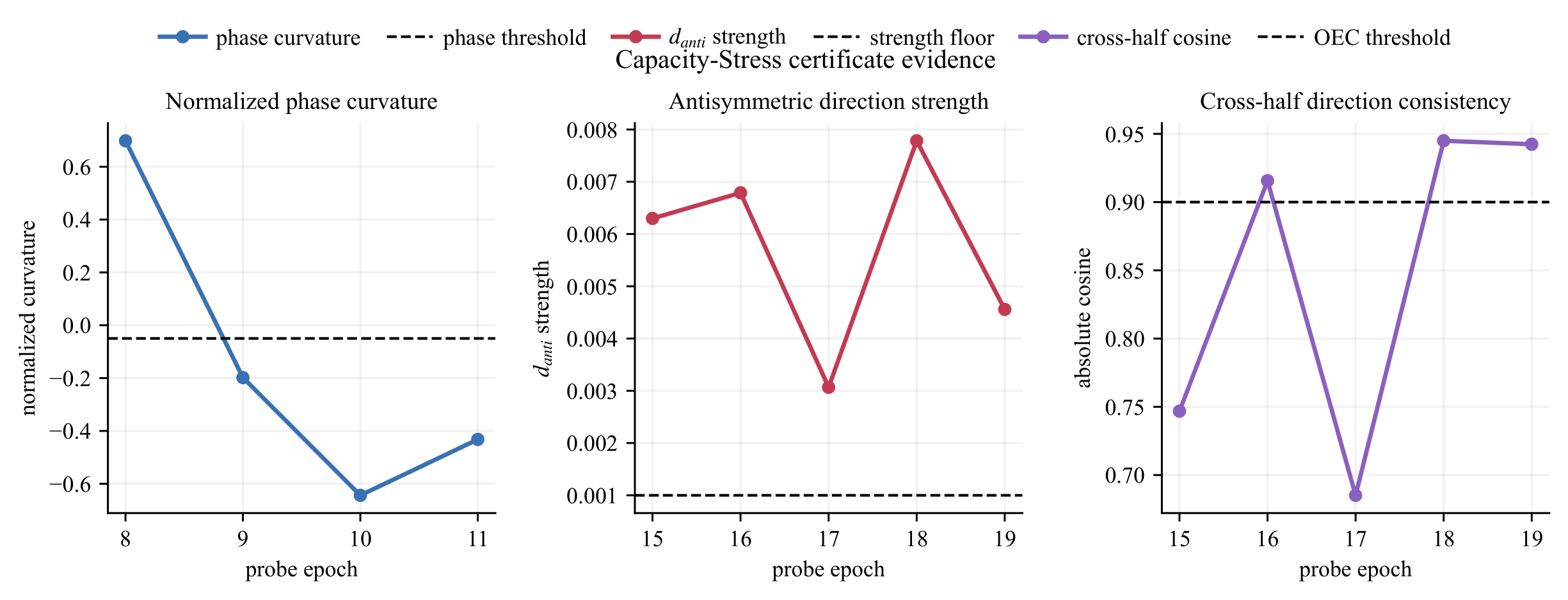}
\caption{Expanded population certificates under Capacity-Stress. Seed~2 never produces a persistent phase certificate and remains in exact single-path computation. Seed~3 passes the path gate over the frozen persistence window and releases at epoch~19.}
\label{fig:supp_capacity_certificates}
\end{figure*}

Figure~\ref{fig:supp_capacity_certificates} contrasts the two decisive
certificate patterns. In Seed~2, only the phase-curvature history is present;
path-force strength and consistency are absent because the run never enters
the OEC path-probing state. In Seed~3, the phase event is followed by a stable
adaptation interval and then by two consecutive path-certificate passes. The
comparison demonstrates that a blank path-force panel denotes an unexecuted
probe stage, not a zero-valued measurement or a plotting failure.

\begin{figure*}[!t]
\centering
\includegraphics[width=0.78\textwidth]{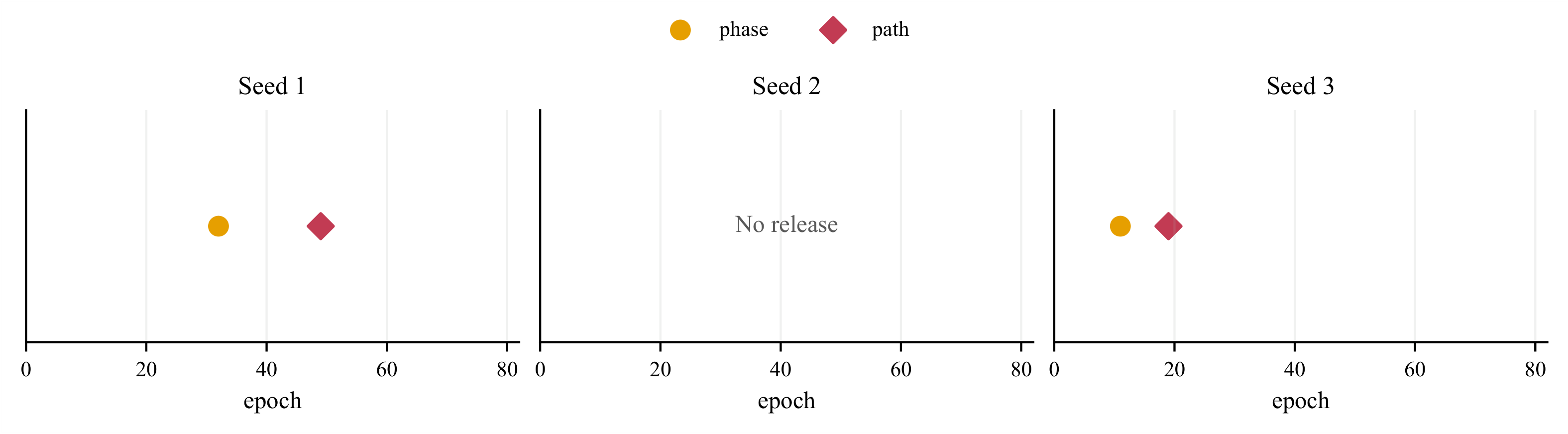}
\caption{Capacity-Stress release events. Seeds~1 and~3 pass the registered phase and path decisions; Seed~2 remains shared. Final path distances and active-path ratios are reported in Table~\ref{tab:supp_capacity_events}.}
\label{fig:supp_capacity_events}
\end{figure*}

Figure~\ref{fig:supp_capacity_events} provides the event trace and the final
structural consequence in one view. Release markers identify when each state
transition occurs. Path distance remains zero until materialization and grows
only after the paths become independently trainable. Seed~1 and Seed~3 follow
this expected ordering. Seed~2 contains no marker and remains at zero distance.
The figure therefore links the discrete certificate decision to the continuous
post-release separation trajectory.

EMAN has the highest mean $G$ under the registered Capacity-Stress protocol. MeanPaths trains two paths throughout. EMAN uses
$69.6\%$ of MeanPaths path-training evaluations on average. Its estimated
total FLOPs are $97.7\%$ because the shared heads, fusion, and other non-path
components remain active. Population certification raises mean wall-clock to
$2.24\times$ MeanPaths in this research implementation. This gap separates the
algorithmic path-evaluation saving from current systems overhead. It motivates
batched or lower-frequency certificate implementations; it does not change the
recorded path-compute accounting.

\subsection{Standard-Capacity NYUv2: Seed Dependence and Systems Cost}
\label{sec:supp_nyuv2_external}

The main paper reports the three-seed aggregate. Table
\ref{tab:supp_nyuv2_perseed} gives every seed. EMAN has positive $G$ in
Seed~1, near-zero $G$ in Seed~2, and negative $G$ in Seed~3. The aggregate
claim is therefore competitive performance. It is not uniform superiority.

\begin{table*}[!t]
\centering

\small
\begin{tabular*}{0.72\textwidth}{@{\extracolsep{\fill}}lccc@{}}
\toprule
Method & Params (M) & Mean peak GiB & Wall-clock (h) \\
\midrule
AdaShare & 26.20 & 2.76 & $2.21\pm0.07$ \\
EMAN & 31.44 & 2.49 & $1.12\pm0.36$ \\
Hard-sharing & 26.20 & 1.96 & $1.12\pm0.01$ \\
MeanPaths & 28.82 & \textbf{1.93} & \textbf{$1.11\pm0.05$} \\
Recon & 27.02 & 3.90 & $2.00\pm0.04$ \\
\bottomrule
\end{tabular*}
\caption{Mean NYUv2 system measurements over three seeds. Wall-clock reports
mean $\pm$ sample standard deviation.}
\label{tab:supp_nyuv2_systems}
\end{table*}

\begin{figure*}[!t]
\centering
\includegraphics[width=0.90\textwidth]{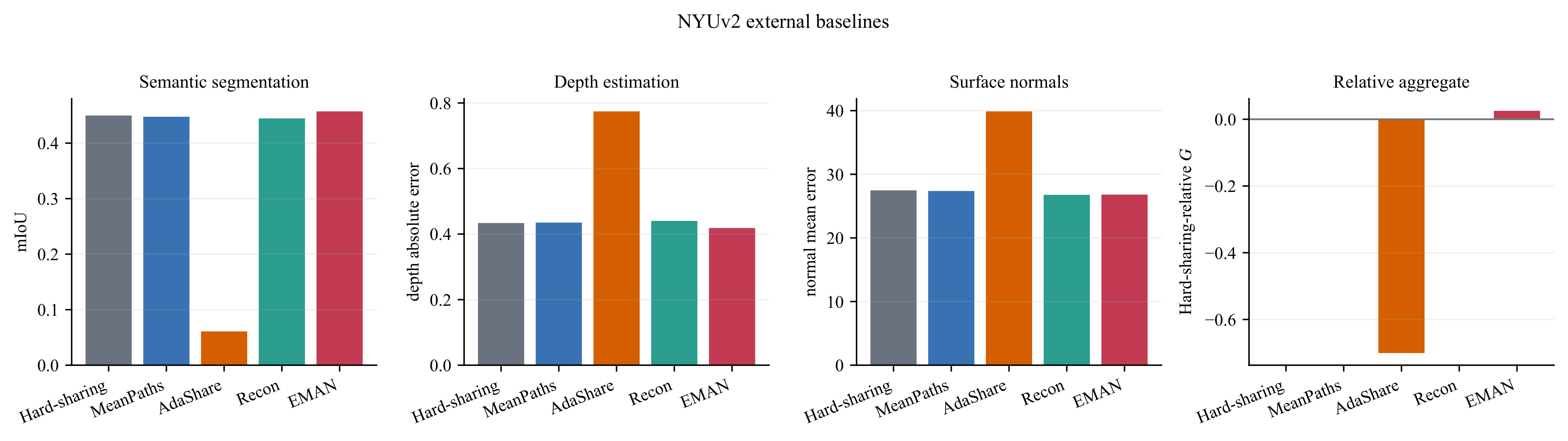}\par
\includegraphics[width=0.90\textwidth]{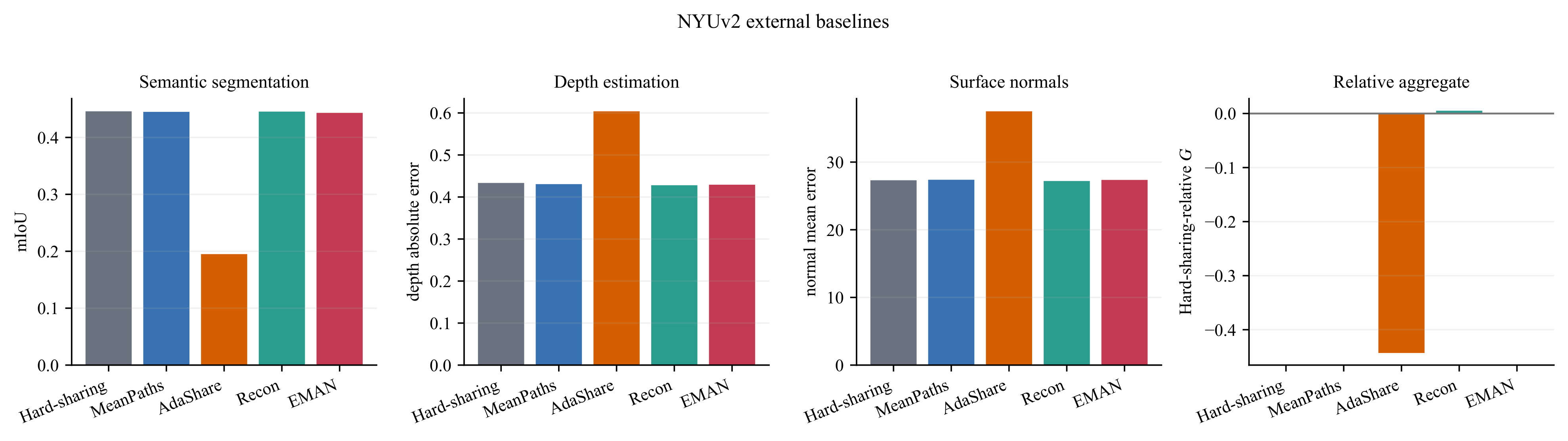}\par
\includegraphics[width=0.90\textwidth]{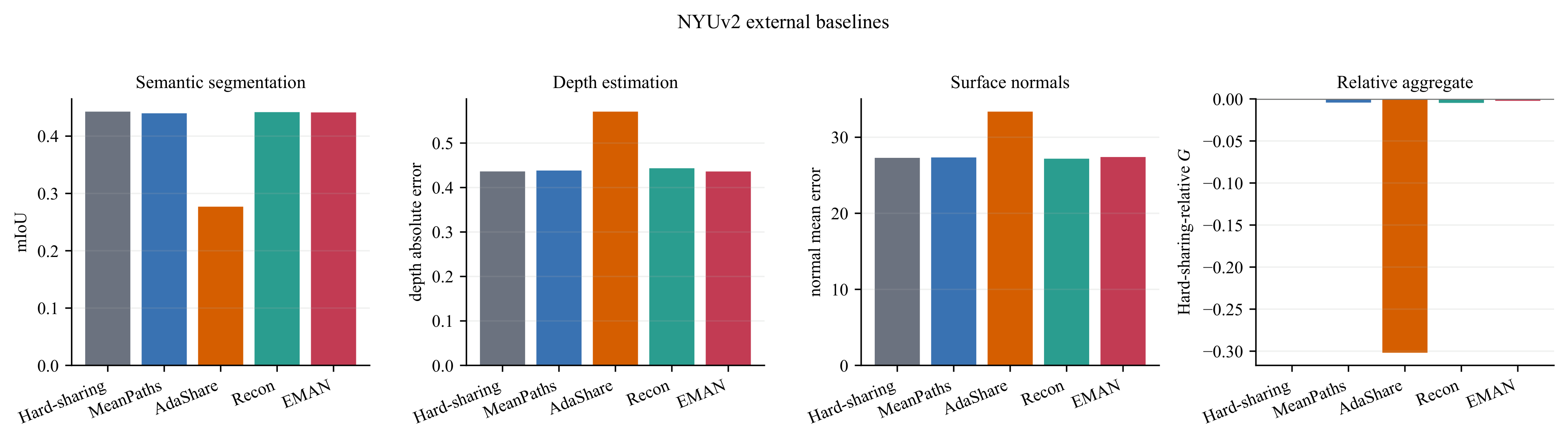}
\caption{Task-level NYUv2 external comparisons for all seeds. The plots show
the positive Seed~1 result and the weaker Seed~2 and Seed~3 results.}
\label{fig:supp_nyuv2_external_seeds}
\end{figure*}

Figure~\ref{fig:supp_nyuv2_external_seeds} visualizes the same seed
dependence at task level. The strong Seed~1 result is not hidden by averaging,
and the weaker Seed~2 and Seed~3 cases are retained at equal scale. Across the
three plots, EMAN remains close to the leading method on each task. The
variation is mainly in the direction and magnitude of the small differences,
not in catastrophic task collapse. AdaShare is the clear unstable baseline in
this matched implementation.

Table~\ref{tab:supp_nyuv2_systems} separates model size, peak memory, and
elapsed time. EMAN uses 31.44M parameters, compared with 26.20M for
Hard-sharing and 28.82M for MeanPaths. Its mean peak memory is below AdaShare
and Recon. Its mean wall-clock matches Hard-sharing and remains close to
MeanPaths, although the larger standard deviation reflects seed-dependent
structural behavior and measurement variability. Recon and AdaShare have
higher runtime in this implementation. These measurements describe the
reported system and do not establish a portable speed ranking across hardware
or software stacks.

\begin{table*}[!t]
\centering

\small
\setlength{\tabcolsep}{4pt}
\begin{tabular}{llccccc}
\toprule
Seed & Method & $G$ & mIoU & Depth Abs. & Normal Mean & Joint Loss \\
\midrule
Seed 1 & Hard-sharing & 0.0000 & 0.4498 & 0.4337 & 27.4766 & 0.9215 \\
 & MeanPaths & -0.0015 & 0.4475 & 0.4350 & 27.3759 & 0.9147 \\
 & AdaShare & -0.7006 & 0.0608 & 0.7743 & 39.8845 & 2.6983 \\
 & Recon & -0.0003 & 0.4445 & 0.4402 & 26.7701 & 0.9511 \\
 & EMAN & \textbf{0.0253} & 0.4572 & 0.4184 & 26.8078 & 0.8992 \\
\midrule
Seed 2 & Hard-sharing & 0.0000 & 0.4459 & 0.4334 & 27.3137 & 0.9588 \\
 & MeanPaths & 0.0005 & 0.4449 & 0.4306 & 27.3891 & 0.9398 \\
 & AdaShare & -0.4431 & 0.1950 & 0.6039 & 37.5043 & 1.4366 \\
 & Recon & \textbf{0.0052} & 0.4454 & 0.4280 & 27.2034 & 0.9291 \\
 & EMAN & 0.0005 & 0.4431 & 0.4293 & 27.3637 & 0.9588 \\
\midrule
Seed 3 & Hard-sharing & \textbf{0.0000} & 0.4427 & 0.4362 & 27.2943 & 0.9606 \\
 & MeanPaths & -0.0043 & 0.4398 & 0.4382 & 27.3466 & 0.9791 \\
 & AdaShare & -0.3018 & 0.2769 & 0.5707 & 33.3711 & 1.2709 \\
 & Recon & -0.0047 & 0.4419 & 0.4433 & 27.1810 & 0.9149 \\
 & EMAN & -0.0023 & 0.4413 & 0.4361 & 27.4025 & 0.9661 \\
\bottomrule
\end{tabular}
\caption{Standard-capacity NYUv2 results for every seed. Boldface marks the
strongest $G$ value within each seed. Since Hard-sharing defines the
reference, its $G$ is zero by construction.}
\label{tab:supp_nyuv2_perseed}
\end{table*}

Table~\ref{tab:supp_nyuv2_perseed} shows why the standard-capacity claim is
stated at the aggregate level. Seed~1 provides a clear positive EMAN result:
it improves segmentation, depth, normal error, and $G$ relative to
its Hard-sharing reference. Seed~2 is near the reference and remains
competitive with the external baselines. Seed~3 is slightly negative in $G$,
although its task values stay close to Hard-sharing. The three-seed mean is
therefore evidence of competitive multi-task behavior, not a claim that EMAN
wins every seed or every metric.

\begin{figure*}[!t]
\centering
\includegraphics[width=0.47\textwidth]{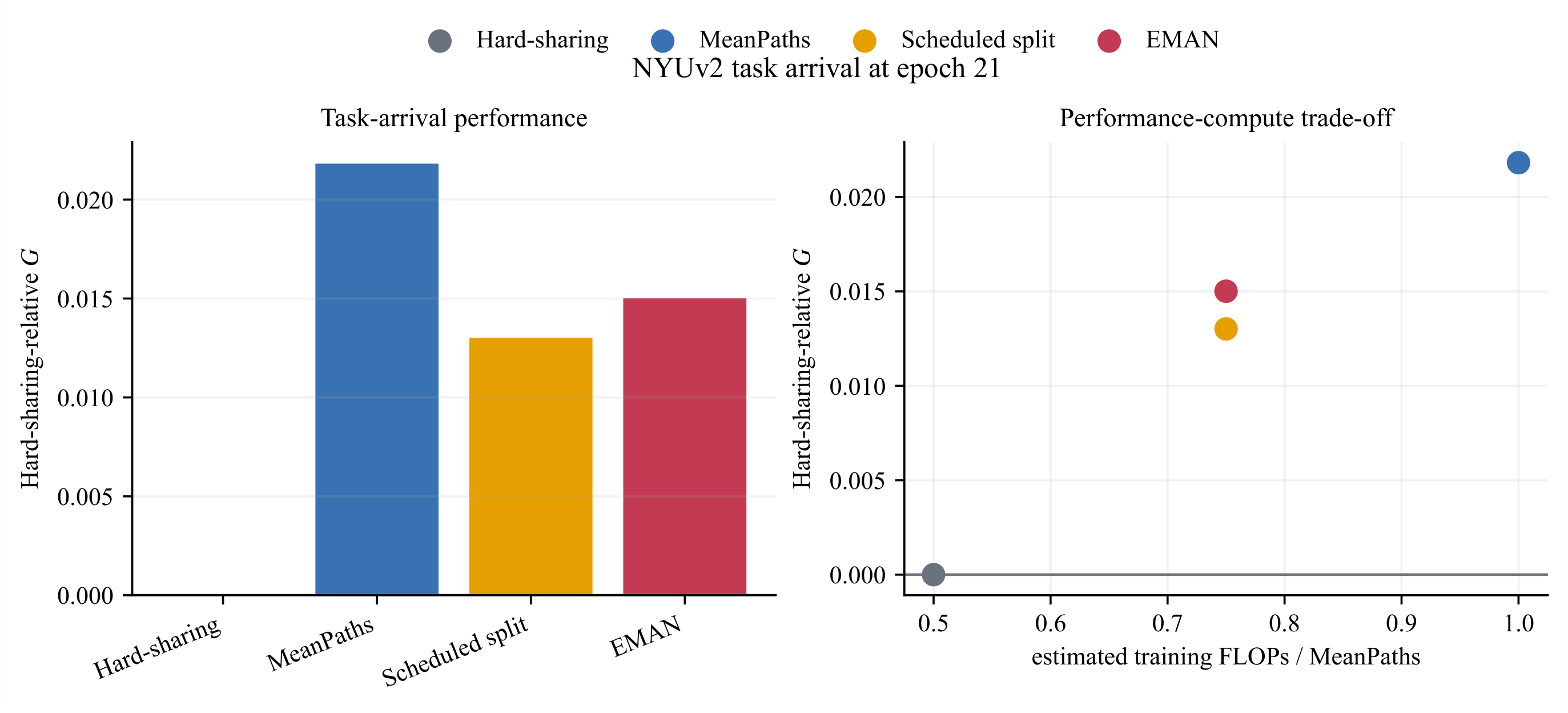}
\par
\includegraphics[width=0.47\textwidth]{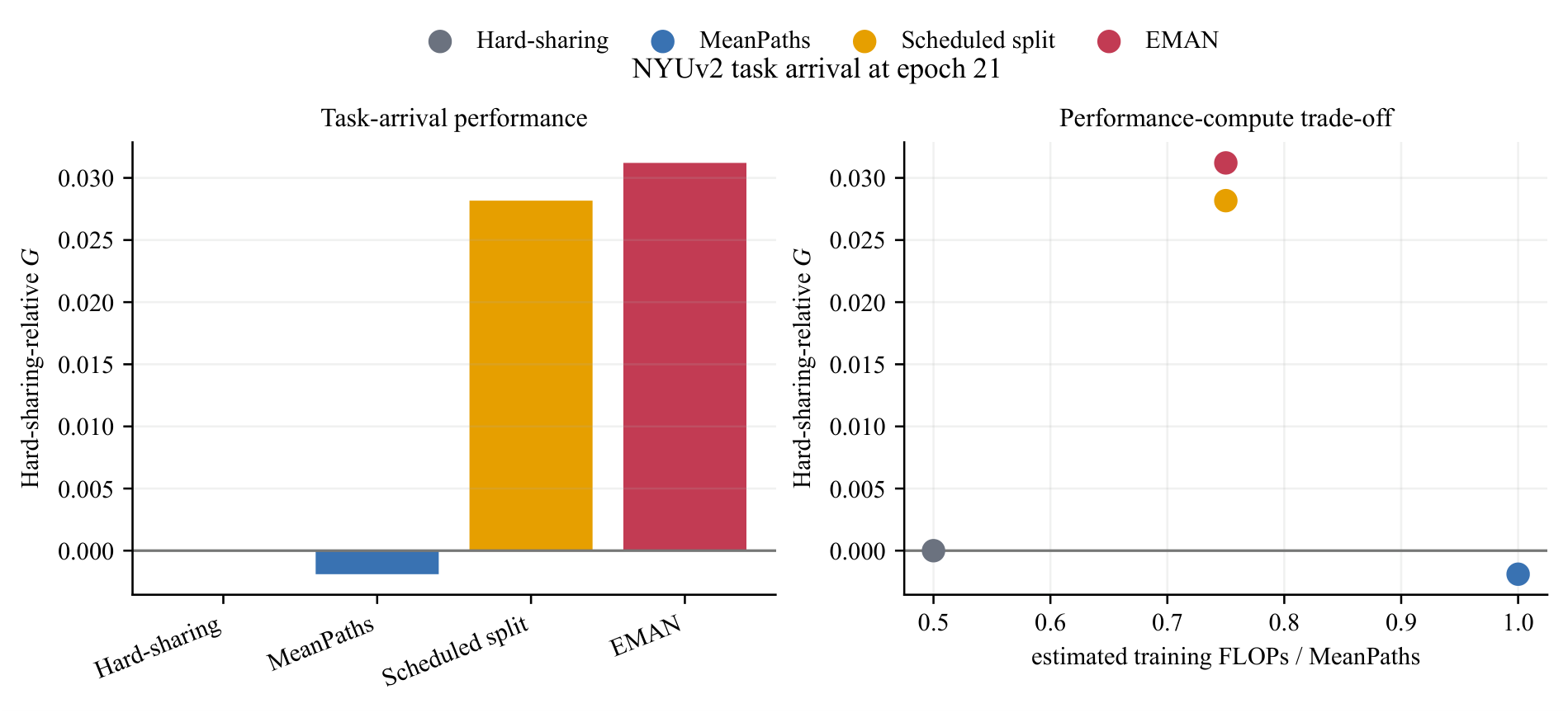}
\hfill
\includegraphics[width=0.47\textwidth]{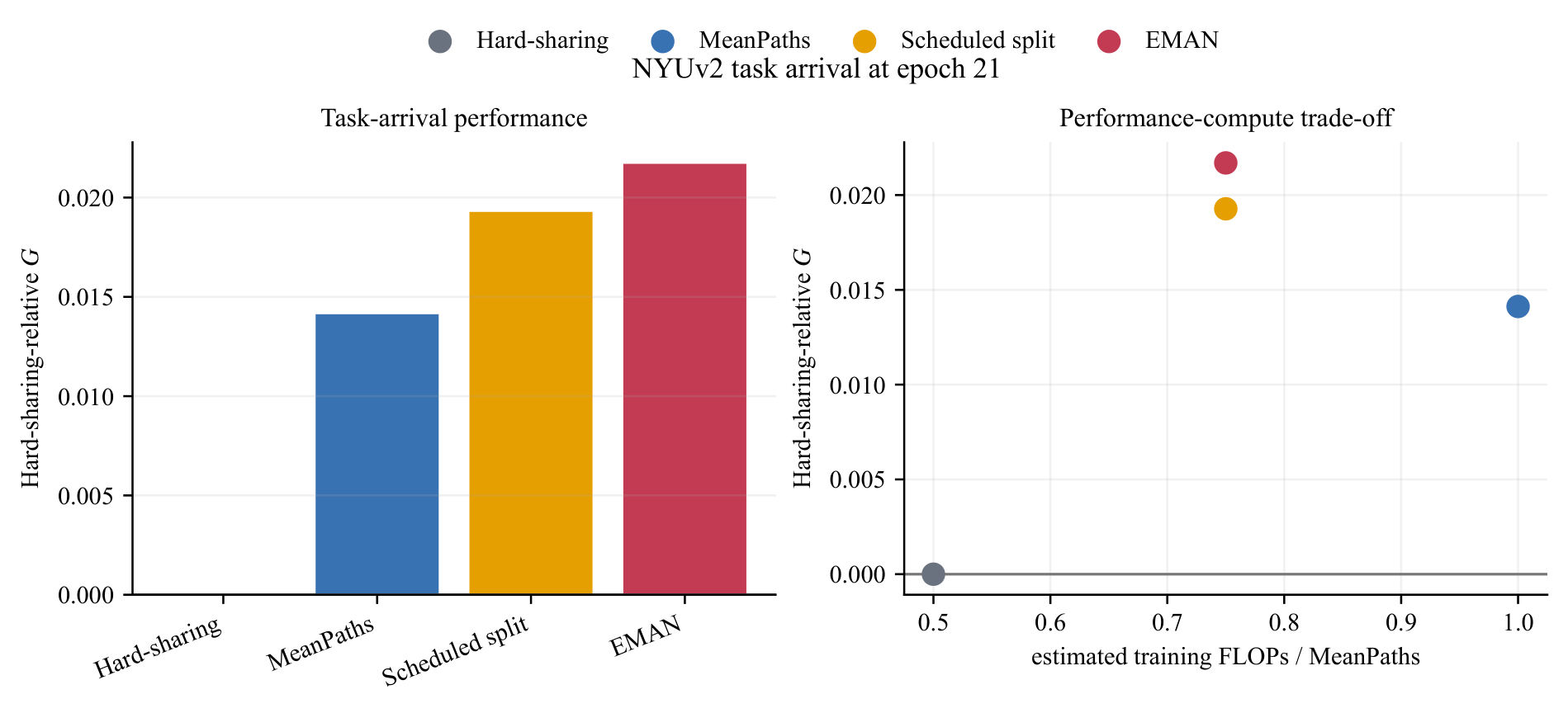}
\caption{NYUv2 task-arrival performance and compute for all seeds. EMAN and
the matched scheduled split use the same estimated path-training budget.}
\label{fig:supp_nyuv2_arrival}
\end{figure*}

\subsection{NYUv2 Task Arrival at Epoch 21}
\label{sec:supp_nyuv2_arrival}

The task-arrival protocol delays part of the demand until epoch 21. EMAN is
not given this epoch. The frozen transition releases at the demand change in
all three runs. Table~\ref{tab:supp_nyuv2_arrival} gives every seed. EMAN
achieves $G=0.0226\pm0.0081$. Scheduled split obtains
$0.0202\pm0.0076$. MeanPaths obtains $0.0114\pm0.0121$.

\begin{table*}[!t]
\centering
\setlength{\abovecaptionskip}{3pt}

\small
\setlength{\tabcolsep}{3.6pt}
\begin{tabular}{llcccccc}
\toprule
Seed & Method & $G$ & mIoU & Depth Abs. & Normal Mean & Path FLOPs/MP & Wall min. \\
\midrule
Seed 1 & Hard-sharing & 0.0000 & 0.1206 & 0.8615 & 43.1966 & 0.50 & 54.2 \\
 & MeanPaths & \textbf{0.0218} & 0.1267 & 0.8578 & 42.7270 & 1.00 & 57.7 \\
 & Scheduled split & 0.0130 & 0.1250 & 0.8581 & 43.2455 & \textbf{0.75} & 57.5 \\
 & EMAN & 0.0150 & 0.1252 & 0.8578 & 43.0776 & \textbf{0.75} & 57.6 \\
\midrule
Seed 2 & Hard-sharing & 0.0000 & 0.1278 & 0.8598 & 43.3667 & 0.50 & 55.5 \\
 & MeanPaths & -0.0019 & 0.1261 & 0.8588 & 43.0829 & 1.00 & 57.6 \\
 & Scheduled split & 0.0282 & 0.1356 & 0.8524 & 42.7119 & \textbf{0.75} & 57.6 \\
 & EMAN & \textbf{0.0312} & 0.1360 & 0.8505 & 42.5749 & \textbf{0.75} & 56.4 \\
\midrule
Seed 3 & Hard-sharing & 0.0000 & 0.1286 & 0.8608 & 43.2018 & 0.50 & 56.8 \\
 & MeanPaths & 0.0141 & 0.1322 & 0.8579 & 42.7357 & 1.00 & 56.7 \\
 & Scheduled split & 0.0193 & 0.1345 & 0.8567 & 42.8944 & \textbf{0.75} & 56.7 \\
 & EMAN & \textbf{0.0217} & 0.1353 & 0.8571 & 42.8449 & \textbf{0.75} & 56.8 \\
\bottomrule
\end{tabular}
\caption{NYUv2 task-arrival results. The estimated path-training FLOPs are
normalized by always-dual MeanPaths. Boldface marks the strongest $G$ values
and the matched delayed path budget.}
\label{tab:supp_nyuv2_arrival}
\end{table*}

Table~\ref{tab:supp_nyuv2_arrival} controls the main compute comparison.
Scheduled split and EMAN materialize at the same epoch and use the same
$0.75$ path-FLOP ratio. Their difference is the release direction: Scheduled
split uses the matched random split, while EMAN uses the certified
antisymmetric direction. EMAN exceeds Scheduled split in all three seeds by
$0.0020$, $0.0030$, and $0.0024$ in $G$. The consistent sign supports the
reported aggregate, but the sample is too small to establish a universal
direction advantage.

Figure~\ref{fig:supp_nyuv2_arrival} shows that the comparison is not produced
by a wall-clock discrepancy. Scheduled split and EMAN have nearly identical
elapsed times, and both remain close to MeanPaths. The saved path evaluations
occur before epoch~21. After release, all three two-path methods execute the
same number of paths. The result therefore isolates delayed allocation rather
than sparse inference.

\subsection{PASCAL-Context as a Cross-Dataset Boundary}
\label{sec:supp_pascal_external}

The static PASCAL-Context protocol activates semantic segmentation,
human-part segmentation, saliency, and surface-normal prediction from the
beginning. It is a cross-dataset scope check rather than a delayed-growth
setting. Table~\ref{tab:supp_pascal_external} reports all seeds. Recon has the
highest mean aggregate, $G=0.0158\pm0.0069$, while EMAN has
$G=-0.0098\pm0.0007$. The aggregate is consistently small and negative, but
the native task metrics are mixed. EMAN obtains the strongest semantic score
and joint loss in Seed~1, the strongest saliency and joint loss in Seed~2, and
near-reference semantic and joint-loss values in Seed~3. The result therefore
narrows the performance scope without implying a uniform task failure.

\begin{table*}[!t]
\centering
\setlength{\abovecaptionskip}{3pt}

\small
\setlength{\tabcolsep}{3.2pt}
\begin{tabular}{llcccccc}
\toprule
Seed & Method & $G$ & Semantic & Parts & Saliency & Normal Mean & Joint Loss \\
\midrule
Seed 1 & Hard-sharing & 0.0000 & 0.7037 & 0.1277 & 0.6999 & 20.3541 & 0.5307 \\
 & MeanPaths & -0.0080 & 0.7042 & 0.1236 & 0.6998 & 20.3702 & 0.5307 \\
 & AdaShare & -0.3186 & 0.5697 & 0.0490 & 0.6011 & 27.0011 & 0.7185 \\
 & Recon & \textbf{0.0118} & 0.7006 & 0.1302 & 0.7006 & \textbf{19.7143} & 0.5659 \\
 & EMAN & -0.0091 & \textbf{0.7044} & 0.1253 & 0.6937 & 20.5573 & \textbf{0.5235} \\
\midrule
Seed 2 & Hard-sharing & 0.0000 & 0.7006 & 0.1238 & 0.6950 & 20.4850 & 0.5413 \\
 & MeanPaths & -0.0001 & 0.7021 & 0.1234 & 0.6954 & 20.4846 & 0.5394 \\
 & AdaShare & -0.4626 & 0.5377 & 0.0507 & 0.2125 & 27.3071 & 1.0061 \\
 & Recon & \textbf{0.0237} & 0.6980 & \textbf{0.1336} & 0.6941 & \textbf{20.0724} & 0.5671 \\
 & EMAN & -0.0100 & 0.6990 & 0.1204 & \textbf{0.6961} & 20.7316 & \textbf{0.5320} \\
\midrule
Seed 3 & Hard-sharing & 0.0000 & 0.7028 & 0.1227 & 0.7004 & 20.4786 & 0.5323 \\
 & MeanPaths & 0.0066 & 0.7000 & 0.1268 & 0.6976 & 20.4531 & \textbf{0.5310} \\
 & AdaShare & -0.3082 & 0.5714 & 0.0799 & 0.4438 & 27.2491 & 0.8009 \\
 & Recon & \textbf{0.0118} & 0.6877 & 0.1249 & \textbf{0.7034} & \textbf{19.5289} & 0.6386 \\
 & EMAN & -0.0105 & 0.7021 & 0.1208 & 0.6918 & 20.7414 & 0.5317 \\
\bottomrule
\end{tabular}
\caption{PASCAL-Context external comparison for every seed. Higher is better
except normal mean and joint loss.}
\label{tab:supp_pascal_external}
\end{table*}

\begin{figure*}[!t]
\centering
\includegraphics[width=0.86\textwidth]{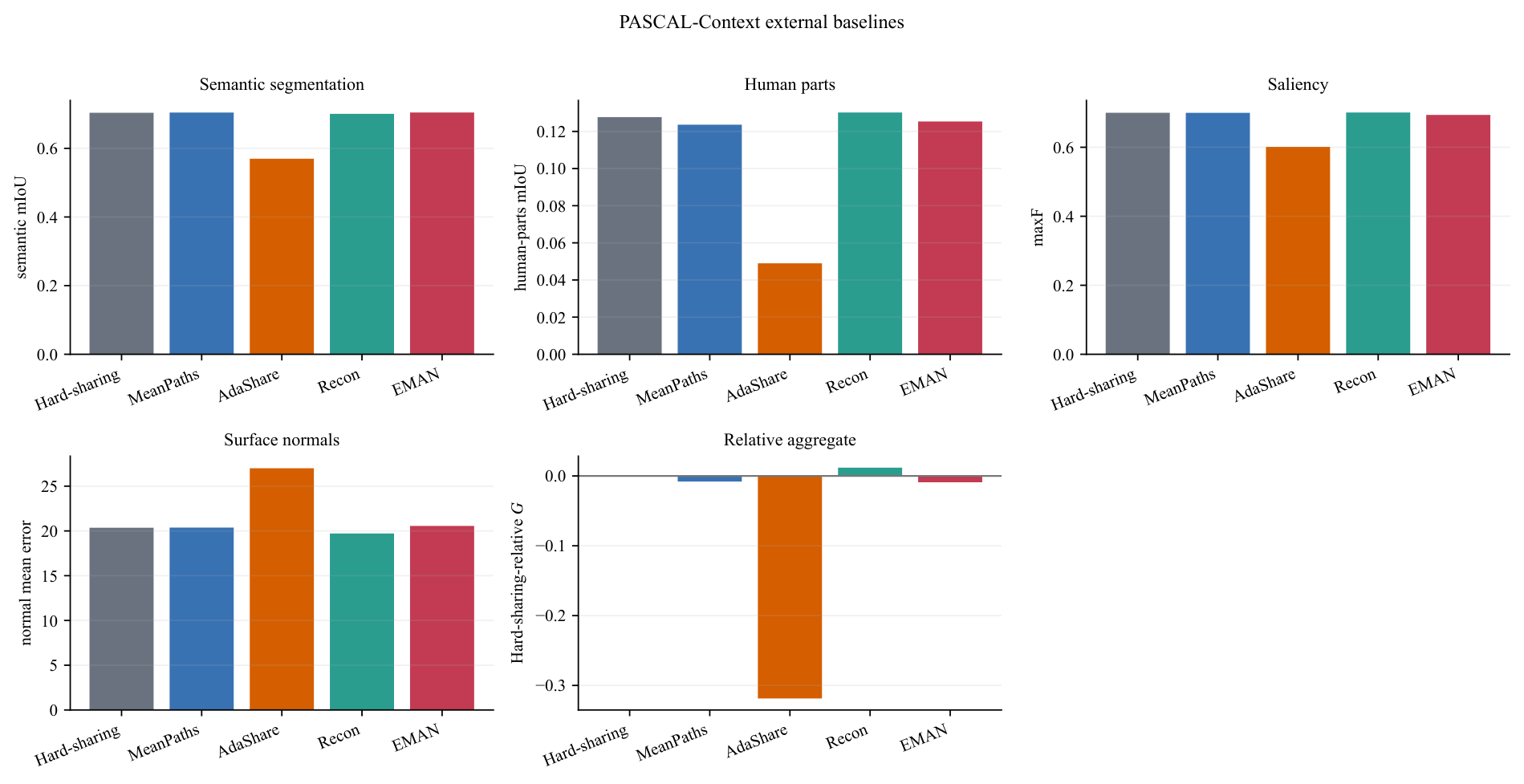}
\par\smallskip
\includegraphics[width=0.86\textwidth]{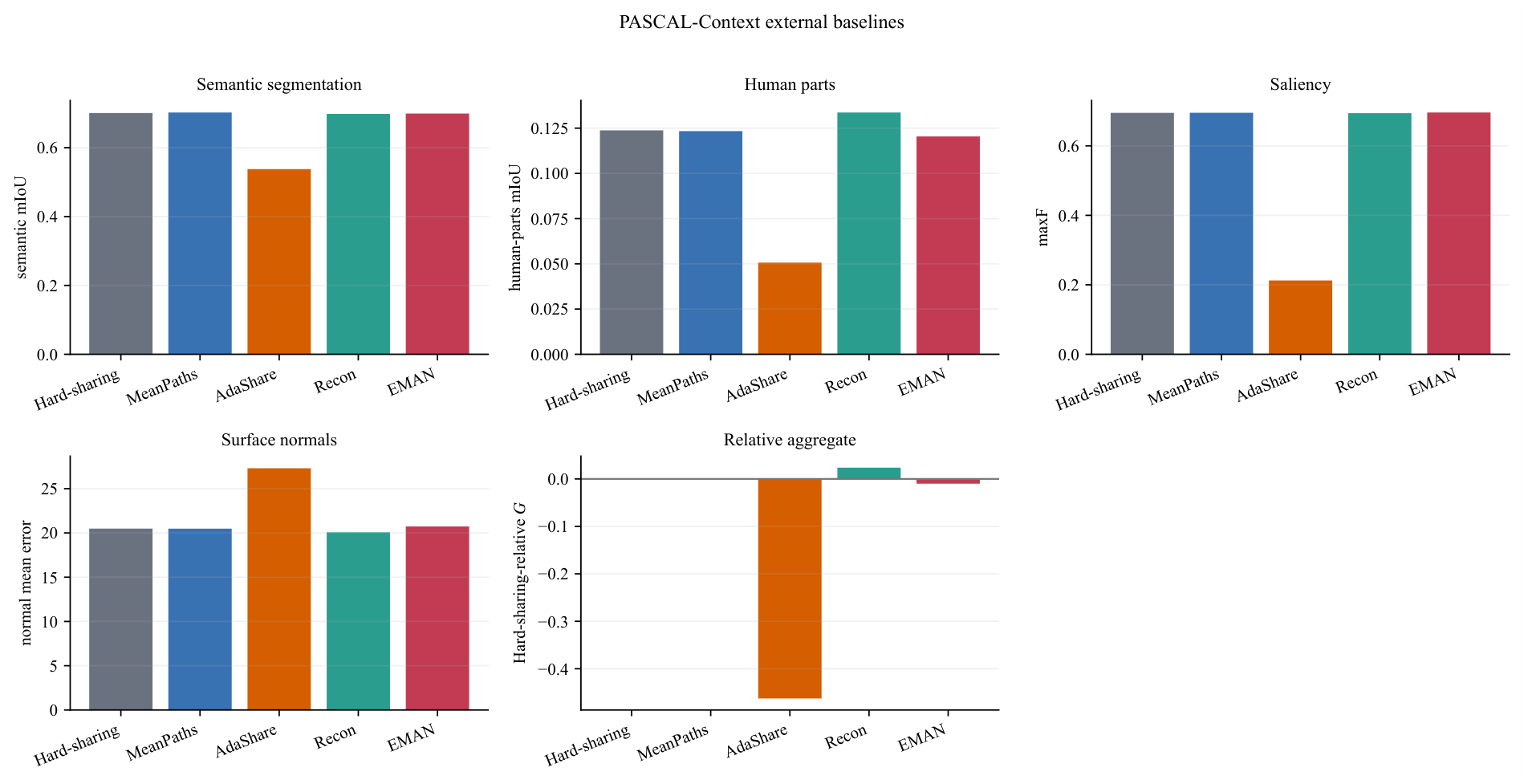}
\caption{PASCAL-Context external comparisons for Seed~1 and Seed~2. The
panels retain the native task metrics and expose the mixed task-level pattern
behind the aggregate.}
\label{fig:supp_pascal_external_12}
\end{figure*}

\begin{figure*}[!t]
\centering
\includegraphics[width=0.86\textwidth]{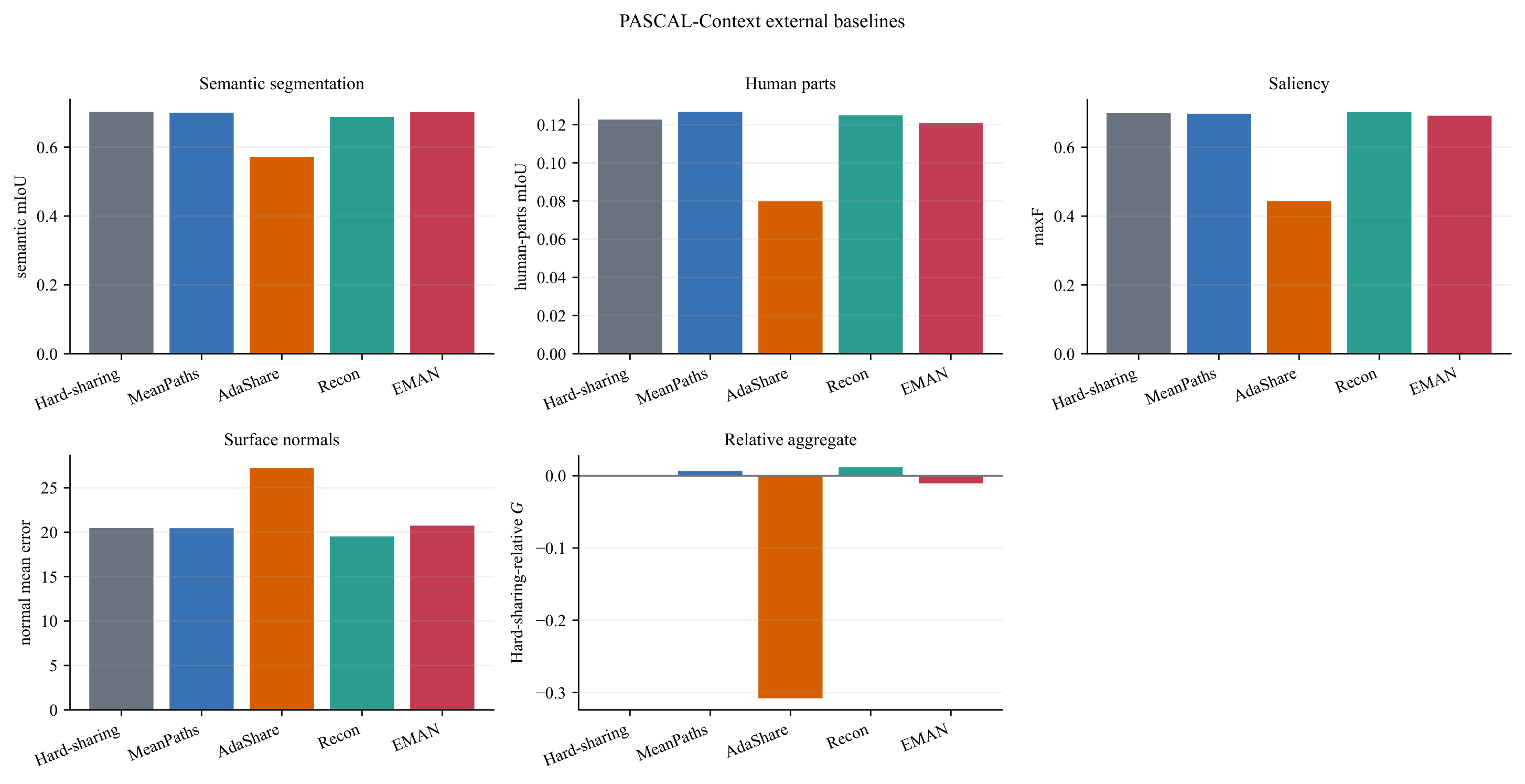}
\caption{PASCAL-Context external comparison for Seed~3. EMAN remains close to
the shared reference on semantic segmentation, while Recon has
the strongest normal result.}
\label{fig:supp_pascal_external_3}
\end{figure*}

Figures~\ref{fig:supp_pascal_external_12} and
\ref{fig:supp_pascal_external_3} make the aggregate decomposition explicit.
The negative $G$ is mainly associated with human-part segmentation and normal
prediction. It coexists with competitive semantic, saliency, and joint-loss
values. Because all four tasks are present from epoch~1, this protocol offers
no pre-demand interval in which delayed materialization can save path
training. It is therefore a demanding test of final multi-task behavior, but
not a direct test of EMAN's timing advantage.

The static PASCAL-Context setting presents all four tasks from the beginning,
and EMAN reaches the dual-path state early. This leaves little opportunity
for delayed path-training savings. We therefore use the static protocol mainly
as a cross-dataset performance boundary. The task-arrival protocols provide
the clearer evidence for release timing under changing demand.

\subsection{PASCAL-Context Task Arrival at Epoch 21}
\label{sec:supp_pascal_arrival21}

This protocol activates the later tasks at epoch 21. EMAN releases at epoch 21
in every seed. MeanPaths is dual from epoch 1. Table
\ref{tab:supp_pascal_arrival21} reports the complete result. EMAN has the
highest $G$ in Seed~3. It remains close to MeanPaths in Seed~1 and Seed~2.

\begin{table*}[!t]
\centering
\setlength{\abovecaptionskip}{3pt}

\small
\setlength{\tabcolsep}{3.4pt}
\begin{tabular}{llcccccc}
\toprule
Seed & Method & $G$ & Semantic & Parts & Saliency & Normal Mean & Path ep. \\
\midrule
Seed 1 & Hard-sharing & 0.0000 & 0.5713 & 0.0616 & 0.4947 & 27.0256 & -- \\
 & MeanPaths & \textbf{0.0009} & 0.5704 & 0.0614 & 0.4956 & 26.8403 & 1 \\
 & EMAN & 0.0008 & 0.5727 & 0.0615 & 0.4996 & 27.2141 & \textbf{21} \\
\midrule
Seed 2 & Hard-sharing & 0.0000 & 0.5699 & 0.0619 & 0.4944 & 27.1443 & -- \\
 & MeanPaths & \textbf{0.0046} & 0.5709 & 0.0624 & 0.4977 & 27.0951 & 1 \\
 & EMAN & 0.0033 & 0.5706 & 0.0622 & 0.4966 & 27.0466 & \textbf{21} \\
\midrule
Seed 3 & Hard-sharing & 0.0000 & 0.5737 & 0.0615 & 0.5010 & 26.9492 & -- \\
 & MeanPaths & 0.0027 & 0.5714 & 0.0617 & 0.5036 & 26.7428 & 1 \\
 & EMAN & \textbf{0.0090} & \textbf{0.5763} & \textbf{0.0624} & \textbf{0.5069} & 26.7782 & \textbf{21} \\
\bottomrule
\end{tabular}
\caption{PASCAL-Context task-arrival results.}
\label{tab:supp_pascal_arrival21}
\end{table*}

Table~\ref{tab:supp_pascal_arrival21} separates timing stability from
final task variation. EMAN releases at epoch~21 in every seed, while MeanPaths
is dual from epoch~1. The three-seed mean $G$ values are $0$ for Hard-sharing,
$0.0027\pm0.0018$ for MeanPaths, and $0.0043\pm0.0042$ for EMAN. Seed~1 and
Seed~2 are close to MeanPaths. Seed~3 gives the strongest aggregate and the
best semantic, parts, and saliency values among the listed methods.

Figure~\ref{fig:supp_pascal_arrival_timing} summarizes the same event timing in all
three runs. The measured MeanPaths times are 14.1, 15.8, and 16.4 minutes. The
EMAN times are 14.2, 15.7, and 16.2 minutes. These values show that delayed
path activation does not introduce a large wall-clock penalty in this
protocol. They are hardware-specific and are not used to claim a general
speedup.

\subsection{Earlier Arrival at Epoch 11}
\label{sec:supp_arrival11}

The earlier-arrival check uses Seed~1. Saliency and normal demand arrive at
epoch 11. The transition rule remains frozen. EMAN releases the path at epoch
11. Table~\ref{tab:supp_arrival11} reports all methods.

\begin{table*}[!t]
\centering
\setlength{\abovecaptionskip}{3pt}

\small
\begin{tabular}{lcccccc}
\toprule
Method & $G$ & Semantic & Parts & Saliency & Normal Mean & Wall min. \\
\midrule
Hard-sharing & 0.0000 & 0.5690 & 0.0616 & 0.4989 & 25.7307 & 16.7 \\
MeanPaths & \textbf{0.0054} & 0.5716 & 0.0618 & 0.5012 & \textbf{25.4987} & 15.4 \\
Scheduled split & 0.0020 & 0.5728 & 0.0617 & 0.5005 & 25.8105 & 15.8 \\
EMAN & 0.0037 & \textbf{0.5740} & \textbf{0.0619} & \textbf{0.5018} & 25.8813 & 18.6 \\
\bottomrule
\end{tabular}
\caption{PASCAL-Context earlier-arrival sensitivity. All values use Seed~1.}
\label{tab:supp_arrival11}
\end{table*}

\begin{figure*}[!t]
\centering
\includegraphics[width=0.47\textwidth]{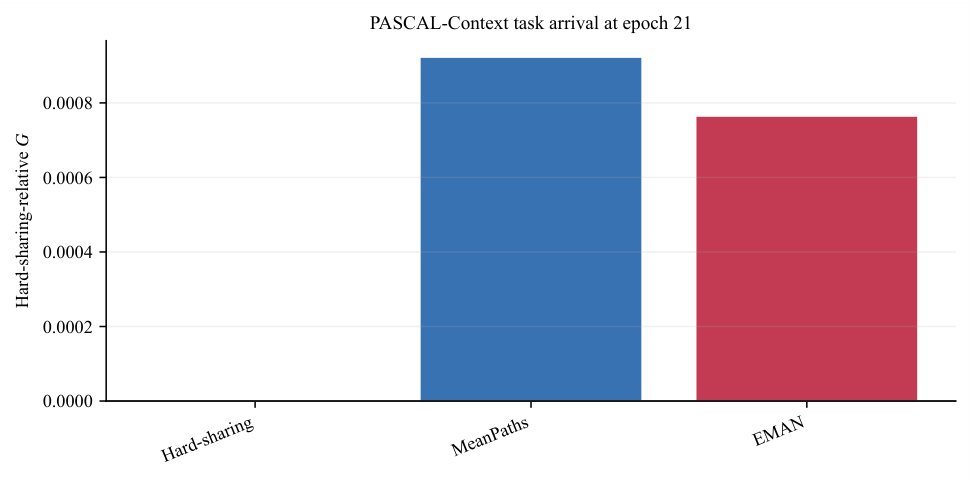}
\hfill
\includegraphics[width=0.47\textwidth]{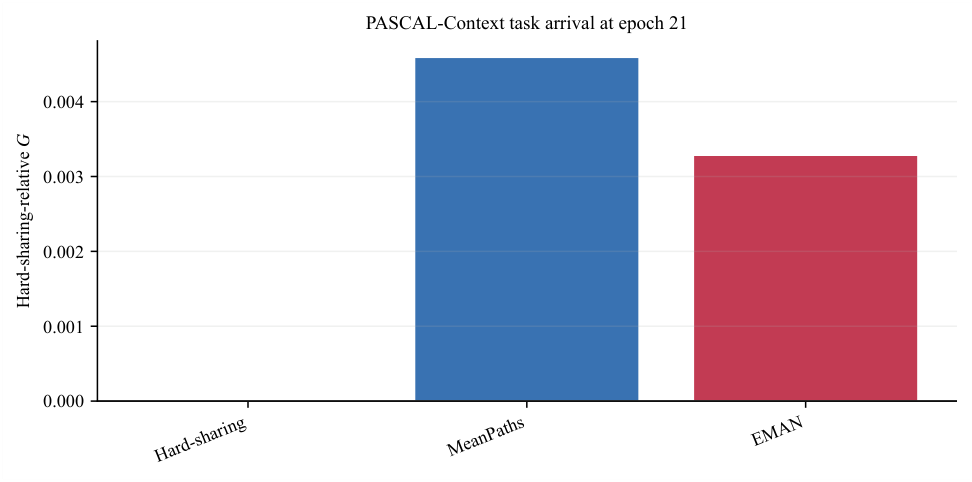}
\par\smallskip
\includegraphics[width=0.47\textwidth]{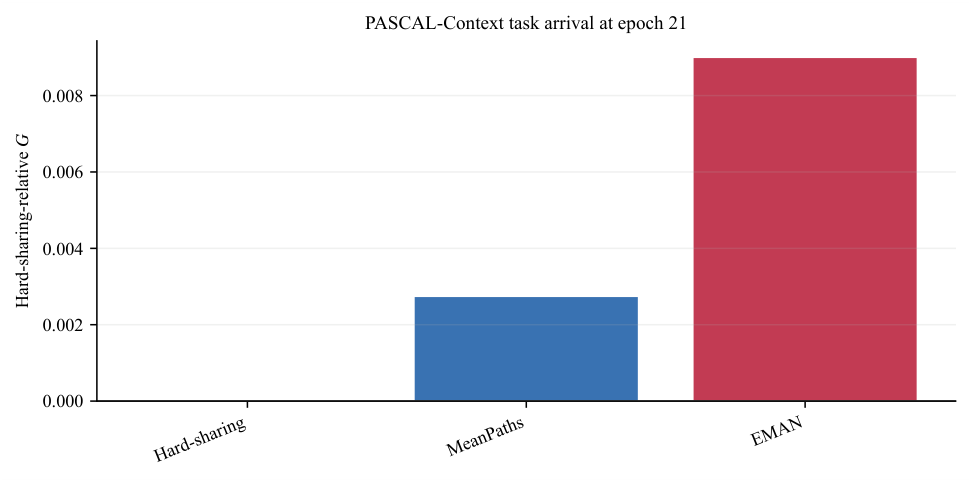}
\hfill
\includegraphics[width=0.47\textwidth]{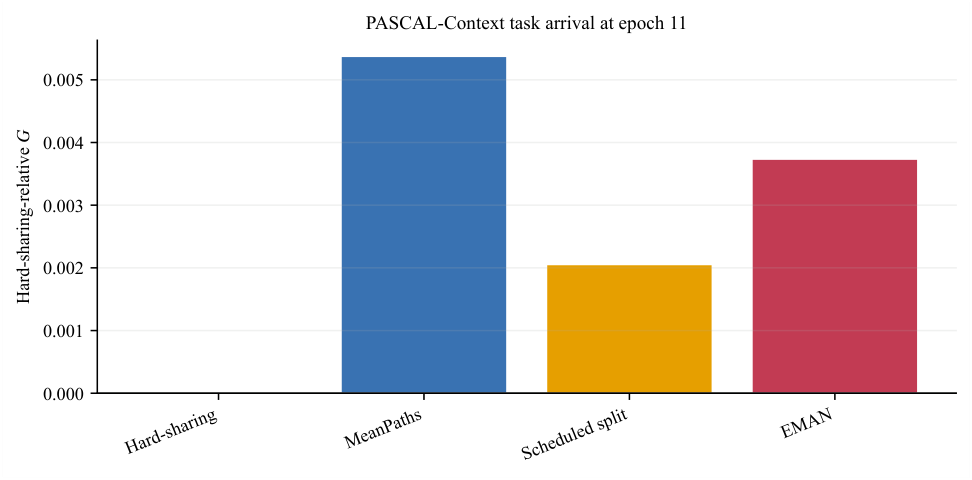}
\caption{PASCAL-Context timing sensitivity. The first three panels show the registered arrival-21 runs; the final panel shows the earlier-arrival check. Tables~\ref{tab:supp_pascal_arrival21} and~\ref{tab:supp_arrival11} distinguish task-arrival epochs from path-release events and report the native metrics.}
\label{fig:supp_pascal_arrival_timing}
\end{figure*}

Table~\ref{tab:supp_arrival11} reports the Seed~1 comparison. EMAN obtains
$G=0.0037$. Scheduled split obtains $0.0020$. MeanPaths obtains $0.0054$.
EMAN has the best semantic, human-part, and saliency scores. MeanPaths has the
strongest normal error and aggregate. The experiment therefore supports timing
adaptation, not superiority to always-dual training.

Figure~\ref{fig:supp_pascal_arrival_timing} is the key timing-sensitivity diagnostic. The later tasks arrive at epoch~11, ten epochs earlier than in the primary protocol, and the path-release event occurs at epoch~11 under the unchanged rule. This correspondence does not imply that the certificate observes or identifies the task-arrival label. EMAN and Scheduled split use an active dual-path ratio of $0.75$
and an estimated path-training ratio of $0.875$ relative to MeanPaths under
this shorter horizon. The movement of the event, not the small performance
difference, is the central result.

\begin{table*}[!t]
\centering

\small
\setlength{\tabcolsep}{3.0pt}
\renewcommand{\arraystretch}{0.94}

\begin{tabular*}{\textwidth}
{@{\extracolsep{\fill}}llcccccc@{}}
\toprule
Seed & Method & $G$ & Semantic & Parts & Saliency
& Normal Mean & Phase/Path ep. \\
\midrule
Seed 1 & Hard-sharing
& 0.0000 & 0.5713 & 0.0615 & 0.5000 & 25.6474 & --/-- \\
& MeanPaths
& 0.0003 & 0.5711 & 0.0615 & 0.4991 & 25.5438 & --/1 \\
& Scheduled split
& -0.0072 & 0.5711 & 0.0615 & 0.5038 & 26.5763 & \textbf{1/11} \\
& Conflict Trigger
& \textbf{0.0073} & 0.5752 & 0.0622 & 0.5034 & 25.5520 & --/-- \\
& EMAN
& -0.0072 & 0.5708 & 0.0614 & 0.5007 & 26.3744 & \textbf{1/11} \\
\midrule
Seed 2 & Hard-sharing
& 0.0000 & 0.5719 & 0.0623 & 0.5000 & 25.6783 & --/-- \\
& MeanPaths
& -0.0013 & 0.5707 & 0.0624 & 0.4965 & 25.5967 & --/1 \\
& Scheduled split
& -0.0069 & 0.5701 & 0.0621 & 0.4978 & 26.0921 & \textbf{1/11} \\
& Conflict Trigger
& \textbf{0.0087} & 0.5774 & 0.0622 & 0.5024 & 25.1159 & --/-- \\
& EMAN
& -0.0067 & 0.5705 & 0.0622 & 0.4988 & 26.1744 & \textbf{1/11} \\
\midrule
Seed 3 & Hard-sharing
& 0.0000 & 0.5739 & 0.0618 & 0.5023 & 25.7341 & --/-- \\
& MeanPaths
& 0.0036 & 0.5735 & 0.0616 & 0.5057 & 25.4228 & --/1 \\
& Scheduled split
& -0.0003 & 0.5748 & 0.0620 & 0.5046 & 26.0294 & \textbf{1/11} \\
& Conflict Trigger
& \textbf{0.0043} & 0.5720 & 0.0622 & 0.5062 & 25.5642 & --/-- \\
& EMAN
& 0.0022 & 0.5745 & 0.0622 & 0.5045 & 25.8008 & \textbf{1/11} \\
\bottomrule
\end{tabular*}
\caption{PASCAL-Context ramp 11--15. A missing event is shown as ``--''.}
\label{tab:supp_ramp}
\end{table*}

\begin{figure*}[!t]
\centering
\includegraphics[width=0.72\textwidth]{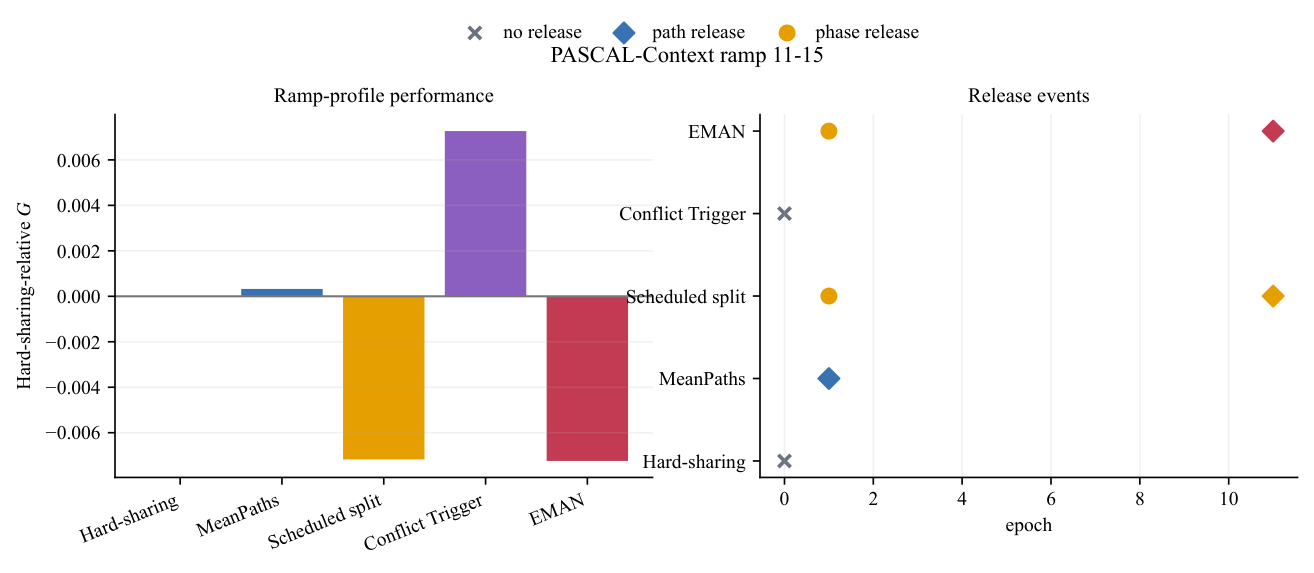}
\par\smallskip
\includegraphics[width=0.72\textwidth]{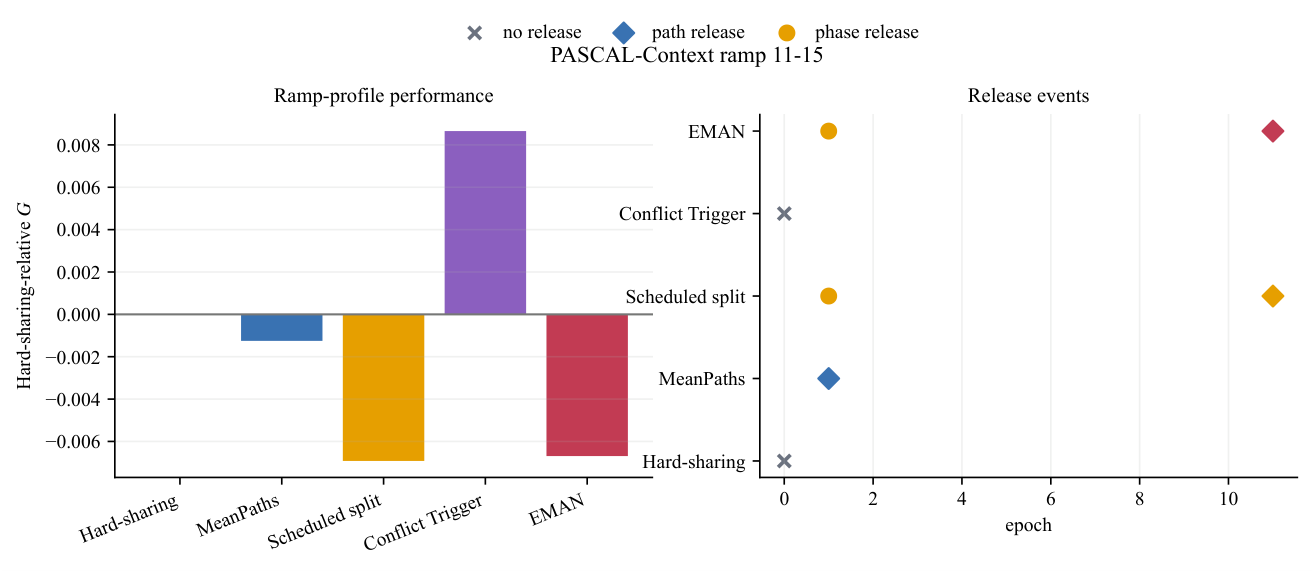}
\par\smallskip
\includegraphics[width=0.72\textwidth]{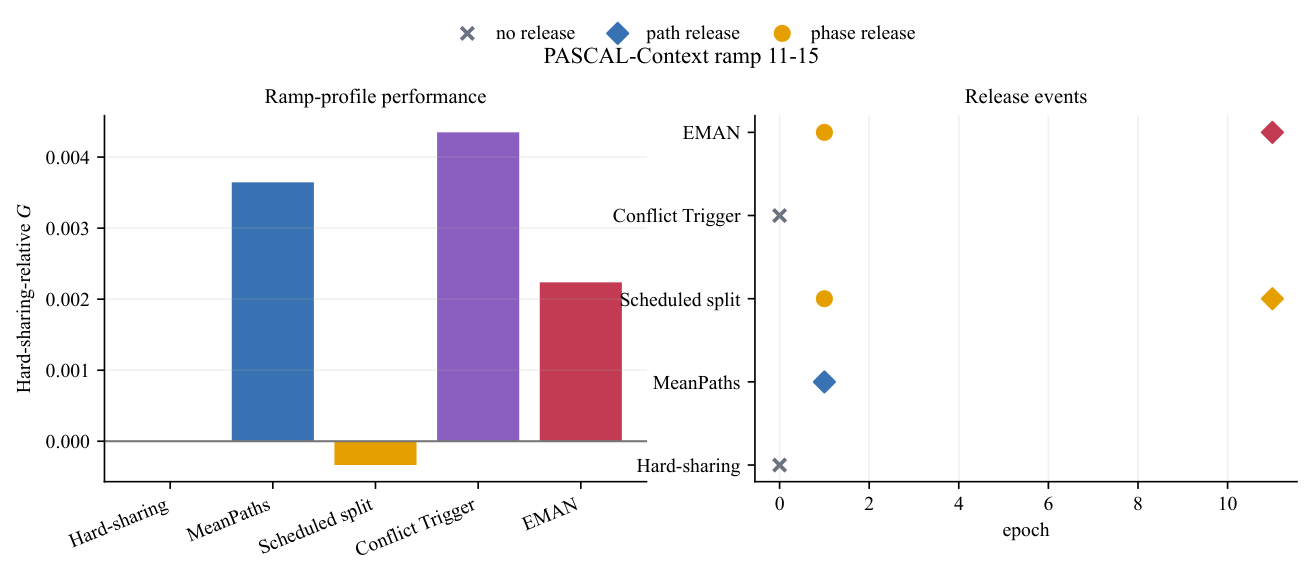}
\caption{Gradual-demand results for Seeds~1--3. The archived evidence does not support EMAN superiority over Conflict Trigger in this protocol.}
\label{fig:supp_ramp}
\end{figure*}

\subsection{Gradual Demand: Ramp from Epochs 11 to 15}
\label{sec:supp_ramp}

The ramp protocol gradually activates the later demand from epochs 11 to 15.
It tests a smoother demand profile than the discrete arrival protocols.
Table~\ref{tab:supp_ramp} retains every seed. EMAN and Scheduled split release
at epoch~11. Conflict Trigger does not release in these archived runs. The
mean $G$ values are $0.0009\pm0.0025$ for MeanPaths,
$-0.0048\pm0.0039$ for Scheduled split, $0.0068\pm0.0022$ for Conflict
Trigger, and $-0.0039\pm0.0053$ for EMAN.

The strongest aggregate is obtained by Conflict Trigger without a structural
release. This observation suggests that the archived ramp does not create a
clear benefit for additional path capacity. EMAN and the matched Scheduled
split have nearly identical means and the same phase/path epochs, so the
result does not isolate a disadvantage of the certified direction. It instead
identifies a scope condition: the current phase-first rule is most clearly
supported under discrete demand onset, while a smooth ramp can favor retaining
the shared state.

Figure~\ref{fig:supp_ramp} confirms the same pattern in every seed. Seed~3 is
the most favorable EMAN run and remains positive, while Seeds~1 and~2 are
negative for both EMAN and Scheduled split. Retaining all three runs prevents
the discrete-arrival result from being generalized to every temporal demand
profile.

\subsection{Replay, Provenance, and Reproducibility Boundaries}
\label{sec:supp_repro}

The accompanying code package provides checksums, per-seed result tables,
release summaries, figure source data, plotting scripts, and replay summary
records. Large natural-image checkpoints are not redistributed.

Archived checkpoint audits reproduced the recorded metrics for the retained
task-arrival run families. The accompanying code package includes the
corresponding replay summaries, but it does not redistribute the large
natural-image checkpoints. Online and offline evaluations agree within the
recorded tolerances. Exact cross-platform bitwise identity is not claimed.

\begin{table}[!t]
\centering

\small
\begin{tabular}{lc}
\toprule
Audited run family & Replay status \\
\midrule
PASCAL arrival-21 EMAN, three seeds & \textbf{PASS} \\
PASCAL matched scheduled split, three seeds & \textbf{PASS} \\
PASCAL arrival-11 sensitivity & \textbf{PASS} \\
\bottomrule
\end{tabular}
\caption{Replay status for the retained task-arrival audits.
PASS denotes agreement within the recorded tolerance.}
\label{tab:supp_replay}
\end{table}

\begin{table*}[!t]
\centering

\small
\begin{tabular}{p{0.16\textwidth}p{0.53\textwidth}p{0.21\textwidth}}
\toprule
Status & Claim & Primary evidence \\
\midrule
\textbf{Supported} & Exact one-path computation is preserved when controlled capacity
is sufficient. & All \textsc{High} seeds; no phase or path release. \\
\textbf{Supported} & Phase freedom alone does not recover the rank-six bottleneck;
independent path capacity does. & PhaseOnly and EMAN in all \textsc{Low}
seeds. \\
\textbf{Supported} & Released paths carry complementary latent information. & Factor
recovery, canonical correlations, principal angles, and permutation control. \\
\textbf{Supported} & The frozen timing rule follows discrete task arrivals at epochs
21 and 11. & NYUv2 arrival, PASCAL arrival, and earlier-arrival sensitivity. \\
\textbf{Qualified support} & Delayed growth reduces estimated path
training evaluations in the registered arrival protocols. & Path-FLOP ratios;
wall-clock remains implementation-dependent. \\
\textbf{Scope condition} & The rank-four controlled bottleneck is not certified. & The
current persistence rule is conservative near the capacity boundary. \\
\textbf{Scope condition} & Static PASCAL-Context has mixed task results and no delayed
compute interval. & Cross-dataset standard protocol. \\
\textbf{Scope condition} & The ramp 11--15 protocol favors a non-releasing conflict
baseline. & Smooth demand onset does not extend the discrete-arrival claim. \\
\bottomrule
\end{tabular}
\caption{Consolidated claim matrix. ``Supported'' denotes direct evidence in
the registered protocols. ``Scope condition'' denotes an observed boundary
that narrows interpretation without invalidating the supported claims.}
\label{tab:supp_claim_matrix}
\end{table*}

Table~\ref{tab:supp_replay} reports the run families represented in the replay
records included with the code package. The audits compare stored metrics and
event records within the recorded tolerances. The manifest and checksum layer
records the packaged assets. The replay summaries record the archived audit
outcomes. The per-seed tables record result inclusion.

Estimated path-training FLOPs count path evaluations. They do not represent a
complete accelerator model. Total training FLOPs include the shared
non-path computation. Wall-clock remains hardware-specific. Inference remains
dual-path after materialization.

\subsection{Consolidated Evidence Boundary}
\label{sec:supp_claim_boundary}

The complete evidence package supports a layered interpretation. The strongest
conclusions concern structural selection under explicit capacity control and
timing adaptation under the registered task-arrival protocols. Natural-image
results describe a trade-off between performance and computation. They remain
task- and seed-dependent.

Table~\ref{tab:supp_claim_matrix} consolidates the positive evidence. It also
records the scope of each protocol. The evidence does not establish task-wise
expert specialization, sparse
inference, a universal wall-clock gain, uniform superiority on every dataset
or seed, a generally optimal release direction, or statistical significance
from three seeds. These statements are not required for the central result.
EMAN is evaluated as a mechanism for evidence-gated capacity growth. It
preserves exact sharing, exposes and certifies a local growth direction, and
materializes independent capacity only when the frozen rule accepts the
transition.

\subsubsection{Evidence hierarchy and interpretation}
The experiments form a hierarchy. The rank-controlled benchmark identifies
whether a capacity deficit exists because the target rank is known in advance.
The certificate traces then identify whether the frozen rule accepts the
transition. The representation diagnostics test whether the released paths
carry complementary information. Finally, the natural-image protocols evaluate
whether the structural decision yields a useful performance--computation
trade-off under realistic task mixtures. A result at one level is not used as
a substitute for evidence at another level.

The strongest mechanistic conclusion comes from the controlled \textsc{Low}
regime. Hard-sharing and PhaseOnly remain near the same rank-limited solution,
while EMAN materializes a second path and recovers the missing factors. The
\textsc{High} regime provides the complementary non-growth result. It shows
that the same mechanism can retain exact sharing when one path is sufficient.
Together, these regimes isolate conditional capacity allocation from generic
multi-path overparameterization.

The natural-image results answer a different question. Standard NYUv2 evaluates final task performance under a frozen static matched adapter. It is not used to assess OEC-triggered materialization. Capacity-Stress and task-arrival protocols provide the corresponding structural-decision evidence. Static PASCAL-Context and the gradual ramp serve as scope checks.
This separation prevents favorable performance from being treated as proof of
a certificate event.

\subsubsection{Reading non-release and weak-result cases}
A non-release run is an output of the structural rule rather than a missing
experiment. Seed~2 under Capacity-Stress does not pass the phase certificate,
so no path-force probe is executed and no second path exists. The empty probe
panels record this control flow. They do not encode zero-valued force or a
failed plotting script. The result demonstrates that the implementation can
retain a shared state when population evidence is not persistent.

The rank-four controlled regime is a weak bottleneck. Its false negative shows
that the frozen persistence rule prioritizes conservative growth near the
capacity boundary. This observation narrows sensitivity claims, but it does
not contradict the verified rank-six recovery. Likewise, static
PASCAL-Context contains several competitive native task values but no delayed
compute interval because all tasks are active from the first epoch. Its mixed
aggregate is therefore most informative as a cross-dataset boundary rather
than as a test of delayed materialization.

The ramp protocol further separates a smooth demand transition from a discrete
task arrival. Conflict Trigger obtains the strongest aggregate without
releasing a path. EMAN and the matched Scheduled split use the same release
epochs and have nearly identical means. The archived result therefore does not
isolate a failure of the certified direction. It indicates that the current
rule has its clearest evidence under discrete demand onset and that smooth
demand may not require additional capacity in the same way.

\subsubsection{Metric-by-metric reading guide}
Phase curvature is an upstream signal. A negative point estimate alone is not
a release event. The uncertainty bound, finite-difference check, safety checks,
and persistence rule determine whether phase is released. Path-force strength
is downstream. It is evaluated only after phase release and the adaptation
window. A missing path-force trace therefore means that the run did not enter
that state. It should not be read as a measured force of zero.

Cross-half and shard cosines describe the stability of the separation axis.
The absolute value removes its arbitrary sign. High consistency does not by
itself imply that the finite perturbation reduces the deployed objective. The
frozen and dynamic descent tests answer that separate question. Dynamic safety
then checks whether calibration, exchange symmetry, and numerical values remain
inside the registered tolerances. Persistence requires the complete conjunction
to survive across probe epochs.

Path distance is a post-release diagnostic. It measures how far the two
parameter blocks move apart after the certified split. A large distance is not
a certificate, and a small distance is not a failure. The value depends on the
release epoch and on the remaining optimization horizon. Representation
complementarity is tested separately through latent-factor recovery, canonical
correlation, principal angles, and permutation controls.

The aggregate $G$ summarizes native task directions relative to Hard-sharing.
It does not replace the task columns. A positive value can coexist with a weak
individual task, and a negative value can coexist with one or more competitive
native metrics. For this reason, every external comparison retains the native
metrics and the per-seed aggregate. System quantities are also kept separate.
Parameter count, peak memory, estimated path evaluations, total FLOPs, and
wall-clock time answer different questions and should not be interchanged.

\FloatBarrier

%% file: references.bib
@article{caruana1997multitask,
  author  = {Caruana, Rich},
  title   = {Multitask Learning},
  journal = {Machine Learning},
  volume  = {28},
  number  = {1},
  pages   = {41--75},
  year    = {1997},
  doi     = {10.1023/A:1007379606734}
}

@inproceedings{misra2016crossstitch,
  author    = {Misra, Ishan and Shrivastava, Abhinav and Gupta, Abhinav and Hebert, Martial},
  title     = {Cross-Stitch Networks for Multi-Task Learning},
  booktitle = {Proceedings of the IEEE Conference on Computer Vision and Pattern Recognition},
  pages     = {3994--4003},
  year      = {2016}
}

@inproceedings{ruder2019sluice,
  author    = {Ruder, Sebastian and Bingel, Joachim and Augenstein, Isabelle and S{\o}gaard, Anders},
  title     = {Latent Multi-Task Architecture Learning},
  booktitle = {Proceedings of the AAAI Conference on Artificial Intelligence},
  volume    = {33},
  pages     = {4822--4829},
  year      = {2019},
  doi       = {10.1609/aaai.v33i01.33014822}
}

@inproceedings{liu2019mtan,
  author    = {Liu, Shikun and Johns, Edward and Davison, Andrew J.},
  title     = {End-to-End Multi-Task Learning with Attention},
  booktitle = {Proceedings of the IEEE/CVF Conference on Computer Vision and Pattern Recognition},
  pages     = {1871--1880},
  year      = {2019}
}

@inproceedings{vandenhende2020mtinet,
  author    = {Vandenhende, Simon and Georgoulis, Stamatios and Van Gool, Luc},
  title     = {{MTI-Net}: Multi-Scale Task Interaction Networks for Multi-Task Learning},
  booktitle = {Computer Vision -- ECCV 2020},
  pages     = {527--543},
  publisher = {Springer},
  year      = {2020},
  doi       = {10.1007/978-3-030-58548-8_31},
  url       = {https://doi.org/10.1007/978-3-030-58548-8_31}
}

@inproceedings{xu2018padnet,
  author    = {Xu, Dan and Ouyang, Wanli and Wang, Xiaogang and Sebe, Nicu},
  title     = {{PAD-Net}: Multi-Tasks Guided Prediction-and-Distillation Network for Simultaneous Depth Estimation and Scene Parsing},
  booktitle = {Proceedings of the IEEE Conference on Computer Vision and Pattern Recognition},
  pages     = {675--684},
  year      = {2018}
}

@inproceedings{sun2020adashare,
  author    = {Sun, Ximeng and Panda, Rameswar and Feris, Rogerio and Saenko, Kate},
  title     = {{AdaShare}: Learning What to Share for Efficient Deep Multi-Task Learning},
  booktitle = {Advances in Neural Information Processing Systems},
  volume    = {33},
  pages     = {2231--2240},
  year      = {2020}
}

@inproceedings{chen2018gradnorm,
  author    = {Chen, Zhao and Badrinarayanan, Vijay and Lee, Chen-Yu and Rabinovich, Andrew},
  title     = {{GradNorm}: Gradient Normalization for Adaptive Loss Balancing in Deep Multitask Networks},
  booktitle = {Proceedings of the 35th International Conference on Machine Learning},
  series    = {Proceedings of Machine Learning Research},
  volume    = {80},
  pages     = {794--803},
  publisher = {PMLR},
  year      = {2018}
}

@inproceedings{yu2020pcgrad,
  author    = {Yu, Tianhe and Kumar, Saurabh and Gupta, Abhishek and Levine, Sergey and Hausman, Karol and Finn, Chelsea},
  title     = {Gradient Surgery for Multi-Task Learning},
  booktitle = {Advances in Neural Information Processing Systems},
  volume    = {33},
  pages     = {5824--5836},
  year      = {2020}
}

@inproceedings{liu2021cagrad,
  author    = {Liu, Bo and Liu, Xingchao and Jin, Xiaojie and Stone, Peter and Liu, Qiang},
  title     = {Conflict-Averse Gradient Descent for Multi-Task Learning},
  booktitle = {Advances in Neural Information Processing Systems},
  volume    = {34},
  pages     = {18878--18890},
  year      = {2021}
}

@inproceedings{navon2022nash,
  author    = {Navon, Aviv and Shamsian, Aviv and Achituve, Idan and Maron, Haggai and Kawaguchi, Kenji and Chechik, Gal and Fetaya, Ethan},
  title     = {Multi-Task Learning as a Bargaining Game},
  booktitle = {Proceedings of the 39th International Conference on Machine Learning},
  series    = {Proceedings of Machine Learning Research},
  volume    = {162},
  pages     = {16428--16446},
  publisher = {PMLR},
  year      = {2022}
}

@inproceedings{javaloy2022rotograd,
  author    = {Javaloy, Adri{\'a}n and Valera, Isabel},
  title     = {{RotoGrad}: Gradient Homogenization in Multitask Learning},
  booktitle = {International Conference on Learning Representations},
  year      = {2022}
}

@inproceedings{shi2023recon,
  author    = {Shi, Guangyuan and Li, Qimai and Zhang, Wenlong and Chen, Jiaxin and Wu, Xiao-Ming},
  title     = {{Recon}: Reducing Conflicting Gradients from the Root for Multi-Task Learning},
  booktitle = {International Conference on Learning Representations},
  year      = {2023}
}

@inproceedings{shazeer2017outrageously,
  author    = {Shazeer, Noam and Mirhoseini, Azalia and Maziarz, Krzysztof and Davis, Andy and Le, Quoc V. and Hinton, Geoffrey E. and Dean, Jeff},
  title     = {Outrageously Large Neural Networks: The Sparsely-Gated Mixture-of-Experts Layer},
  booktitle = {International Conference on Learning Representations},
  year      = {2017}
}

@inproceedings{ma2018mmoe,
  author    = {Ma, Jiaqi and Zhao, Zhe and Yi, Xinyang and Chen, Jilin and Hong, Lichan and Chi, Ed H.},
  title     = {Modeling Task Relationships in Multi-Task Learning with Multi-Gate Mixture-of-Experts},
  booktitle = {Proceedings of the 24th ACM SIGKDD International Conference on Knowledge Discovery and Data Mining},
  pages     = {1930--1939},
  publisher = {ACM},
  year      = {2018},
  doi       = {10.1145/3219819.3220007}
}

@inproceedings{lepikhin2021gshard,
  author    = {Lepikhin, Dmitry and Lee, HyoukJoong and Xu, Yuanzhong and Chen, Dehao and Firat, Orhan and Huang, Yanping and Krikun, Maxim and Shazeer, Noam and Chen, Zhifeng},
  title     = {{GShard}: Scaling Giant Models with Conditional Computation and Automatic Sharding},
  booktitle = {International Conference on Learning Representations},
  year      = {2021}
}

@article{fedus2022switch,
  author  = {Fedus, William and Zoph, Barret and Shazeer, Noam},
  title   = {Switch Transformers: Scaling to Trillion Parameter Models with Simple and Efficient Sparsity},
  journal = {Journal of Machine Learning Research},
  volume  = {23},
  number  = {120},
  pages   = {1--39},
  year    = {2022}
}

@inproceedings{csordas2021modular,
  author    = {Csord{\'a}s, R{\'o}bert and van Steenkiste, Sjoerd and Schmidhuber, J{\"u}rgen},
  title     = {Are Neural Nets Modular? Inspecting Functional Modularity through Differentiable Weight Masks},
  booktitle = {International Conference on Learning Representations},
  year      = {2021}
}

@inproceedings{mittal2022modular,
  author    = {Mittal, Sarthak and Bengio, Yoshua and Lajoie, Guillaume},
  title     = {Is a Modular Architecture Enough?},
  booktitle = {Advances in Neural Information Processing Systems},
  volume    = {35},
  pages     = {28747--28760},
  year      = {2022}
}

@inproceedings{jarvis2023specialization,
  author    = {Jarvis, Devon and Klein, Richard and Rosman, Benjamin and Saxe, Andrew M.},
  title     = {On the Specialization of Neural Modules},
  booktitle = {International Conference on Learning Representations},
  year      = {2023}
}

@inproceedings{zhang2023emergent,
  author    = {Zhang, Zhengyan and Zeng, Zhiyuan and Lin, Yankai and Xiao, Chaojun and Wang, Xiaozhi and Han, Xu and Liu, Zhiyuan and Xie, Ruobing and Sun, Maosong and Zhou, Jie},
  title     = {Emergent Modularity in Pre-Trained Transformers},
  booktitle = {Findings of the Association for Computational Linguistics: ACL 2023},
  pages     = {4066--4083},
  publisher = {Association for Computational Linguistics},
  year      = {2023},
  doi       = {10.18653/v1/2023.findings-acl.250}
}

@inproceedings{schug2024discovering,
  author    = {Schug, Simon and Kobayashi, Seijin and Akram, Yassir and Wolczyk, Maciej and Proca, Alexandra Maria and von Oswald, Johannes and Pascanu, Razvan and Sacramento, Joao and Steger, Angelika},
  title     = {Discovering Modular Solutions that Generalize Compositionally},
  booktitle = {International Conference on Learning Representations},
  year      = {2024}
}

@misc{rusu2016progressive,
  title         = {Progressive Neural Networks},
  author        = {Rusu, Andrei A. and Rabinowitz, Neil C. and Desjardins, Guillaume and Soyer, Hubert and Kirkpatrick, James and Kavukcuoglu, Koray and Pascanu, Razvan and Hadsell, Raia},
  year          = {2016},
  eprint        = {1606.04671},
  archivePrefix = {arXiv}
}

@inproceedings{yoon2018den,
  author    = {Yoon, Jaehong and Yang, Eunho and Lee, Jeongtae and Hwang, Sung Ju},
  title     = {Lifelong Learning with Dynamically Expandable Networks},
  booktitle = {International Conference on Learning Representations},
  year      = {2018}
}

@misc{fernando2017pathnet,
  title         = {{PathNet}: Evolution Channels Gradient Descent in Super Neural Networks},
  author        = {Fernando, Chrisantha and Banarse, Dylan and Blundell, Charles and Zwols, Yori and Ha, David and Rusu, Andrei A. and Pritzel, Alexander and Wierstra, Daan},
  year          = {2017},
  eprint        = {1701.08734},
  archivePrefix = {arXiv}
}

@inproceedings{chen2016net2net,
  author    = {Chen, Tianqi and Goodfellow, Ian and Shlens, Jonathon},
  title     = {{Net2Net}: Accelerating Learning via Knowledge Transfer},
  booktitle = {International Conference on Learning Representations},
  year      = {2016}
}

@book{golubitsky1988singularities,
  author    = {Golubitsky, Martin and Stewart, Ian and Schaeffer, David G.},
  title     = {Singularities and Groups in Bifurcation Theory, Volume II},
  publisher = {Springer},
  address   = {New York},
  year      = {1988},
  doi       = {10.1007/978-1-4612-4574-2}
}

@book{kuznetsov2004elements,
  author    = {Kuznetsov, Yuri A.},
  title     = {Elements of Applied Bifurcation Theory},
  edition   = {3rd},
  publisher = {Springer},
  address   = {New York},
  year      = {2004},
  doi       = {10.1007/978-1-4757-3978-7}
}

@article{pearlmutter1994fast,
  author  = {Pearlmutter, Barak A.},
  title   = {Fast Exact Multiplication by the Hessian},
  journal = {Neural Computation},
  volume  = {6},
  number  = {1},
  pages   = {147--160},
  year    = {1994},
  doi     = {10.1162/neco.1994.6.1.147}
}

@inproceedings{silberman2012indoor,
  author    = {Silberman, Nathan and Hoiem, Derek and Kohli, Pushmeet and Fergus, Rob},
  title     = {Indoor Segmentation and Support Inference from {RGBD} Images},
  booktitle = {Computer Vision -- ECCV 2012},
  pages     = {746--760},
  publisher = {Springer},
  year      = {2012},
  doi       = {10.1007/978-3-642-33715-4_54}
}

@inproceedings{mottaghi2014context,
  author    = {Mottaghi, Roozbeh and Chen, Xianjie and Liu, Xiaobai and Cho, Nam-Gyu and Lee, Seong-Whan and Fidler, Sanja and Urtasun, Raquel and Yuille, Alan},
  title     = {The Role of Context for Object Detection and Semantic Segmentation in the Wild},
  booktitle = {Proceedings of the IEEE Conference on Computer Vision and Pattern Recognition},
  pages     = {891--898},
  year      = {2014}
}

@inproceedings{chai2025spurious,
  author    = {Junyi Chai and Shenyu Lu and Xiaoqian Wang},
  title     = {Identifying and Mitigating Spurious Correlation in
               Multi-Task Learning},
  booktitle = {Proceedings of the IEEE/CVF Conference on Computer Vision
               and Pattern Recognition},
  pages     = {25698--25707},
  year      = {2025}
}

@inproceedings{qin2025consistent,
  author    = {Xiaohan Qin and Xiaoxing Wang and Junchi Yan},
  title     = {Towards Consistent Multi-Task Learning: Unlocking the
               Potential of Task-Specific Parameters},
  booktitle = {Proceedings of the IEEE/CVF Conference on Computer Vision
               and Pattern Recognition},
  pages     = {10067--10076},
  year      = {2025}
}

@inproceedings{baek2025tadformer,
  author    = {Seungmin Baek and Soyul Lee and Hayeon Jo and Hyesong Choi
               and Dongbo Min},
  title     = {{TADFormer}: Task-Adaptive Dynamic TransFormer for Efficient
               Multi-Task Learning},
  booktitle = {Proceedings of the IEEE/CVF Conference on Computer Vision
               and Pattern Recognition},
  pages     = {14858--14868},
  year      = {2025}
}

@inproceedings{mantri2025ditask,
  author    = {Krishna Sri Ipsit Mantri and Carola-Bibiane Sch{\"o}nlieb
               and Bruno Ribeiro and Chaim Baskin and Moshe Eliasof},
  title     = {{DiTASK}: Multi-Task Fine-Tuning with Diffeomorphic
               Transformations},
  booktitle = {Proceedings of the IEEE/CVF Conference on Computer Vision
               and Pattern Recognition},
  pages     = {25218--25229},
  year      = {2025}
}

@inproceedings{wang2026mtl3d,
  author    = {Xiaoye Wang and Chen Tang and Xiangyu Yue and Wei-Hong Li},
  title     = {{3D}-Aware Multi-Task Learning with Cross-View Correlations
               for Dense Scene Understanding},
  booktitle = {Proceedings of the IEEE/CVF Conference on Computer Vision
               and Pattern Recognition},
  pages     = {5793--5803},
  year      = {2026}
}

@inproceedings{park2026mass,
  author    = {Sumin Park and Noseong Park},
  title     = {How Many Experts Are Enough? Towards Optimal Semantic
               Specialization for Mixture-of-Experts},
  booktitle = {Proceedings of the AAAI Conference on Artificial Intelligence},
  volume    = {40},
  pages     = {24792--24800},
  year      = {2026},
  doi       = {10.1609/aaai.v40i29.39665}
}

@inproceedings{zhang2026growondemand,
  author    = {Ying Zhang and Xingyue Guo and Yu Zhao and Xuhui Sui and
               Baohang Zhou and Xinying Qian and Xiaojie Yuan},
  title     = {Grow-on-Demand: Sparse and Adaptive Expert Expansion for
               Continual Instruction Tuning},
  booktitle = {Proceedings of the AAAI Conference on Artificial Intelligence},
  volume    = {40},
  pages     = {28474--28482},
  year      = {2026},
  doi       = {10.1609/aaai.v40i34.40077}
}
